\def\eqref#1{(\ref{#1})}
\def\1{\bm{1}}
\DeclareMathAlphabet{\mathsfit}{\encodingdefault}{\sfdefault}{m}{sl}
\SetMathAlphabet{\mathsfit}{bold}{\encodingdefault}{\sfdefault}{bx}{n}
\DeclareMathOperator*{\argmax}{arg\,max}
 \algnewcommand{\algorithmicforeach}{\textbf{for each}}
\DeclareMathOperator*{\minimize}{\text{minimize}}
\DeclareMathOperator*{\maximize}{\text{maximize}}
\DeclareMathAlphabet\mathbfcal{OMS}{cmsy}{b}{n}
\newcommand{\Def}[0]{\mathrel{\mathop:}=}
\newcommand{\Deff}[0]{=\mathrel{\mathop:}}
 \newcommand{\SL}[1]{\textcolor{red}{SL: #1}}
 \newcommand{\revision}[1]{\textcolor{blue}{#1}}
\def\remark{\addtocounter{remark}{1}\def\@currentlabel{\theremark}%
\emph{Remark~\theremark}. } \makeatother
\newcounter{remark}
\def\b1{{\boldsymbol 1}}
\def\btheta{\boldsymbol{\theta}}
\def\bdelta{\boldsymbol{\delta}}
\def\bdelta{\boldsymbol{\delta}}
\def\bgo{\mathbf{g}_t}
\def\bg{\hat {\mathbf g}_t}
\def\bx{\mathbf x}
\def\bm{\mathbf m}
\def\by{\mathbf y}
\def\bu{\mathbf u}
\def\bv{\mathbf v}
\def\maximize{\mathop{\text{maximize}}}
\def\minimize{\mathop{\text{minimize}}}
\def\leref#1{Lemma~\ref{#1}}
\def\thref#1{Theorem~\ref{#1}}
\newtheorem{lemma}{Lemma}
\newtheorem{theorem}{Theorem}
\newtheorem{corollary}{Corollary}
\newtheorem{definition}{Definition}
\definecolor{Gray}{gray}{0.9}
\definecolor{Orange}{rgb}{1,0.5,0}
\newcommand*{\rom}[1]{\expandafter\@slowromancap\romannumeral #1@}
\newcommand{\mycomment}[1]{}
\newcommand{\layernum}{h}
\newcommand{\convacc}{\xi}
\newcommand{\layerscale}{\tau}
\newcommand{\modeldim}{d}
\newcommand{\increase}[1]{\textcolor{red}{{#1}}}
\newcommand{\decrease}[1]{\textcolor{blue}{{#1}}}
\newcommand{\high}[1]{\textcolor{purple}{{#1}}}
\title{Distributed Adversarial Training to Robustify Deep Neural Networks at Scale}
\author[1,*]{Gaoyuan Zhang}
\author[2,*]{Songtao Lu}
\author[3]{Yihua Zhang}
\author[4]{Xiangyi Chen}
\author[2]{Pin-Yu Chen}
\author[1]{Quanfu Fan}
\author[1]{Lee Martie}
\author[2]{Lior Horesh}
\author[4]{Mingyi Hong}
\author[1,3]{Sijia Liu}
\affil[1]{%
   MIT-IBM Watson AI Lab, IBM Research 
    % \\
    % Yorktown Heights, NY 10598
}
\affil[2]{%
     Thomas J. Watson Research Center, IBM Research
    % \\
    % East Lansing, MI 48824
}
\affil[3]{%
    Michigan State University
    % \\
    % East Lansing, MI 48824
}
\affil[4]{%
    University of Minnesota
    % \\ Minneapolis, MN 55455
  }
\affil[*]{%
    Equal Contribution
  }
\begin{document}
\maketitle

\begin{abstract}
{Current} deep neural networks (DNNs) are vulnerable to adversarial attacks, where adversarial perturbations to the inputs can change or manipulate  classification. 
 To defend against such attacks, an effective and popular approach, known as \textit{adversarial training (AT)}, has been shown to mitigate the {negative} impact of adversarial attacks by virtue of a min-max robust training method. 
 While effective, it remains unclear whether it can successfully be adapted to the distributed learning context. 
 The power of distributed optimization over multiple machines 
  enables us to scale up robust training over large models and  datasets. Spurred by that, 
 we propose
 \textit{distributed adversarial training ({DAT})},
 a \textit{large-batch} adversarial training framework implemented over multiple machines. We show that {DAT} is general, which supports training over labeled and unlabeled data,
multiple types of attack generation methods, and   gradient compression operations favored for distributed optimization.
 Theoretically, we provide, under standard conditions in the optimization theory, the convergence rate of {DAT} to the first-order stationary points in general non-convex settings. Empirically, we demonstrate that {DAT} either matches or outperforms state-of-the-art robust accuracies and achieves a graceful training
 speedup (e.g., on ResNet--50 under ImageNet). Codes are available at \url{https://github.com/dat-2022/dat}.
\end{abstract}

\section{Introduction}

The rapid increase of research in DNNs and their adoption in practice is, in part, owed to the significant breakthroughs made with DNNs in computer vision \citep{alom2018history}.  
Yet, with the apparent power of DNNs, there remains a serious weakness of robustness. That is, DNNs can easily be manipulated (by an adversary) to output drastically different classifications and can be done so in a controlled and directed way. 
This process is known as an adversarial attack and considered as one of the  major hurdles in using DNNs in security critical and real-world applications
 \citep{Goodfellow2015explaining,szegedy2013intriguing,carlini2017towards,papernot2016cleverhans,kurakin2016adversarial,eykholt2018physical,xu2019evading}.

Methods to train DNNs being robust against adversarial attacks are now a major focus in research \citep{xu2019adversarial}. But most of them are far from satisfactory \citep{athalye2018obfuscated} with the exception of the adversarial training ({AT}) approach  \citep{madry2017towards}. 
{AT} is a min-max robust training method that minimizes the worst-case training loss at adversarially perturbed examples. {AT} has inspired a wide range of state-of-the-art defenses \citep{zhang2019theoretically,sinha2018certifying,boopathy2020visual,carmon2019unlabeled,shafahi2019adversarial,zhang2019you}, which ultimately resort to min-max optimization. However,
different from standard training, AT is more computationally intensive and   is difficult to scale.

\paragraph{Motivation and challenges.} 
\textit{First}, although a `fast' version of AT (we call Fast AT) was developed in   \citep{Wong2020Fast}    where an iterative   inner maximization solver is replaced by a simplified (single-step) solution, it may suffer several problems compared to AT: unstable robust learning performance \citep{li2020towards}, over-sensitive to learning rate schedule  \citep{rice2020overfitting}, and catastrophic  forgetting of robustness against strong attacks \citep{andriushchenko2020understanding}. As a result, AT is still the dominant robust training protocol across applications. Spurred by that,  we propose DAT, a new   approach to speed up AT    by allowing for scaling batch size with distributed machines.
\textit{Second}, existing AT-type methods
 are generally built on \textit{centralized} optimization.
 The need of AT in a \textit{distributed} setting arises when centralized robust training becomes infeasible or ineffective. 
For example,
training data are distributed as they cannot centrally be stored at a single machine due to  their size or privacy. 
 Or computing units are   distributed as they
allow large-batch optimization to improve the   scalability of training.
 
  \begin{figure}[htb]
\centerline{
\hspace*{-1mm}
\begin{tabular}{c}
\hspace*{-5mm}
\includegraphics[width=.33\textwidth,height=!]{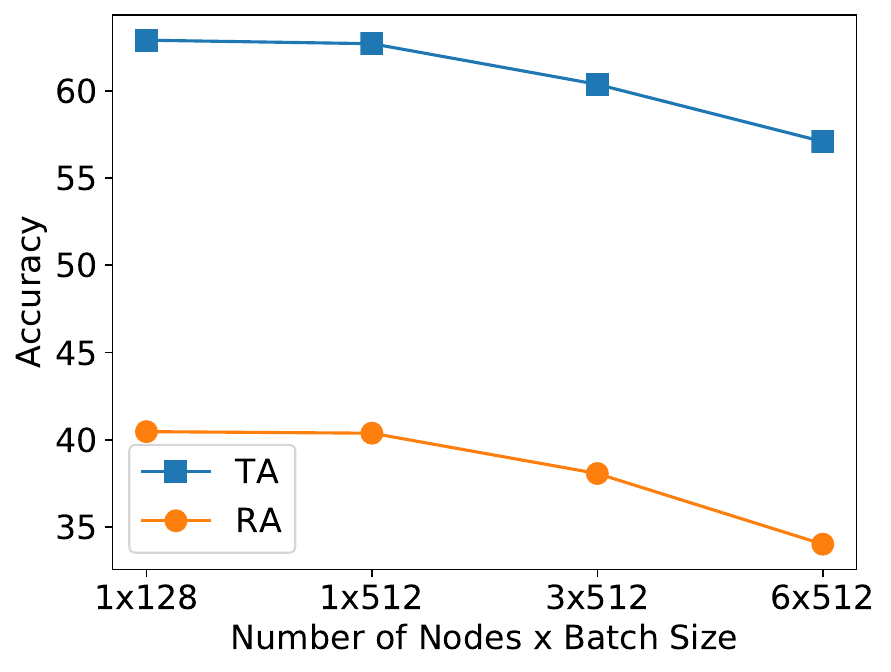} 
\end{tabular}}
\caption{\small{{{
Robust accuracy (RA) and standard test accuracy (TA) of  AT 
 vs.   scaled batch size 
 under (ImageNet, ResNet-50) using distributed machines. 
}}
}}
  \label{fig: AT_large_batch_ImageNet}
\end{figure}
While designing a distributed solution is important,  doing so effectively   is non-trivial.  Figure\,\ref{fig: AT_large_batch_ImageNet} demonstrates an example:
When scaling batch size with   the number of computing nodes,  the conventional AT method
yields a large performance drop in both robust    and standard accuracies. 
Thus, the adaptation of AT to distributed learning
leaves many unanswered questions. 
{In this work,}
we aim to design a principled and theoretically-grounded (large-batch) DAT
framework by making full use of the computing capability of multiple data-locality (distributed) machines, and show that DAT expands the capacity of data storage and the computational scalability. Furthermore, due to the existence of many variants of AT, it requires
 a  careful and systematic study on distributed AT   in its   formulation, methodology, theory and   performance evaluation.

\mycomment{
First, if the direct solution does not allow for scaling batch size with machines, then it does not speed the process up and leads to a significant amount of communication costs (considering that the number of training iterations is not reduced over a fixed number of epochs). 
Second, without proper design, the {direct} application of a {large batch} size to distributed adversarial training introduces a significant loss in both normal accuracy and adversarial robustness (e.g., more than $10\%$ performance drops for ResNet-18 on CIFAR-10
shown by our experiments). Third, the direct approach does not confer a general algorithmic framework, which is needed in order to support different variants of AT, large-batch optimization, and efficient communication. 
}

\mycomment{Although  DAT is challenging, it can immediately offer two   
   benefits. First, it  expands the computational scalability by making full use of  multiple machines. For example, it only takes \SL{xxx hours} for training a robust  ResNet-50 on ImageNet, versus \SL{xxx hours} used by AT at single machine with \SL{6} GPUs.  
   Second, it expands the capacity of data storage and allows to   train a DNN over distributedly stored  data. We summarize our contributions as below. 
   }

\mycomment{
{Taking all factors into consideration, a question that naturally arises is:}
\textit{Can we speed up AT by leveraging distributed learning with full utility of multiple computing nodes (machines), even when each only has access to limited GPU resources?}  
{Although a few works made empirical efforts to scale AT up by simply using multiple computing nodes \citep{xie2019feature,kang2019testing,qin2019adversarial}, they were limited to specific use cases and  lacked a thorough study on when and how distributed learning helps,  either in theory or in practice.}
{By contrast,}
%To answer this question, 
we propose a principled and theoretically-grounded \textit{distributed (large-batch) adversarial training} (DAT) framework by making full use of the computing capability of multiple data-locality (distributed) machines, and show that DAT expands the capacity of data storage and the computational scalability. 
We summarize our main contributions as below.
}

\mycomment{\paragraph{Contributions}
\textit{(i)} In this work, we provide a general problem formulation of   DAT, which extends multiple variants of AT in the distributed setting. 
\textit{(ii)}
We propose a unified   algorithmic framework for DAT, which different from AT,  supports {large-batch} DNN training (without losing performance
    over a {fixed} number of epochs), and  allows the transmission of {compressed} gradients for efficient communication.
    \textit{(iii)}
    We provide the convergence analysis of the proposed DAT algorithm, showing its $O({1}/{\sqrt{T}})$ convergence rate to first-order stationary points, where $T$ is total number of iterations. 
    \textit{(iv)} 
     We make a  comprehensive empirical study on DAT, showing that   
     its speedup in training large models at large datasets, and 
       matches (and even exceeds) state-of-the-art robust accuracies in different attacking and learning scenarios.  
We show that     DAT on ImageNet with use of $6 \times 6$ (GPUs per machine $\times$ machines) yields $38.45\%$ robust accuracy (comparable to $40.01\%$ from AT)  but only requires \SL{$xxx$} training time, \SL{$xxx$} times faster than AT. \SL{@Gaoyuan, please fill in.}
}

\paragraph{Contributions.}
We list our  main contributions   below.

{\textbf{\textit{(i)}}} We provide a general algorithmic framework for DAT, which {supports}  multiple (large-batch) distributed variants of AT, e.g., supervised AT and semi-supervised AT.

    {\textbf{\textit{(ii)}}}
     In theory, we quantify how descent errors from multiple sources (gradient estimation, quantization, adaptive learning rate, and inner maximization oracle) affect the convergence of DAT. We prove that the convergence speed of DAT to the first-order stationary points in general non-convex settings at a rate of  $O({1}/{\sqrt{T}})$, where $T$ is the total number of iterations. This result matches the standard convergence rate of classic training algorithms, e.g., stochastic gradient descent (SGD), for only the minimization problems.
    
    {\textbf{\textit{(iii)}}} 
    In practice, we make a comprehensive empirical study on DAT, showing its effectiveness to (1) robust training over ImageNet,
     (2) provably robust training by  randomized smoothing, 
     (3) robust training with unlabeled data,   (4) robust pretraining + finetuning, and (5) robust training across different computing and communication configurations.

\section{Related Work}

\paragraph{Training robust classifiers.}
  AT \citep{madry2017towards}, the first known min-max optimization-based defense, has inspired a wide range of other effective defenses. Examples include adversarial logit pairing \citep{kannan2018adversarial},   input gradient or curvature regularization \citep{ross2018improving,moosavi2019robustness}, trade-off between  robustness and accuracy (TRADES) \citep{zhang2019theoretically}, distributionally robust training \citep{sinha2018certifying},
  dynamic adversarial training \citep{wang2019convergence},
  robust input attribution regularization \citep{boopathy2020visual}, certifiably robust training \citep{wong2017provable},    and semi-supervised robust training \citep{stanforth2019labels,carmon2019unlabeled}.
  
  In particular,
   some recent works  proposed   \textit{fast but approximate} AT algorithms, such as `free' AT \citep{shafahi2019adversarial},  you only propagate once (YOPO) \citep{zhang2019you}, and fast gradient sign method (FGSM) based AT  \citep{Wong2020Fast}. These algorithms achieve speedup in training  by 
simplifying the inner maximization step of AT, but are designed for centralized model training.  
{A few  works made empirical efforts to scale AT up by  using multiple computing nodes \citep{xie2019feature,kang2019testing,qin2019adversarial}, they were limited to specific use cases and  lacked a thorough study on when and how distributed learning helps,  either in theory or in practice.}

\paragraph{Distributed model training.}
Distributed optimization has been found to be effective 
  for the standard training of machine learning models \citep{dean2012large,goyal2017accurate,you2019large,chia20}. 
  In contrast to centralized optimization, distributed learning enables increasing the batch size proportional to the number of computing nodes/machines. However, it is challenging to train a model via large-batch optimization without incurring   accuracy loss compared to the standard training  with  same number of epochs \citep{krizhevsky2014one,keskar2016large}.
  To tackle this challenge, it was shown in
     \citep{you2017scaling,you2018imagenet,you2019large}  that adaptation of learning rates to   the increase of the batch size is an essential mean to boost the performance of large-batch optimization. A layer-wise adaptive learning rate strategy was then proposed  to speed up the training as well as preserve the accuracy. Although these works    have  witnessed several successful applications of distributed learning in training \textit{standard} image classifiers, 
     they leave the question of how to build \textit{robust} DNNs with {DAT} open.
     In this paper, 
     we   show that the power of layer-wise adaptive learning rate also applies to DAT.
     Since distributed learning introduces  machine-machine communication overhead,
another line of work
\citep{alistarh2017qsgd,yu2019double, bernstein2018signsgd,wangni2018gradient,stich2018sparsified,
wang2019slowmo} focused on the design of communication-efficient distributed optimization algorithms.
    
     The study on distributed learning is extensive, but the problem of distributed min-max optimization is less explored, {with some exceptions \citep{srivastava2011distributed,notarnicola2018duality,hanada2017simple,tsaknakis2020decentralized,liu2019decentralized,liu_aryan2019decentralized}}. A key difference to our work is that 
     none of the aforementioned literature studied
     the \textit{large-batch min-max optimization} with its  applications to training \textit{robust} DNNs, neither theoretically nor empirically.
     {While there are recent proposed algorithms for training Generative Adversarial Nets (GANs) \citep{liu2019decentralized,liu_aryan2019decentralized}, training robust DNNs against adversarial examples is intrinsically different from GAN training. In particular, training robust DNNs requires inner maximization with respect to each training data rather than empirical maximization with respect to model parameters.} Such an essential difference leads to different optimization goals, algorithms, convergence analyses and implementations.
     
     \mycomment{ The  work \citep{liu2019decentralized,liu_aryan2019decentralized} proposed algorithms   for training Generative Adversarial Nets (GANs).
     However, training robust DNNs against adversarial examples is intrinsically different from   GAN training, where the former requires inner maximization with respect to each training data rather than empirical maximization with respect to model parameters. }

\section{Problem Formulation}
In this section, we first review the standard  setup of    adversarial training (AT)  \citep{madry2017towards}, and then  
propose a  general    min-max    setup for distributed AT (DAT).    

\paragraph{Adversarial training.}
AT \citep{madry2017towards} is a  min-max  optimization method for training   robust ML/DL models against adversarial examples \citep{Goodfellow2015explaining}. 
Formally, AT solves the 
problem 
{
{\begin{align}
 \label{eq: adv_train}
    \begin{array}{l}
\displaystyle\minimize_{\boldsymbol{\theta}} ~\mathbb E_{(\mathbf x,  y) \in \mathcal D} \left [   \maximize_{ \| \boldsymbol{\delta} \|_\infty \leq \epsilon }  ~  \ell(\boldsymbol{\theta},  \mathbf x + \boldsymbol{\delta}; y) \right ], 
    \end{array}
\end{align}} }%
where $\boldsymbol \theta \in \mathbb R^{\modeldim}$ denotes the vector of model parameters, $\boldsymbol \delta \in \mathbb R^n$ is the vector of input perturbations within an $\ell_\infty$ ball of the given radius $\epsilon$, namely, $\| \boldsymbol{\delta} \|_\infty \leq \epsilon$, $(\mathbf x, y) \in \mathcal D$ corresponds to the training example $\mathbf x$ with label $y$ in the dataset $\mathcal D$, and $\ell$ represents a pre-defined training loss, e.g., the cross-entropy (CE) loss. The rationale behind problem \eqref{eq: adv_train} is that the model $\boldsymbol \theta$ is robustly trained against the \textit{worst-case}  loss induced by the   adversarially perturbed samples. 
It is  worth noting that the AT  problem \eqref{eq: adv_train}   is \textit{different} from   conventional stochastic min-max optimization problems, e.g.,  GANs training \citep{goodfellow2014generative}. Note that in \eqref{eq: adv_train}, the stochastic sampling corresponding to the expectation over
$(\mathbf x, \mathbf y) \in \mathcal D$  is conducted \textit{prior to}
the inner maximization operation. Such a difference leads to the \textit{sample-specific} adversarial perturbation $\boldsymbol \delta (\mathbf x) \Def \maximize_{ \| \boldsymbol{\delta} \|_\infty \leq \epsilon }  ~  \ell(\boldsymbol{\theta},  \mathbf x + \boldsymbol{\delta}; y)$.

\paragraph{Distributed AT (DAT).}
 Let us consider a popular parameter-server model of distributed learning \citep{dean2012large}.
Formally, there exist $M$ workers each of which has access to a local dataset $\mathcal D^{(i)}$, and thus 
  $\mathcal D = \cup_{i=1}^M \mathcal D^{(i)}$. There also exists a server/master node (e.g., one of workers could perform as server), which collects   local information (e.g., individual gradients)  from the other workers to  update the model parameters $\boldsymbol\theta$. 
  Spurred by  \eqref{eq: adv_train}, 
DAT   solves  problems of the following   generic form,
{
 \begin{align}\label{eq: prob_DAT}
\begin{array}{l}
\raisetag{12mm}
\displaystyle\minimize_{\boldsymbol{\theta}}  ~ \frac{1}{M}\sum_{i=1}^M  f_i(\boldsymbol \theta; \mathcal D^{(i)}), 
\\
f_i \Deff
 \mathbb E_{(\mathbf x, y) \in \mathcal D^{(i)}} \left [ \lambda \ell(\boldsymbol \theta; \mathbf x, y) + 
 \max_{\| \boldsymbol{\delta} \|_\infty \leq \epsilon} \phi(\boldsymbol \theta, \boldsymbol \delta ; \mathbf x, y)
 \right ] 
    \end{array}
\end{align}}%
where $f_i$ denotes the local cost function at the $i$th worker,  $\phi$ is a robustness regularizer against the input perturbation $\boldsymbol \delta$, and $\lambda \geq 0$ is a regularization parameter that strikes a balance between the training loss  and the worst-case robustness regularization.
In \eqref{eq: prob_DAT}, if $M = 1$, $\mathcal D^{(1)} = \mathcal D$, $\lambda = 0$ and $\phi = \ell$, then the DAT problem reduces to the  AT problem  \eqref{eq: adv_train}.
We cover two categories of  \eqref{eq: prob_DAT}.
\ding{172} DAT with {l}abeled {d}ata: 
In  \eqref{eq: prob_DAT}, we consider
$\phi(\boldsymbol \theta, \boldsymbol \delta ; \mathbf x, y)  = \ell (\boldsymbol \theta, \mathbf x+ \boldsymbol \delta ;  y) $ with labeled training data $(\mathbf x, y) \in \mathcal D^{(i)}$ for $i \in [M]$. Here $[M]$ denotes the integer set $\{ 1,2,\ldots, M\}$. 
\ding{173} DAT with {u}nlabeled {d}ata:
In \eqref{eq: prob_DAT}, 
different from DAT with labeled data,  we augment $\mathcal D^{(i)}$ with an unlabeled dataset,
and define the robust regularizer 
  $\phi$ as the pseudo-labeled worst-case CE loss \citep{carmon2019unlabeled} or the TRADES regularizer 
  \citep{stanforth2019labels,zhang2019theoretically}.
  
\mycomment{
The   robustness regularizer \eqref{eq: reg_ud} can also be specified in another form \citep{stanforth2019labels,carmon2019unlabeled}, 
$
\phi(\boldsymbol \theta, \boldsymbol \delta; \mathbf x) = \mathrm{KL}(\mathbf z(\mathbf x+\boldsymbol \delta ; \boldsymbol\theta),
        \mathbf z(\mathbf x ; \boldsymbol\theta_{\mathrm{base}})
        )
$, where $\boldsymbol\theta_{\mathrm{base}}$ denotes  a well-trained standard based model to generate  some pseudo-labels of unlabeled  samples, and $\mathrm{KL}$ represents the Kullback–Leibler divergence function.
}

\section{Methodologies}
At the first glance, distributed learning  seems being naturally applied since
 problem \eqref{eq: prob_DAT} is decomposable over multiple workers.
Yet, the actual case is much more complex.
\textbf{First}, in contrast to standard AT, DAT 
allows for using a $M$ times larger     batch size    to update the model parameters $\boldsymbol \theta$ in \eqref{eq: prob_DAT}.   Thus, given the same number of epochs, DAT takes $M$ fewer gradient updates  than AT.
Although there exist some large-batch model training techniques for  solving \textit{min-only} problems \citep{you2017large,you2017scaling,you2018imagenet,you2019large,goyal2017accurate,keskar2016large}, it remains unclear if they are effective to DAT due to its   \textit{min-max} optimization nature. 
\textbf{Second}, either AT or distributed learning has its own challenges. In AT, for ease of attack generation, i.e., conducting inner maximization of \eqref{eq: prob_DAT}, fast gradient sign method (FGSM) was leveraged to improve its computation efficiency
\citep{Wong2020Fast}. In distributed learning, 
 gradient compression   \citep{alistarh2017qsgd,yu2019double} was   used for  
reducing communication overhead. Thus, it  also
remains unclear
whether these customizations  are adaptable to DAT.  
In a nutshell, the distributed min-max optimization-based robust training algorithm has not been well studied previously, particularly in the use of different types of attack generators (inner maximization oracles), gradient quantization, large-batch size, and adaptive learning rate. Although either of the standalone techniques was studied separately, justifying their coherent integration `actually works' (both    practically and theoretically) is quite demanding.

\paragraph{Algorithmic framework of DAT.}
DAT follows the   framework of distributed learning   with   parameter server. In what follows, we elaborate on its key components
through its meta-form shown by Algorithm\,1 (see its detailed version in Algorithm\,\ref{alg: DAT}).  DAT contains three algorithmic blocks. In the \textit{first} block, 
every distributed worker    calls for a maximization oracle to obtain  the adversarial perturbation for each sample within a   data batch, then computes the gradient of the local cost function $f_i$ in \eqref{eq: prob_DAT} with respect to (w.r.t.) model parameters $\boldsymbol \theta$. And every worker is 
allowed to quantize/compress the local gradient prior to transmission to the  server.
In the \textit{second} block,
the server aggregates the local gradients, and transmits the aggregated gradient (or the quantized gradient) to the other workers.
In the \textit{third} block,
the model parameters are eventually updated by a  minimization oracle at each worker based on the received gradient information from the server.

      \begin{algorithm}[H]
          \caption{Meta-version of DAT 
        (Alg.\,\ref{alg: DAT} in Supplement)}
        \begin{algorithmic}[1]
          \For{Worker $i = 1,2, \ldots, M$} \hfill \Comment{\textcolor{blue}{Block 1}}
             \State   Sample-wise attack generation \eqref{eq: inner_max_alg}
    \State Local gradient computation \eqref{eq: stoch_grad_batch} 
\State Worker-server communication 
\EndFor
  \State Gradient aggregation at server \eqref{eq: grad_agg}  
\hfill\Comment{\textcolor{red}{Block 2}}
    \State Server-worker communication 
            \For{Worker $i = 1,2, \ldots, M$}
          \hfill \Comment{\textcolor{blue}{Block 3}}
          \State Model parameter update \eqref{eq: outer_min}
      \EndFor
        \end{algorithmic}
  \label{alg: DAT_meta_form}
      \end{algorithm}

\paragraph{Large-batch challenge in DAT and a
{l}ayerwise adaptive learning rate (LALR) solution.}
In DAT, the aggregated gradient (Step\,6 in Algorithm\,1)  
 is built on the data batch that is $M$ times larger than the standard AT. This leads to a large-batch challenge in min-max optimization.
This  challenge  can also be verified 
from Fig.\,\ref{fig: AT_large_batch_ImageNet}.
To overcome the large-batch challenge, 
we adopt the technique of 
layerwise adaptive learning rate (LALR), backed up by   the recent successful applications to the standard  training of large-scale image classification and language modeling networks with large data batch \citep{you2019large,you2017scaling}.

 To be more specific,
 the model training recipe 
 using LALR becomes
{
\begin{align}\label{eq: ada_learn}
     \boldsymbol \theta_{t+1,i} = \boldsymbol \theta_{t,i}  -    \frac{ \layerscale(\|  \boldsymbol \theta_{t,i} \|_2) \cdot \eta_t }{\| \mathbf u_{t,i} \|_2} \cdot \mathbf u_{t,i},
     \quad \forall i \in [\layernum],
 \end{align}}%
 where   \mycomment{{\color{red}Songtao: change $\btheta_{t-1}$ to $\btheta_t$ and $\btheta_t$ to $\btheta_{t+1}$ because you didn't change $\bu_t$}}
 $\boldsymbol \theta_{t,i}$ denotes the $i$th-layer  parameters at iteration $t$, with $\boldsymbol \theta_t = [ \boldsymbol \theta_{t,1}^{\top}, \ldots, \boldsymbol \theta_{t,\layernum}^{\top}  ]^{\top}$, $\layernum$ is the number of layers, $\mathbf u_t$ is a descent direction computed based on the first-order gradient w.r.t. model parameters $\btheta_t$, 
 $\layerscale(\| \boldsymbol \theta_{t,i} \|_2) = \min\{ \max \{\| \boldsymbol \theta_{t,i} \|_2, c_l\}, c_u \}$ is a \textit{layerwise} scaling factor of the \textit{adaptive} learning rate $\frac{\eta_t}{\| \mathbf u_{t,i} \|_2}$,  and  {$c_l = 0$ and $c_u = 10$ are set in our experiments (see Appendix\,\ref{app: train_setting} for some ablation studies on hyperparameter selection).} 

  In \eqref{eq: ada_learn}, the specific form of the   descent direction $\mathbf u_t$ is determined by the optimizer employed. For example, if the adaptive momentum (Adam) method is used, then $\mathbf u_t$  is given by the exponential
moving average of past gradients scaled by  square root of exponential
moving averages of squared past gradients \citep{reddi2019convergence,chen2018convergence}. Such a variant of \eqref{eq: ada_learn} that uses Adam as the base algorithm is also known as LAMB  \citep{you2019large} in standard training. 
{However, it was elusive if the advantage of LALR is preserved in large-batch min-max optimization.}
As will be evident later, the effectiveness of LALR in DAT can be justified from both theoretical and empirical perspectives.
The rationale is that  the  layer-wise adaptive learning rate   smooths the optimization trajectory so that  a larger learning rate can be used without causing sharp optima even in distributed min-max optimization.

\paragraph{Other add-ons for DAT.}
In what follows, we illustrate two add-ons to improve computation and communication efficiency of DAT.

{\ding{226}}
\textit{Inner maximization: Iterative vs. one-shot solution.}
In DAT,  each worker   calls for an inner maximization oracle to generate adversarial perturbations (Step\,2 of Algorithm\,1).  
We specify two solvers of perturbation generation: iterative projected gradient descent (PGD) and one-shot (projected) FGSM  \citep{Goodfellow2015explaining,Wong2020Fast}.
\mycomment{leading to
  the unified form 
\begin{align}
   &  \boldsymbol \delta_t^{(i)}(\mathbf x) = \mathbf z_K, \quad \mathbf z_k =  \Pi_{[-\epsilon, \epsilon]^d} [ \mathbf z_{k-1} + \alpha \cdot  \mathrm{sign}( \nabla_{\boldsymbol \delta} \phi(\boldsymbol \theta_{t}, \mathbf z_{k-1}; \mathbf x) ) ], ~ k \in [K],
   \label{eq: inner_PGD_FGSM}
\end{align}
where $K$ is the total number of iterations in the inner loop,
the cases of $K = 1$ and $K > 1$ correspond to iterative PGD attack and FGSM attack respectively,
$\mathbf z_k$ denotes the PGD update of $\boldsymbol \delta$  at the $k$th iteration,  $\mathbf z_0$ is a given intial point, 
$\Pi_{[-\epsilon, \epsilon]^d} (\cdot)$ denotes the projection    onto the box constraint  $  [-\epsilon, \epsilon]^d$, $\alpha > 0$ is a given step size, and $\mathrm{sign}(\cdot)$ denotes the element-wise sign operation.
The recent work \citep{Wong2020Fast} showed that if FGSM is conducted with   random initialization $\mathbf z_0$ and a proper step size, e.g., $\alpha = 1.25 \epsilon$, then FGSM  can be  as effective    as iterative PGD in robust training. Indeed, we will  show in Sec.\,\ref{sec: exp} that the effectiveness of our proposed DAT-FGSM algorithm echoes the finding in \citet{Wong2020Fast}.}
Our experiments will show that FGSM together with LALR works well in DAT.
We  also remark that other   techniques \citep{shafahi2019adversarial,zhang2019you}  can also be used to simplify inner maximization, however,   we   focus on FGSM since it is  computationally lightest. 

{\ding{226}} \textit{Gradient quantization.}
  In contrast to standard AT, DAT may call for   worker-server communications (Steps 4 and 7 of Algorithm\,1). That is,
  if a single-precision floating-point data type is used, then DAT needs to transmit $32\modeldim$ bits per worker-server communication at each iteration. Recall that $\modeldim$ is the dimension of   $\boldsymbol{\theta}$. In order to reduce the communication cost, DAT has the option to quantize the transmitted gradients using a 
 fixed number of bits fewer than $32$.  
 {We specify  the gradient quantization operation  
 as the   \textit{randomized quantizer}   \citep{alistarh2017qsgd,yu2019double}.
  In Sec.\,\ref{sec: exp} we will show that DAT, combined with gradient quantization, still leads to a competitive  performance. For example, the robust accuracy of ResNet-50 trained by a $8$-bit DAT
(performing quantization at Step\,4 of Algorithm\,1)
for ImageNet is just $0.55\%$  lower than the   robust accuracy achieved by the $32$-bit DAT. 
It is also worth mentioning that 
  the All-reduce communication protocol can be regarded as a special case of the parameter-server setting  in Algorithm\,1
  when   every worker performs as a server. In this case,  the communication network becomes fully connected and  only
the worker-server communication   (Step\,4 of Algorithm\,1) is needed. Please refer to Appendix\,\ref{app: alg_DAT} for more details on gradient quantization.
}

\mycomment{
Let $b$ denote the number of bits ($b \leq 32$), and  thus there exists $s = 2^b$  quantization levels. We specify the gradient quantization operation $Q(\cdot)$ in Algorithm\,\ref{alg: DAT} as the   \textit{randomized quantizer}   \citep{alistarh2017qsgd,yu2019double}.  Formally,
the  quantization operation at the $i$th coordinate of a vector $\mathbf g$ is given by \citep{alistarh2017qsgd}
{
% \small
\begin{align}\label{eq: rand_q}
    Q( g_i) = \| \mathbf g \|_2 \cdot \mathrm{sign}(g_i) \cdot \xi_i(g_i,s),  \quad \forall i \in \{ 1,2, \ldots, \modeldim \}.
\end{align}}%
In \eqref{eq: rand_q},  $\xi_i(g_i,s)$ is a random number drawn as follows. Given $|g_i|/\| \mathbf g \|_2 \in [l/s, (l+1)/s]$ for some $l \in \mathbb N^+$ and $0 \leq l < s$, we  then  have
{
% \small
\begin{align}\label{eq: xi}
\xi_i(g_i,s) = \left \{ 
    \begin{array}{ll}
      l/s   & \text{with probability $1 - (s |g_i|/\| \mathbf g \|_2 - l)$}  \\
      (l+1)/s   &  \text{with probability $ (s |g_i|/\| \mathbf g \|_2 - l)$},
    \end{array}
    \right.
\end{align}}%
where $|a|$ denotes the absolute value of a scalar $a$, and  $\| \mathbf a \|_2$ denotes the $\ell_2$ norm of a vector $\mathbf a$.
The rationale behind using \eqref{eq: rand_q} is that  $Q(g_i)$ is an \textit{unbiased} estimate of $g_i$, namely,
$
\mathbb E_{\xi_i(g_i, s)}[Q(g_i)] =  g_i
$, with bounded variance. Moreover, we at most  need
 $(32 + \modeldim + b \modeldim  )$ bits to transmit the quantized  $Q(\mathbf g)$, where $32$ bits for $\| \mathbf g \|_2$, $1$ bit for sign of $g_i$ and $b$ bits for $\xi_i(g_i,s)$,
whereas it needs $32\modeldim$ bits for a single-precision
$\mathbf g$. Clearly, a small $b$ saves the communication cost.
We will show in Sec.\,\ref{sec: exp} that   DAT, combined with gradient quantization, still leads to a competitive  performance. For example, the robust accuracy of ResNet-50 trained by $8$-bit DAT
(performing quantization at Step\,7 of Algorithm\,\ref{alg: DAT})
for ImageNet is just $0.55\%$  lower than the   robust accuracy achieved by the $32$-bit DAT. 
Lastly, we note that  if every worker performs as a server in DAT, then the quantization operation at Step\,10 of Algorithm\,\ref{alg: DAT} is no longer needed. In this case, the communication network becomes fully connected. With synchronized communication, this is favored  for   training DNNs under the All-reduce operation.
}

\mycomment{
  \paragraph{Outer minimization by {l}ayerwise adaptive learning rate (LALR)}
In DAT, the aggregated gradient (Step\,7 in Algorithm\,1)  used for updating  model parameters (Step\,10 in Algorithm\,1)
 is built on the data batch that is $M$ times larger than standard AT. 
The recent works \citep{you2019large,you2017scaling} showed that the use of LALR is the key to succeed in  training standard DNNs with large data batch. Spurred by that, we incorporate LALR in DAT. Specifically, 
 the parameter updating operation $\mathcal A$ in Eq.\,\eqref{eq: outer_min} is given by 
{
\begin{align}\label{eq: ada_learn}
     \boldsymbol \theta_{t+1,i} = \boldsymbol \theta_{t,i}  -    \frac{ \layerscale(\|  \boldsymbol \theta_{t,i} \|_2) \cdot \eta_t }{\| \mathbf u_{t,i} \|_2} \cdot \mathbf u_{t,i},
     \quad \forall i \in [\layernum],
 \end{align}}%
 where   \mycomment{{\color{red}Songtao: change $\btheta_{t-1}$ to $\btheta_t$ and $\btheta_t$ to $\btheta_{t+1}$ because you didn't change $\bu_t$}}
 $\boldsymbol \theta_{t,i}$ denotes the $i$th-layer  parameters, $\layernum$ is the number of layers, $\mathbf u_t$ is a descent direction computed based on the first-order information $Q(\hat {\mathbf g}_t)$,
 $\layerscale(\| \boldsymbol \theta_{t,i} \|_2) = \min\{ \max \{\| \boldsymbol \theta_{t,i} \|_2, c_l\}, c_u \}$ is a \textit{layerwise} scaling factor of the \textit{adaptive} learning rate $\frac{\eta_t}{\| \mathbf u_{t,i} \|_2}$, {$c_l = 0$ and $c_u = 10$ are set in our experiments (\revision{see Appendix\,\ref{app: train_setting} for results on tuning $c_u$}), and $\boldsymbol \theta_t = [ \boldsymbol \theta_{t,1}^{\top}, \ldots, \boldsymbol \theta_{t,\layernum}^{\top}  ]^{\top}$.} 
  In \eqref{eq: ada_learn}, the specific form of the   descent direction $\mathbf u_t$ is determined by the optimizer employed. For example, if the adaptive momentum (Adam) method is used, then $\mathbf u_t$  is given by the exponential
moving average of past gradients scaled by  square root of exponential
moving averages of squared past gradients \citep{reddi2019convergence,chen2018convergence}. Such a variant of \eqref{eq: ada_learn} that uses Adam as the base algorithm is also known as LAMB  \citep{you2019large} in standard training. 
{However, it was elusive if the advantage of LALR is preserved in large-batch min-max optimization.}
We show in both theory and practice that the use of LALR can significantly boost the performance of DAT with large data batch. 
}

\section{Convergence Analysis  of DAT}
Although standard AT has been proved  with convergence guarantees \citep{wang2019convergence,gao2019convergence}, 
none of existing work addressed the  convergence of DAT  and   took into account LALR and gradient quantization, even in the standard AT setup.
Different from AT,
DAT needs to quantify the descent errors from multiple sources (such as gradient estimation, quantization, adaptive learning rate, and  inner maximization oracle). Before showing the challenges of proving the convergence rate guarantees, we first give the following assumptions.

\paragraph{Assumptions.}
Defining $  \Psi(\btheta) \Def \frac{1}{M}\sum^M_{i=1}
f_i(\btheta; \mathcal D^{(i)})
$ in \eqref{eq: prob_DAT}, we measure the convergence of DAT  by the first-order stationarity  of $\Psi$.
Prior to  convergence analysis, we impose the following assumptions: ($\mathcal{A}1$) $\Psi(\btheta)$ is with layer-wise  Lipschitz  continuous gradients; ($\mathcal A2$) $\phi)$ in \eqref{eq: prob_DAT} is strongly concave with respect to $\boldsymbol \delta$ and with   Lipschitz continuous gradients within the perturbation constraint; ($\mathcal A3$) Stochastic gradient   
is unbiased and has bounded variance  for each worker denoted by $\sigma^2$.
Note that the validity of ($\mathcal A2$)
could be justified from  \citep{sinha2018certifying,wang2019convergence} by imposing a strongly convex regularization into the  neighborhood of $\bdelta$. $\mathcal A2$ is   needed for tractability of analysis.  We refer readers to Appendix\,\ref{app: assumption} for more justifications on our assumptions ($\mathcal A1$)-($\mathcal A3$).  
\mycomment{In our analysis, we 
focus on the 1-sided quantization, namely, Step\,10 of Algorithm\,\ref{alg: DAT} is omitted, and
specify $\mathcal A$ by LAMB used in \citep{you2019large}.}

\paragraph{Technical challenges.}
 {In theory},  the incorporation of LALR makes the analysis of min-max optimization highly non-trivial. 
The fundamental challenge  lies in the nonlinear coupling between the biased adaptive gradient estimate resulted from LALR and the additional error generated from alternating update in DAT. From \eqref{eq: ada_learn}, we can see that the updated $\btheta$ is based on the normalized gradient, while if we perform convergence by applying the gradient Lipschitz continuity, the descent of the objective is measured by $\nabla \Psi(\btheta_t)$. This mismatch in the magnitude results in the bias term. The situation here is even worse, since the maximization problem cannot be solved exactly,   the size of the bias  depends on how close between the output of the oracle and the optimal solution w.r.t. $\bdelta$ given $\btheta$.

We have proposed
 a new descent lemma ({Lemma\,\ref{le.desc}} in Appendix)  to measure  the decrease of the objective value in the   context of alternative optimization, and showed that the bias error   resulted from the layer-wise normalization can    be compensated by  large-batch training (Theorem\,\ref{th:main_simplify}). Prior to our work, 
 we are   \textit{not} aware of any established convergence   analysis for   large-batch min-max optimization.

\paragraph{Convergence rate.}
In Theorem\,\ref{th:main_simplify}, we  present the sub-linear rate of DAT.  
\begin{theorem}\label{th:main_simplify}
Suppose that assumptions $\mathcal A1$-$\mathcal A3$ hold, the inner maximizer of DAT provides a $\varepsilon$-approximate solution 
{(i.e., the $\ell_2$-norm of inner gradient is upper bounded by $ \varepsilon$)}, 
  and the learning rate is set by   $\eta_t\sim\mathcal{O}(1/\sqrt{T})$, then $\{\btheta_t\}_{t=1}^{T}$   generated by DAT 
yields the   convergence rate
{
% \small
% \vspace*{-5mm}
\begin{align}\label{eq: thr_rate}
    &\frac{1}{T}\sum^{T}_{t=1}\mathbb{E}\|\nabla_{\btheta} \Psi (\btheta_t)\|_2^2 \nonumber  \\
    = &  \mathcal O\left ( \frac{1}{\sqrt{T}} + 
    \frac{\sigma}{\sqrt{MB}} + \min\left\{ \frac{d}{4^b}, \frac{\sqrt{d}}{2^b} \right\} + \varepsilon \right ),
\end{align}}%
where $b$ denotes the number of quantization bits, and $B =\min\{|\mathcal{B}_t^{(i)}|,\forall t,i\}$ stands for the  smallest batch size per worker.
\end{theorem}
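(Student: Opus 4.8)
The plan is to view the LALR update \eqref{eq: ada_learn} as a \emph{biased, layer-wise normalized} stochastic scheme for the smooth surrogate $\Psi$, and to track the gap between the direction $\mathbf u_t$ actually used and the true gradient $\nabla_{\btheta}\Psi(\btheta_t)$. First I would invoke ($\mathcal A1$) for the layer-wise smoothness of $\Psi$ and the Danskin theorem---applicable because the strong concavity in ($\mathcal A2$) makes the inner $\max$ uniquely attained---to identify $\nabla_{\btheta}\Psi(\btheta_t)$ with the data-average of $\lambda\nabla\ell + \nabla_{\btheta}\phi$ evaluated at the \emph{exact} maximizer $\bdelta^\star(\btheta_t)$; this is the reference gradient against which $\mathbf u_t$ must be compared. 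I would then write $\mathbf u_t = \nabla_{\btheta}\Psi(\btheta_t) + \mathbf e_t$, with $\mathbf e_t$ collecting the errors from mini-batch sampling ($\mathbf e_t^{\mathrm{sto}}$), gradient quantization ($\mathbf e_t^{\mathrm{q}}$), the inexact inner oracle ($\mathbf e_t^{\mathrm{max}}$), and---for the Adam/LAMB variant---the moving-average bias.

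The core step is to prove the new descent lemma (Lemma~\ref{le.desc}). Applying the layer-wise smoothness along \eqref{eq: ada_learn} and using the exact identity $\|\btheta_{t+1,i}-\btheta_{t,i}\|_2 = \layerscale(\|\btheta_{t,i}\|_2)\,\eta_t \le c_u\eta_t$, the second-order term is at most $\tfrac12 c_u^2\eta_t^2\sum_i L_i$. For the first-order term I would lower-bound, for each layer $i$, the normalized inner product $\big\langle\nabla_i\Psi(\btheta_t),\,\mathbf u_{t,i}/\|\mathbf u_{t,i}\|_2\big\rangle \ge \|\nabla_i\Psi(\btheta_t)\|_2 - c\,\|\mathbf e_{t,i}\|_2$, proved by splitting on whether $\|\mathbf e_{t,i}\|_2 > \tfrac12\|\nabla_i\Psi(\btheta_t)\|_2$: if not, then $\|\mathbf u_{t,i}\|_2 \ge \tfrac12\|\nabla_i\Psi(\btheta_t)\|_2$ and the bound follows from $\langle\nabla_i\Psi(\btheta_t),\mathbf u_{t,i}\rangle = \|\nabla_i\Psi(\btheta_t)\|_2^2 + \langle\nabla_i\Psi(\btheta_t),\mathbf e_{t,i}\rangle$; if so, the layer is simply charged to $\|\mathbf e_{t,i}\|_2$. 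Combining, in conditional expectation $\Psi(\btheta_{t+1}) \le \Psi(\btheta_t) - c_l\eta_t\|\nabla\Psi(\btheta_t)\|_2 + c\,c_u\,\eta_t\sum_i\mathbb E\|\mathbf e_{t,i}\|_2 + \tfrac12 c_u^2\eta_t^2\sum_i L_i$ (this step needs the layer-scale lower bound $c_l>0$ in the analysis, even though $c_l=0$ is used in practice). The gradient enters \emph{linearly}, which is why \eqref{eq: thr_rate} carries $\sigma/\sqrt{MB}$ and $\varepsilon$ rather than their squares.

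It remains to bound $\mathbb E\|\mathbf e_t\|$ term by term. Averaging the stochastic gradients over $M$ workers and $B$ samples each, ($\mathcal A3$) gives $\mathbb E\|\mathbf e_t^{\mathrm{sto}}\|_2^2 \le \sigma^2/(MB)$, hence an $\mathcal O(\sigma/\sqrt{MB})$ contribution. The randomized quantizer supplies $\mathbb E\|\mathbf e_t^{\mathrm{q}}\|_2^2 \le \min\{d/4^{b},\sqrt d/2^{b}\}\cdot\|\widehat{\mathbf g}_t\|_2^2$, which---using a uniform bound on the gradient estimate---contributes the $\min\{d/4^b,\sqrt d/2^b\}$ term. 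For the inexact inner oracle, the $\varepsilon$-stationarity $\|\nabla_{\bdelta}\phi(\btheta_t,\bdelta_t)\|_2\le\varepsilon$ with the strong concavity in ($\mathcal A2$) gives $\|\bdelta_t-\bdelta^\star(\btheta_t)\|_2 = \mathcal O(\varepsilon)$, and the joint Lipschitzness of $\nabla_{\btheta}\phi$ then yields $\|\mathbf e_t^{\mathrm{max}}\|_2 = \mathcal O(\varepsilon)$. Finally I would sum the descent inequality over $t=1,\dots,T$, telescope $\Psi(\btheta_1)-\inf\Psi$, divide by $\sum_t\eta_t$, and take $\eta_t\sim\mathcal O(1/\sqrt T)$ so that $1/(\eta_t T)$ and $\eta_t$ are both $\mathcal O(1/\sqrt T)$; passing from $\|\nabla\Psi\|_2$ to $\|\nabla\Psi\|_2^2$ with the same uniform gradient bound gives \eqref{eq: thr_rate}.

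The hard part will be the nonlinear coupling between the \emph{biased, normalized} estimate $\mathbf u_t/\|\mathbf u_t\|_2$ and the \emph{alternating} max-then-min structure: the smoothness of $\Psi$ we use is the Danskin gradient at the \emph{exact} $\bdelta^\star(\btheta_t)$, whereas $\mathbf u_t$ is built from $\bdelta_t\ne\bdelta^\star(\btheta_t)$, so the inner-maximization error enters both inside the normalizer $\|\mathbf u_t\|_2$ and in the numerator, and a naive triangle inequality blows up precisely when $\|\nabla\Psi(\btheta_t)\|_2$ is small. Showing---through the layer-wise case split above---that this bias is of the same order as the mini-batch noise, and hence \emph{absorbed} by the $M$-fold larger batch, is the technical heart of Lemma~\ref{le.desc}. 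A secondary subtlety is carrying the \emph{multiplicative} quantization variance (scaled by $\|\widehat{\mathbf g}_t\|_2^2$, yet unbiased) through the normalization without a circular dependence on $\|\mathbf u_t\|_2$.
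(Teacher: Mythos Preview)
Your high-level plan (Danskin for $\nabla\Psi$, one-step smoothness descent, telescope with $\eta_t\sim 1/\sqrt T$) matches the paper, but the mechanism you propose for the first-order term is different from theirs and, as written, has a gap when the outer oracle is LAMB. You decompose additively, $\mathbf u_t=\nabla\Psi(\btheta_t)+\mathbf e_t$, and then run a \emph{layer-wise norm} case split on $\|\mathbf e_{t,i}\|$ versus $\|\nabla_i\Psi(\btheta_t)\|$. The trouble is that in LAMB $\mathbf u_t=\hat{\mathbf g}_t/(\sqrt{\mathbf v_t}+\zeta)$ is a \emph{coordinate-wise rescaling} of the gradient estimate, so even when $\hat{\mathbf g}_t=\nabla\Psi(\btheta_t)$ exactly, your ``error'' $\mathbf e_t=\mathbf u_t-\nabla\Psi(\btheta_t)$ is of the same order as $\nabla\Psi(\btheta_t)$ itself (take any layer where the entries of $\sqrt{\mathbf v_t}$ differ by a large factor). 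Your case split would then land in the ``charge the layer to $\|\mathbf e_{t,i}\|$'' branch most of the time, and the bound becomes vacuous. In short, your argument would work for LARS-type layer normalization of the raw gradient, but it does not see through the second, Adam-type normalization that LAMB adds.

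The paper handles this by a \emph{component-wise sign} split rather than a layer-norm split: since $\sqrt{\mathbf v_t}+\zeta>0$, the sign of $[\mathbf u_t]_{ij}$ equals the sign of $[\hat{\mathbf g}_t]_{ij}$, so the bad set $\{\mathrm{sign}([\nabla\Psi]_{ij})\neq\mathrm{sign}([\mathbf u_t]_{ij})\}$ coincides with $\{\mathrm{sign}([\nabla\Psi]_{ij})\neq\mathrm{sign}([\hat{\mathbf g}_t]_{ij})\}$, whose probability is bounded by Markov's inequality in terms of the coordinate-wise deviations $|[\hat{\mathbf g}_t]_{ij}-[\nabla\Psi]_{ij}|$. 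On the good set they use $|[\mathbf u_t]_{ij}|\ge |[\hat{\mathbf g}_t]_{ij}|/G$ and $\|\mathbf u_{t,i}\|\le\sqrt{d_i/(1-\beta_2)}$ (this is where the bounded-gradient part of $\mathcal A1$ enters), reducing the first-order term to $-c\,\mathbb E\langle\nabla\Psi(\btheta_t),\hat{\mathbf g}_t\rangle$; it is \emph{this} inner product that Lemma~\ref{le.desc} controls, giving $-\tfrac12\mathbb E\|\nabla\Psi(\btheta_t)\|^2+\varepsilon+\tfrac{(1+\lambda)\sigma^2}{MB}$. So Lemma~\ref{le.desc} is not the full one-step descent you describe; it is the step that disentangles the inexact inner maximization and mini-batch noise from the \emph{unnormalized} aggregated gradient $\hat{\mathbf g}_t$, after the sign split has already removed the Adam rescaling. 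A secondary consequence is that the paper's descent carries $\|\nabla\Psi\|^2$ directly (from the inner product lemma), with the ``linear'' $\sigma/\sqrt{MB}$ and $\sqrt\varepsilon$ contributions arising separately from the Markov bound on the sign-mismatch probability---not from a single linear-in-gradient descent inequality as in your sketch.
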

\textbf{Proof}: Please see Appendix\,\ref{app: analysis}. \hfill $\square$

The  error rate given by  \eqref{eq: thr_rate} involves four terms.  The term $\mathcal O(1/\sqrt{MB})$ characterizes the benefit of using the large per-worker  batch size $B$ and $M$ computing nodes in DAT. It is introduced since the variance of adaptive gradients (i.e., $\sigma^2$) is reduced by a factor $1/MB$, where $1/M$ corresponds to the linear speedup by $M$ machines. In \eqref{eq: thr_rate},
the term $\min\{ \frac{d}{4^b}, \frac{\sqrt{d}}{2^b} \} $ arises due to the variance of compressed gradients, 
and the other two terms imply the dependence on the number of iterations $T$ as well as the $\varepsilon$-accuracy of the inner maximization oracle.  
{We highlight that  our convergence  analysis (Theorem\,\ref{th:main_simplify}) is not barely a  combination of LALR-enabled   standard training  analysis \citep{you2019large,you2017scaling} and adversarial training  convergence analysis \citep{wang2019convergence,gao2019convergence}.
Different from the previous work, we address the
  fundamental challenges in (a) quantifying the descent property of the objective value at the presence of multi-source errors during alternating min-max optimization, and (b) deriving the theoretical relationship between large data batch (across distributed machines) and the eventual convergence error of DAT. 
}

\section{Experiments 
}\label{sec: exp}
We empirically evaluate DAT and show its success  in training robust DNNs across multiple applications, which include \ding{172} adversarially robust ImageNet training, \ding{173} provably robust training by randomized smoothing, \ding{174}  semi-supervised robust training with unlabeled data, \ding{175} robust transfer learning, \ding{176} DAT using different   communication protocols.

\subsection{Experiment setup}
\paragraph{DNN models and datasets.} 
We use  Pre-act {ResNet-18}  \citep{he2016identity}  and   {ResNet-50} \citep{he2016deep} for image classification, where the former is shortened as ResNet-18.  
And we use ImageNet \citep{deng2009imagenet}  for supervised DAT and augmented CIFAR-10 \citep{carmon2019unlabeled}  for semi-supervised DAT. 
In the latter setup, 
CIFAR-10 is augmented with
 unlabeled data drawn 
from $80$ Million Tiny Images.
When studying pre-trained model's  transferability, 
CIFAR-100 is used   as a   target dataset for down-stream classification. 

\paragraph{Computing resources.}
We train a DNN using $p$ computing nodes, each of which contains $q$  GPUs (Nvidia V100 or P100). Nodes are connected with 1Gbps ethernet.
\textit{A configuration of   computing resources is noted by $p \times q$.}
If $p > 1$, then the training is conducted in a \textit{distributed} manner.
And we split training data  into $p$ subsets, each of which is stored at a local node.  
 
We   note that the  batch size  $6 \times 512 = 3072$  is used for ImageNet over  $36$ GPUs.
Unless specified otherwise, 
DAT is conducted using Ring-AllReduce, which requires one-sided quantization in Step\,4 of Algorithm\,\ref{alg: DAT_meta_form}.

\paragraph{Baseline methods.}
%\SL{Discussion needed on DAT-FGSM vs. Fast AT.}
We consider 2 \textit{variants} of  DAT: \underline{1)  {\textit{DAT-PGD}}}, namely, DAT using (iterative) PGD as the inner maximization oracle; and \underline{2) {\textit{DAT-FGSM}}}, namely, DAT using one-step (projected) FGSM \citep{Wong2020Fast} as the inner maximization oracle. 
Additionally, 
we consider $4$ training \textit{baselines}: \underline{1)  \textit{AT}}  \citep{madry2017towards}; \underline{2) \textit{Fast AT}} \citep{Wong2020Fast};   {\underline{3)
\textit{DAT w/o LALR}}}, namely, a   distributed implementation of AT or Fast AT
%, which is in the form of   DAT-PGD or DAT-FGSM
but \textit{without} considering LALR; 
and \underline{4)
\textit{DAT-LSGD}} \citep{xie2019feature}, namely, 
a {d}istributed implementation of {l}arge-batch {SGD} (LSGD) 
for  AT.
We remark that conventional AT and Fast AT are   centralized training  methods. 
%In our setup, the number of GPUs is limited  to $6$  at a single machine, and thus the largest batch size that the centralized method can use is around $2048$ for CIFAR-10 and $85$ for ImageNet. 
\mycomment{We also find that the direct implementation of Fast-AT  in a distributed way leads to a quite poor scalability versus the growth of   batch size, and thus  a worse distributed baseline than DAT-FGSM w/o LALR. 
Lastly, we remark that the work \citep{xie2019feature} proposed modifying a model architecture  by incorporating feature denoising. In contrast, DAT does not call for architecture modification. Thus, to enable a fair comparison, we use the same training recipe LSGD as \citep{xie2019feature}  in the DAT setting, leading to the considered distributed training baseline  DAT-LSGD.
}

\begin{table}[htb]
%\begin{wraptable}{r}{95mm}
% \vspace*{-1mm}
\begin{center}
\caption{
{DAT  (in gray color) on (ImageNet, ResNet-50), compared with baselines, in TA (\%), RA (\%), AA (\%), communication time per epoch (seconds), and total training time (including communication time)  per epoch (seconds). For brevity, `$p \times q$' represents `\# nodes $\times$  \# GPUs per node', 
`C' represents communication time in seconds, and `T' represents training time in seconds. 
%\SL{AA}
%SL{we need to show relative improvement compared to AT and Fast AT, like fine-tuning example.}
}
%All the training methods are conducted in \SL{xxx [@Gaoyuan]} epochs .
} 
% \vspace{-2.5mm}
\label{table: overall}
\begin{threeparttable}
\resizebox{0.48\textwidth}{!}{
\begin{tabular}{c|c|c|c|c|c|c|c}
\hline
\hline
\multirow{2}{*}{\begin{tabular}[c]{@{}c@{}}Method\end{tabular}} & \multicolumn{6}{c}{\textbf{ImageNet, ResNet-50}} \\ 
\cline{2-8}  & \begin{tabular}[c]{@{}c@{}}
$p \times q$
%Nodes $\times$\\GPUs per node
\end{tabular}
& Batch size
& TA (\%) & RA (\%) & AA (\%) & \begin{tabular}[c]{@{}c@{}}C (s)
%\\per epoch (s)
\end{tabular} & \begin{tabular}[c]{@{}c@{}}
T  (s)
%Training time\\per epoch (s)
\end{tabular}
 \\ \hline
  AT & $1\times 6$  & $512$ & 62.70 & 40.38 & 37.46 & NA & 6022  \\
% \mycomment{{DAT-PGD w/o LALR} & $6\times 6$  & $512$ & 62.36 %\decrease{(0.34)}
% %65.06 %\SL{check?} 
% &  %39.28
% 39.86 % \decrease{(0.52)}
% & \high{4324}  & \high{5663}\\}
DAT-PGD w/o LALR & $6\times 6$  & $6 \times 512$ & 
57.09   %\decrease{(5.61)}
%60.09 
%\SL{check?} 
&  34.02 %35.02 
%\decrease{(6.36)}
& 30.98
& 865  & 1932\\
\rowcolor{Gray}
{DAT-PGD} & $6\times 6$  & $6 \times 512$ & 63.75 %\increase{(1.05)}
& 38.45 
%\decrease{(1.93)}
& 36.04
& 898  & 1960\\
{Fast AT} & $1\times 6$  & $512$ & 58.99  % \decrease{(3.71)}
& 40.78 % \increase{(0.4)}
& 37.18 & NA & 1544  \\
{DAT-FGSM w/o LALR} & $6\times 6$  & $6 \times 512$ & 55.04 %\decrease{(7.66)}
%57.04 \SL{check?} 
&  35.03 %39.03 \SL{check?} 
%\decrease{(5.35)}
& 32.16
& 863  & 1080\\
\rowcolor{Gray}
DAT-FGSM & $6\times 6$  & $6 \times 512$ & 58.02  
& 40.27
& 36.02
& 859  & 1109\\
\hline
\hline
\end{tabular}}
\end{threeparttable}
\end{center}
% \vspace*{-1mm}
\end{table}

\paragraph{Training setting.}
Unless specified otherwise, we choose the training perturbation size    $\epsilon = 8/255$ and $2/255$  for  CIFAR and ImageNet respectively, where recall that $\epsilon$ was defined in \eqref{eq: adv_train}. We also choose 
$10$ steps  and $4$ steps for PGD attack generation in DAT (and its variants) under CIFAR and ImageNet, respectively. The number of training epochs is given by $100$  for CIFAR-10 and $30$ for ImageNet. Such   training settings are consistent with previous state-of-the-art \citep{zhang2019you,Wong2020Fast}.  
To implement DAT-FGSM, we find that the use of  cyclic learning rate suggested by Fast AT \citep{Wong2020Fast} becomes over-sensitive to the increase of batch size; see Appendix\,\ref{app: add_results}. Thus, we   adopt  the standard piecewise decay step size   and an early-stop strategy \citep{rice2020overfitting} in DAT.

\paragraph{Evaluation setting.}
Unless specified otherwise, we report \underline{r}obust test \underline{a}ccuracy (\textbf{RA}) of a learned model against PGD attacks \citep{madry2017towards}.
Unless specified otherwise, we choose the  perturbation size  same as the training $\epsilon$ in evaluation, and the number of PGD steps is selected as   $20$ and $10$ for CIFAR and ImageNet, respectively. We will also measure RA against AutoAttacks and   the resulting robust accuracy is named \textbf{AA}.
Further, we   measure the standard \underline{t}est \underline{a}ccuracy (\textbf{TA}) of a model against  normal examples.
{All experiments are run   $3$ times with different random seeds, and the mean metrics are reported.}
We consider three different communication protocols, Ring-AllReduce (with one-sided quantization), parameter-server (with double quantization), and high performance computing (HPC) setting (without quantization).
To measure the communication time, we use \textsc{torch.distributed} package with gloo and nccl as communication backend\footnote{\url{https://pytorch.org/docs/stable/distributed.html}}.
We then measure the time of required worker-server communications per epoch.

\subsection{Experiment results}

\paragraph{Adversarial training on ImageNet.}
In Table\,\ref{table: overall}, we show an overall performance comparison on ImageNet between our 
proposed DAT variants and baselines 
in TA, RA, communication and computation efficiency. 
Note that AT and Fast AT are centralized training baselines using   the same number of epochs as distributed training.
\textbf{First}, we observe that the direct extension from AT (or Fast AT) to its distributed counterpart (namely, DAT-PGD w/o LALR or DAT-FGSM w/o LALR) leads to a large degradation of both RA and TA. 
  \textbf{Second},
DAT-PGD (or DAT-FGSM) is able to achieve  competitive      performance to AT (or Fast AT)
and enables a graceful training speedup.
In practice, DAT is not able to achieve linear speed-up  
mainly because of the communication cost. For example, when comparing the computation time of DAT-PGD (batch size $6 \times 512$) with that of AT (batch size $512$), the computation speed-up (by excluding the communication cost) is given by $(6022)/(1960-898) = 5.67$, consistent with the ideal computation gain using $6\times$ larger  batch size in DAT-PGD.
\textbf{Third},  when comparing DAT-FGSM with DAT-PGD, we   observe that the former leads to 
a larger loss in standard accuracy (around 5\%).
  Moreover,  similar to Fast AT, we noted  a larger RA variance  of DAT-FGSM (around 1.5\%) than DAT-PGD (around 0.5\%). Thus, the FGSM-based training is less stable than the AT-based one, consistent with  \citep{li2020towards}.

     \begin{figure}[htb]
\centerline{
\begin{tabular}{c}
\includegraphics[width=.39\textwidth,height=!]{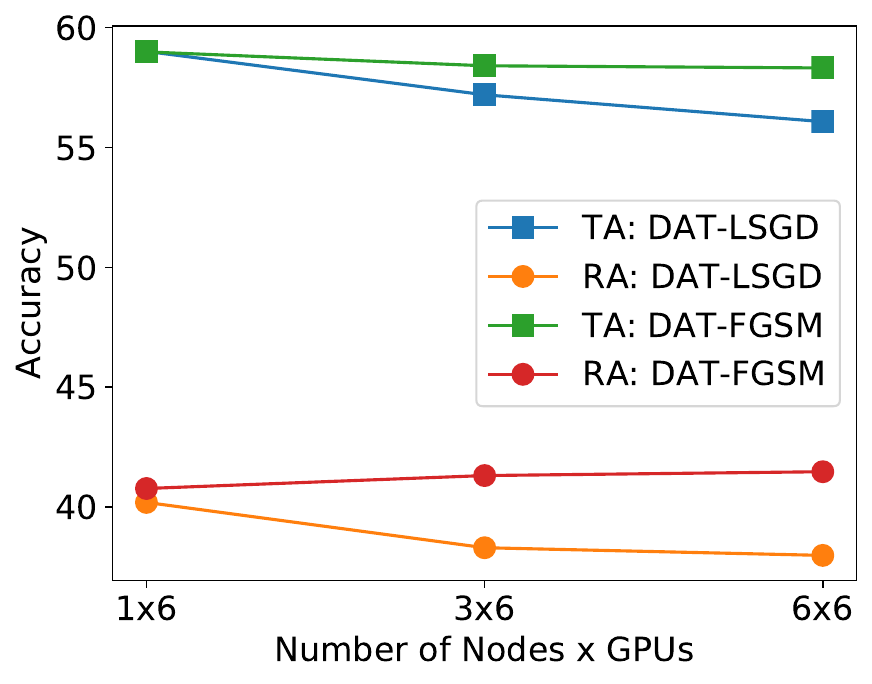}  
\end{tabular}}
\caption{\small{{{TA/RA comparison between     {DAT-FGSM} and    DAT-LSGD   vs. node-GPU configurations on (ImageNet, ResNet-50).}}
}}
  \label{fig: scalability}
\end{figure}

\begin{figure}[htb]
\centerline{
\begin{tabular}{cc}
\includegraphics[width=.23\textwidth,height=!]{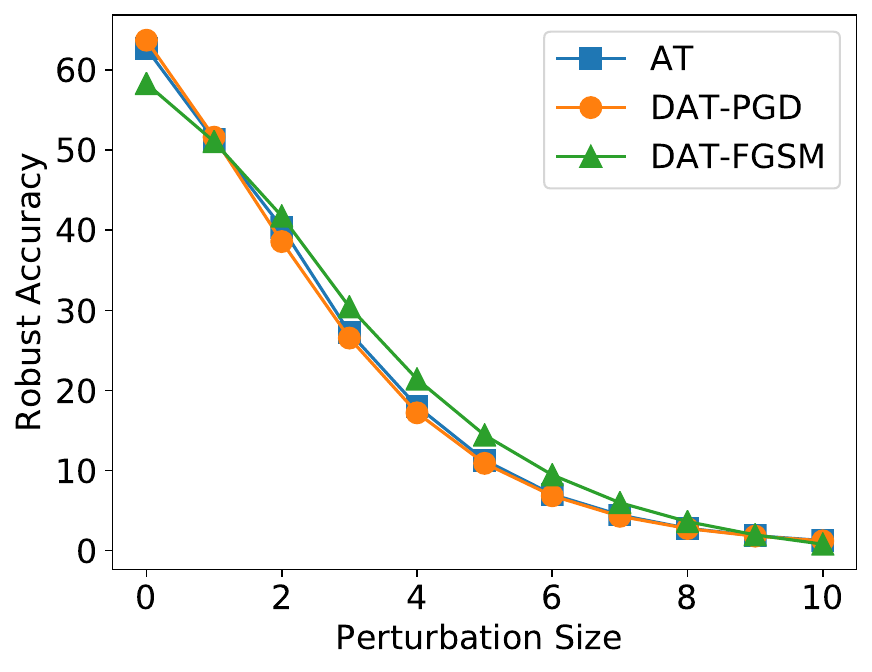} &
\includegraphics[width=.23\textwidth,height=!]{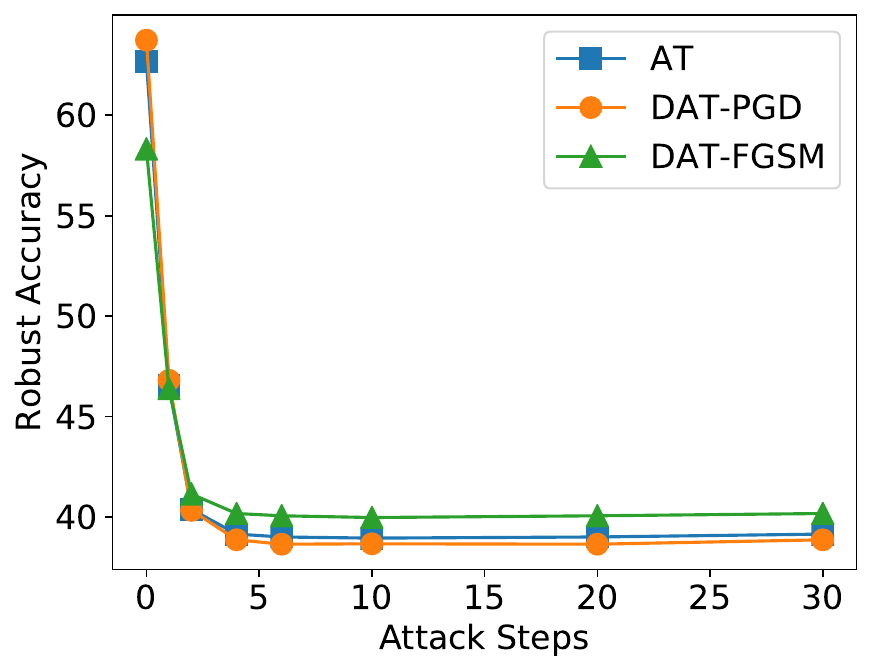}\\
 (a) & (b)
\end{tabular}}
\caption{{RA against 
PGD attacks for    model
 trained by DAT-PGD, DAT-FGSM, and AT  following (ImageNet, ResNet-50) in Table\,\ref{table: overall}.
(a) RA versus different perturbation sizes (over the divisor $255$). (b)  RA versus different steps.
}}
\label{fig: robust_PGD_imagenet}
\end{figure}

  In Figure\,\ref{fig: scalability}, we further compare our proposed DAT with the  DAT-LSGD baseline \citep{xie2019feature} in terms of TA/RA versus the number of computing nodes. Clearly, our approach scales more gracefully  than the baseline, without losing much performance as the batch sizes increases along with the number of computing nodes.
  It is worth noting that
  our DAT setup  is more challenging than \citep{xie2019feature}, which used 128 GPUs but the per-GPU utilization is the 32 batch size. By contrast, although we only use 36 GPUs, the per-GPU batch size is 85. The use of a larger batch size per GPU makes distributed robust training useful when having access to a limited computing budget.
Besides, \citet{xie2019feature} used the PGD attack generation with a quite large number of attack steps (30) at the training time. This makes the computation time dominated over the node-wise communication time. However, we  used a less number of PGD attack steps. In this scenario, the communication time cannot be neglected and prevents the practical distributed implementation from achieving the linear speed-up. For example, when comparing the computation time of DAT-PGD with that of AT in Table\,\ref{table: overall}, the computation speed-up (by excluding   communication cost) is given by $6022/(1960-898) =  5.67 $, close to the linear rate (6$\times$).

In Figure\,\ref{fig: robust_PGD_imagenet}, we evaluate the performance of DAT  against   PGD attacks of different steps
and perturbation sizes (i.e., values of $\epsilon$).
We  observe that DAT matches  robust accuracies of standard AT even against PGD attacks at different values of $\epsilon$ and steps.

\paragraph{{Adversarially
trained smooth classifier}.}
DAT also provides us an effective way to speed up the smooth adversarial training (Smooth-AT) \citep{salman2019provably}  for certified robustness. 
Different from AT, Smooth-AT augments a single training sample with  \textit{multiple}
Gaussian noisy copies so as to  train a Gaussian smoothing-aware  classifier.    Thus, Smooth-AT requires $N\times$ data batch size and storage capacity  in contrast to AT, where $N$ is the number of noisy copies per sample. In the centralized training regime, $N$ can only be set by a small value, e.g., $N = 1$ or $2$. However, DAT is able to scale up Smooth-AT  with a large value of $N$, e.g., $N = 20$ in Table\,\ref{table: RS}. 

\begin{table}[htb]
\begin{center}
\caption{{Certified accuracy (\%) of smooth classifiers   
trained by Smooth-DAT on (CIFAR-10, ResNet-18) versus   $\ell_2$ radii. Here smooth classifiers are achieved at two Gaussian noise variance levels, $\sigma = 0.12$ and $\sigma = 0.25$, following \citep{salman2019provably}. And Smooth-AT is implemented using the baseline approach with $N = 2$ and the DAT approach with $N=20$, respectively. 
}
} 
\label{table: RS}
\begin{threeparttable}
\resizebox{0.47\textwidth}{!}{
\begin{tabular}{c|c|c|c|c|c|c|c}
\hline
\hline
\multirow{2}{*}{\begin{tabular}[c]{@{}c@{}}Method\end{tabular}} & \multicolumn{6}{c}{{Smooth classifier ($\sigma = 0.12$)}} \\ 
\cline{2-8}  & $r = 0.05$
& $r = 0.1$
& $r = 0.15$ & $r = 0.2$ & $r = 0.3$ & $r = 0.4$  & $r = 0.5$
 \\ \hline
  Baseline ($N = 2$) & 0.832  & 0.804  &  0.762  & 0.728  & 0.654  & 0.545 & 0  \\
\rowcolor{Gray} DAT ($N=20$) & 0.838  & 0.812  &  0.784 & 0.748  & 0.661  & 0.550 & 0  \\
\hline
\multirow{2}{*}{\begin{tabular}[c]{@{}c@{}}Method\end{tabular}} & \multicolumn{6}{c}{{Smooth classifier ($\sigma = 0.25$)}} \\ 
\cline{2-8}  & $r = 0.05$
& $r = 0.1$
& $r = 0.15$ & $r = 0.2$ & $r = 0.3$ & $r = 0.4$  & $r = 0.5$
 \\ \hline
  Baseline ($N = 2$) & 0.752  & 0.730  &  0.708 & 0.678  & 0.625  & 0.562 & 0.498  \\
\rowcolor{Gray} DAT ($N=20$) & 0.764  & 0.748  &  0.716 & 0.688  & 0.632  & 0.566  & 0.514   \\
\hline
\hline
\end{tabular}}
\end{threeparttable}
\end{center}
\end{table}

Smooth-AT can produce a provably robust classifier \citep{cohen2019certified}.
To be more specific, let $f(\mathbf x)$ denote a classifier (with input $\mathbf x$) trained by Smooth-AT. Then its Gaussian smoothing version, given by 
$f_{\mathrm{smooth}}(\mathbf x)\Def 
\argmax_{c} \mathbb P_{\boldsymbol \delta \in \mathcal N(\mathbf 0, \sigma^2 \mathbf I)}[  f(\mathbf x+\bdelta)  = c ]
$, can achieve certified  robustness, where $c$ is a class label,   $\boldsymbol \delta \in \mathcal N(\mathbf 0, \sigma^2 \mathbf I)$  denotes the standard Gaussian noise with variance $\sigma^2$, and $\mathbb P$ signifies the majority vote-based prediction probability over multiple noisy samples. 
The resulting smooth classifier $f_{\mathrm{smooth}}$ can then be evaluated  at 
certified accuracy   \citep{cohen2019certified}, a provable robust guarantee at a given $\ell_2$ perturbation radius $r$.

In Table\,\ref{table: RS}, we present the certified accuracy (CA) of a $\sigma$-specified smooth classifier, obtained by either the conventional Smooth-AT approach (baseline using $N = 2$) or the DAT-enabled Smooth-AT method (DAT using $N = 20$). And we evaluated CA at different $\ell_2$-radii. As we can see, DAT  yields improved certified robustness over the conventional Smooth-AT with $N = 2$. This demonstrates the advantage of DAT in training provably robust classifiers: The use of a large number of Gaussian noisy samples becomes   feasible through distributed training. 
We also observe that CA drops if the perturbation $\ell_2$-radius $r$ increases. This is not surprising since CA is derived by sanity checking if the certified $\ell_2$ perturbation radius of a smooth classifier can cover a given $r$.
Besides, the smoother classifier constructed using Gaussian noises of large   variance $\sigma$ tend to be more robust against a larger $\ell_2$ perturbation radius, but may hamper the accuracy against perturbations of small $\ell_2$ radius. This is consistent with \citep{cohen2019certified} and reveals the tradeoff between accuracy and certified robustness for a $\sigma$-specific smooth classifier.

\begin{table}[htb]
\begin{center}
\caption{{DAT with semi-supervision   using  ResNet-18  or {Wide ResNet-28-10} under CIFAR-10 + 500K unlabeled Tiny Images. 
}
} 
\label{table: unlabel}
\begin{threeparttable}
\resizebox{0.45\textwidth}{!}{
\begin{tabular}{c|c|c|c|c|c}
\hline
\hline
\multirow{2}{*}{\begin{tabular}[c]{@{}c@{}}Method\end{tabular}} & \multicolumn{5}{c}{{ResNet-18, batch size $12 \times 2048$}} \\ 
\cline{2-6}  & TA (\%) & RA (\%) & AA (\%) & C(s) & T(s)\\ \hline
DAT-PGD  
& 87.00 %
& 47.34 %
& 45.23 %
& 86  & 451\\
DAT-FGSM 
& 88.00 % 
& 45.84 %
& 43.19%
& 86  & 124\\
\hline
\multirow{2}{*}{\begin{tabular}[c]{@{}c@{}}Method\end{tabular}} & \multicolumn{5}{c}{{{Wide ResNet-28-10}, batch size $12 \times 128$}} \\ 
\cline{2-6}  & TA (\%) & RA (\%) & AA (\%) & C(s) & T(s)\\ \hline
DAT-PGD  
& 89.37 % 
& 62.06 % 
& 58.35 %
& 302  & 1020 \\
DAT-FGSM 
& 89.52 % 
& 61.24 %
& 57.65 %
& 302  & 674 \\
\hline
\hline
\end{tabular}
}
\end{threeparttable}
\end{center}
\end{table}

  \begin{figure}[htb]
\centerline{
\begin{tabular}{cc}
\hspace*{-3mm} \includegraphics[width=.23\textwidth,height=!]{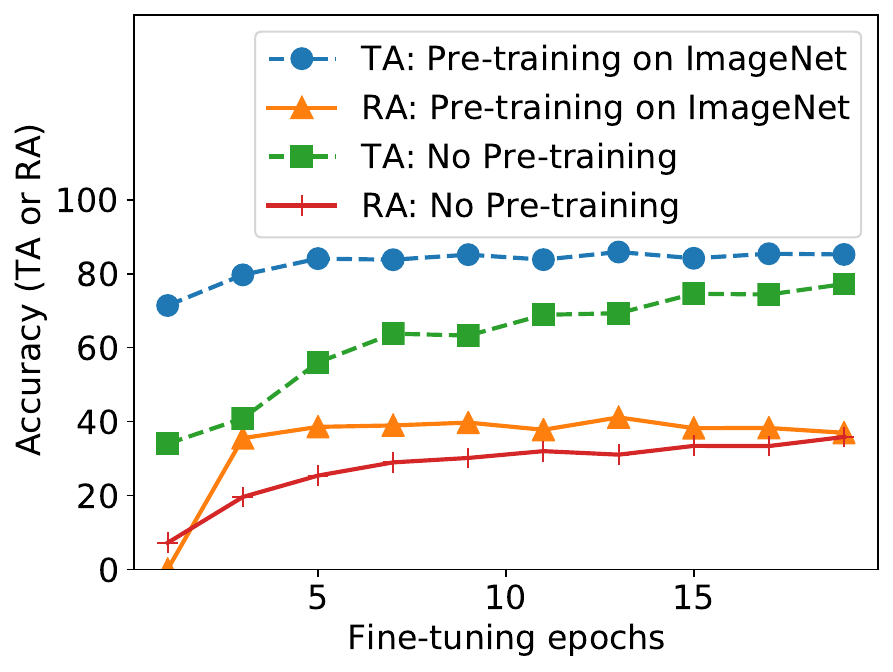} & \hspace*{-3mm}
\includegraphics[width=.23\textwidth,height=!]{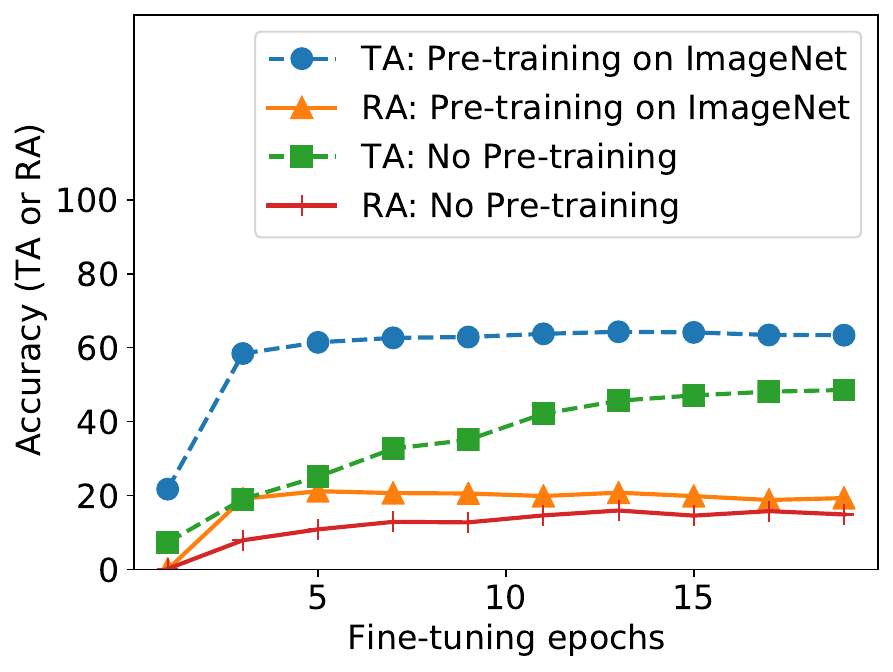}
\\
\hspace*{-3mm} \small{(a) Finetuning over CIFAR-10} & \hspace*{-3mm}
\small{(b) Finetuning over CIFAR-100}
\end{tabular}}
\caption{{Fine-tuning  ResNet-50 (pre-trained on ImageNet)  under CIFAR-10 (a) and CIFAR-100 (b). 
For compassion, adversarial training on CIFAR datasets from scratch (no pretrain) is also presented.
Here DAT-PGD is   used for both  pre-training and fine-tuning at $6$ computing nodes.
}}
  \label{fig: transfer}
\end{figure}

\paragraph{DAT under unlabeled data} 
In Table\,\ref{table: unlabel}, we report TA and RA of DAT  in the semi-supervised  setting \citep{carmon2019unlabeled} with the use of 500K  unlabeled  images mined from Tiny Images  \citep{carmon2019unlabeled}.  
As we can see, both DAT-PGD and DAT-FGSM scale well even if a $12 \times$ batch size is used across $12$ machines, each of which has a single GPU.  
We also compare  DAT-PGD with \citep{carmon2019unlabeled}  following the latter's  architecture, Wide ResNet 28-10.
We note that  DAT-PGD yields $89.37\%$ TA and $58.35\%$ AA. This is close to the reported  $89.69 \%$ TA and $59.53\%$ AA in RobustBench \citep{croce2020robustbench} built upon small-batch adversarial training.
Although the use of large data batch may cause a performance loss  due to the reduced number of training iterations, the use of data augmentation serves as a   remedy for such loss.

\paragraph{DAT from pre-training to fine-tuning.}
In Figure\,\ref{fig: transfer}, we investigate if a DAT pre-trained  model (ResNet-50) over a source dataset (ImageNet)  can offer a fast fine-tuning to a down-stream target dataset (CIFAR-10/100).
Here  we up-sample  a  CIFAR image to the same dimension of an ImageNet image before feeding it into the pre-trained model \citep{Shafahi2020Adversarially}.
Compared with the direct application of DAT to the target dataset (without pre-training), the pre-training enables a fast adaption  to the down-stream CIFAR task in both TA and RA within just  $3$ epochs. Thus, the scalability of DAT to large datasets and multiple   nodes offers a great potential to rapidly initialize an adversarially robust base model  in the  `pre-training + fine-tuning' paradigm.

\paragraph{Quantization effect in various communication protocols}
In
Table\,\ref{table: quadtization_imagenet},
we present how DAT is affected by gradient quantization. 
As we can see, when the number of bits is reduced, the communication cost and the amount of transmitted data are
saved, respectively. However, the use of an aggressive gradient quantization introduces a performance loss. 
For example, compared with
the case of using $32$ bits, 
the most aggressive quantization scheme  ($8$-bit $2$-sided quantization in Steps 4 and 7 of Algorithm\,\ref{alg: DAT_meta_form}) yields an RA drop around $4\%$
and $7\%$ for DAT-PGD and   DAT-FGSM, respectively. 
In particular,
 DAT-FGSM is more sensitive to the effect of gradient quantization than DAT-PGD. 
It is worth noting that 
our main communication configuration used in previous experiments is   Ring-AllReduce that calls for 1-sided (rather than 2-sided) quantization.
We further show that if a high performance computing (HPC) cluster of nodes (with NVLink high-speed GPU interconnect \citep{foley2017ultra}) is used, the communication cost can  be further reduced without causing performance loss.

 \begin{table}[htb]
\begin{center}
\caption{{Effect of gradient quantization on the performance of DAT  for various numbers of bits. 
The training and evaluation settings on (ImageNet, ResNet-50) are consistent with     Table\,\ref{table: overall}. The new performance metric `Data trans. (MB)' represents data transmitted per iteration in the unit MB. 
}
} 
\label{table: quadtization_imagenet}
\begin{threeparttable}
\resizebox{0.45\textwidth}{!}{
\begin{tabular}{c|c|c|c|c|c}
\hline
\hline
\mycomment{\multirow{2}{*}{\begin{tabular}[c]{@{}c@{}}Method\end{tabular}} & \multicolumn{5}{c}{\textbf{CIFAR-10, ResNet-18}
}
\\ 
\cline{2-6}  
& \# bits
& TA (\%) & RA (\%) & \begin{tabular}[c]{@{}c@{}}Comm.\\per epoch (s)\end{tabular} & \begin{tabular}[c]{@{}c@{}}Data transmitted
\\ per iteration (MB)\end{tabular}
 \\ \hline
DAT-PGD & $32$  & 
80.38 & 38.94 & 8.5  & 1278\\
DAT-PGD & $16$ 
& 
79.38 & 38.32 & 8.3  & 639\\
DAT-PGD & $8$  & 
78.18 & 37.34 & 4.3  & 320\\
DAT-PGD & $8$  ($2$-sided) & 
78.86 & 34.2 & 5.0  & 107\\
DAT-FGSM &  $32$  & 
75.58 & 40.92 & 8.5  & 1278\\
DAT-FGSM & $16$  & 
75.74 & 40.86 & 8.3 & 639  \\
DAT-FGSM &  $8$  & 
72.48 & 38.98 & 4.3 & 320  \\
DAT-FGSM & $8$ ($2$-sided) &
69.26 & 35.34 & 5.0 & 107  \\
\hline 
}
\multirow{2}{*}{\begin{tabular}[c]{@{}c@{}}Method\end{tabular}} & \multicolumn{5}{c}{\textbf{ImageNet, ResNet-50} 
} \\ 
\cline{2-6}  
& \# bits
& TA (\%) & RA (\%) & \begin{tabular}[c]{@{}c@{}}C (s)\end{tabular} & \begin{tabular}[c]{@{}c@{}}Data\\ trans.  (MB)\end{tabular}
 \\ \hline
DAT-PGD & $32$
& 63.75 & 38.45 & 898  & 2924\\
DAT-PGD & $16$ 
& 61.77 & 38.40 & 850  & 1462\\
DAT-PGD & $8$ 
& 56.53 & 37.90 & 592  & 731\\
DAT-PGD & $8$ ($2$-sided)
& 53.09 & 34.59 & 1091  & 244\\
DAT-PGD (HPC) & $32$
& 63.43 & 38.55 & 15  & 1074\\
\hline
DAT-FGSM & 32
& 58.02 & 40.27 & 859  & 2924\\
DAT-FGSM & 16 
& 54.71 & 39.29 & 849 & 1462  \\
DAT-FGSM & 8 
& 50.11 & 36.38 & 594 & 731  \\
DAT-FGSM & $8$ ($2$-sided)
& 48.27 & 33.20 & 1013 & 244  \\
DAT-FGSM (HPC) & 32
& 57.60 & 41.70 & 15  & 310\\
\hline
\hline 
\end{tabular}}
\end{threeparttable}
\end{center}
\end{table}

\section{Conclusions}
We proposed   {d}istributed {a}dversarial {t}raining (DAT)  to  scale up the training of adversarially robust DNNs over multiple machines. 
We   showed that   DAT   is general in  that it
enables large-batch min-max optimization and supports gradient compression and different learning regimes. 
We proved that under mild conditions, DAT  is guaranteed to converge to a first-order stationary point with a sub-linear rate. Empirically, we provided comprehensive experiment results to demonstrate the effectiveness and the usefulness of DAT in training robust DNNs with large datasets and multiple machines.
In the future, it will be worthwhile to examine the speedup achieved by DAT in the extreme training cases, e.g., using a significantly large number of  attack steps and distributed machines, and extremely large models. 

\begin{acknowledgements} 
Y. Zhang and S. Liu are supported by the Cisco Research   grant CG\# 70614511. P. Khanduri and M. Hong are supported in part by the NSF grants 1910385 and 1727757. We also thank Dr. Cho-Jui Hsieh for the helpful discussion on early ideas of this paper.
\end{acknowledgements}

\newpage
\clearpage
{{
\bibliography{refs}

\begin{thebibliography}{64}
\providecommand{\natexlab}[1]{#1}
\providecommand{\url}[1]{\texttt{#1}}
\expandafter\ifx\csname urlstyle\endcsname\relax
  \providecommand{\doi}[1]{doi: #1}\else
  \providecommand{\doi}{doi: \begingroup \urlstyle{rm}\Url}\fi

\bibitem[Alistarh et~al.(2017)Alistarh, Grubic, Li, Tomioka, and
  Vojnovic]{alistarh2017qsgd}
D.~Alistarh, D.~Grubic, J.~Li, R.~Tomioka, and M.~Vojnovic.
\newblock {QSGD}: Communication-efficient {SGD} via gradient quantization and
  encoding.
\newblock In \emph{Advances in Neural Information Processing Systems}, 2017.

\bibitem[Alom et~al.(2018)Alom, Taha, Yakopcic, Westberg, Sidike, Nasrin,
  Van~Esesn, Awwal, and Asari]{alom2018history}
M.~Z. Alom, T.~M. Taha, C.~Yakopcic, S.~Westberg, P.~Sidike, M.~S. Nasrin,
  B.~C. Van~Esesn, A.~A.~S. Awwal, and V.~K. Asari.
\newblock The history began from alexnet: A comprehensive survey on deep
  learning approaches.
\newblock \emph{arXiv preprint arXiv:1803.01164}, 2018.

\bibitem[Andriushchenko and Flammarion(2020)]{andriushchenko2020understanding}
M.~Andriushchenko and N.~Flammarion.
\newblock Understanding and improving fast adversarial training.
\newblock In \emph{NeurIPS}, 2020.

\bibitem[Athalye et~al.(2018)Athalye, Carlini, and
  Wagner]{athalye2018obfuscated}
A.~Athalye, N.~Carlini, and D.~Wagner.
\newblock Obfuscated gradients give a false sense of security: Circumventing
  defenses to adversarial examples.
\newblock \emph{arXiv preprint arXiv:1802.00420}, 2018.

\bibitem[Bernstein et~al.(2018)Bernstein, Wang, Azizzadenesheli, and
  Anandkumar]{bernstein2018signsgd}
J.~Bernstein, Y.-X. Wang, K.~Azizzadenesheli, and A.~Anandkumar.
\newblock sign{SGD}: compressed optimisation for non-convex problems.
\newblock \emph{arXiv preprint arXiv:1802.04434}, 2018.

\bibitem[Boopathy et~al.(2020)Boopathy, Liu, Zhang, Chen, Chang, and
  Daniel]{boopathy2020visual}
A.~Boopathy, S.~Liu, G.~Zhang, P.-Y. Chen, S.~Chang, and L.~Daniel.
\newblock Visual interpretability alone helps adversarial robustness, 2020.

\bibitem[Carlini and Wagner(2017)]{carlini2017towards}
N.~Carlini and D.~Wagner.
\newblock Towards evaluating the robustness of neural networks.
\newblock In \emph{IEEE Symposium on Security and Privacy (SP)}, pages 39--57.
  IEEE, 2017.

\bibitem[Carmon et~al.(2019)Carmon, Raghunathan, Schmidt, Duchi, and
  Liang]{carmon2019unlabeled}
Y.~Carmon, A.~Raghunathan, L.~Schmidt, J.~C. Duchi, and P.~S. Liang.
\newblock Unlabeled data improves adversarial robustness.
\newblock In \emph{Advances in Neural Information Processing Systems}, pages
  11190--11201, 2019.

\bibitem[Chen et~al.(2020)Chen, Ni, Lu, Cui, Chen, Sun, Wang, Venkataramani,
  Srinivasan, Zhang, and Gopalakrishnan]{chia20}
C.-Y. Chen, J.~Ni, S.~Lu, X.~Cui, P.-Y. Chen, X.~Sun, N.~Wang,
  S.~Venkataramani, V.~Srinivasan, W.~Zhang, and K.~Gopalakrishnan.
\newblock Scalecom: Scalable sparsified gradient compression for
  communication-efficient distributed training.
\newblock In \emph{Advances in Neural Information Processing Systems}, 2020.

\bibitem[Chen et~al.(2018)Chen, Liu, Sun, and Hong]{chen2018convergence}
X.~Chen, S.~Liu, R.~Sun, and M.~Hong.
\newblock On the convergence of a class of adam-type algorithms for non-convex
  optimization.
\newblock In \emph{ICLR}, 2018.

\bibitem[Cohen et~al.(2019)Cohen, Rosenfeld, and Kolter]{cohen2019certified}
J.~Cohen, E.~Rosenfeld, and Z.~Kolter.
\newblock Certified adversarial robustness via randomized smoothing.
\newblock In \emph{ICLR}, 2019.

\bibitem[Croce et~al.(2020)Croce, Andriushchenko, Sehwag, Flammarion, Chiang,
  Mittal, and Hein]{croce2020robustbench}
F.~Croce, M.~Andriushchenko, V.~Sehwag, N.~Flammarion, M.~Chiang, P.~Mittal,
  and M.~Hein.
\newblock Robustbench: a standardized adversarial robustness benchmark.
\newblock \emph{arXiv preprint arXiv:2010.09670}, 2020.

\bibitem[Dean et~al.(2012)Dean, Corrado, Monga, Chen, Devin, Mao, Ranzato,
  Senior, Tucker, Yang, et~al.]{dean2012large}
J.~Dean, G.~Corrado, R.~Monga, K.~Chen, M.~Devin, M.~Mao, M.~Ranzato,
  A.~Senior, P.~Tucker, K.~Yang, et~al.
\newblock Large scale distributed deep networks.
\newblock In \emph{Advances in Neural Information Processing Systems}, pages
  1223--1231, 2012.

\bibitem[Deng et~al.(2009)Deng, Dong, Socher, Li, Li, and
  Fei-Fei]{deng2009imagenet}
J.~Deng, W.~Dong, R.~Socher, L.-J. Li, K.~Li, and L.~Fei-Fei.
\newblock Imagenet: A large-scale hierarchical image database.
\newblock In \emph{Computer Vision and Pattern Recognition, 2009. CVPR 2009.
  IEEE Conference on}, pages 248--255. IEEE, 2009.

\bibitem[Eykholt et~al.(2018)Eykholt, Evtimov, Fernandes, Li, Rahmati, Tramer,
  Prakash, Kohno, and Song]{eykholt2018physical}
K.~Eykholt, I.~Evtimov, E.~Fernandes, B.~Li, A.~Rahmati, F.~Tramer, A.~Prakash,
  T.~Kohno, and D.~Song.
\newblock Physical adversarial examples for object detectors.
\newblock In \emph{Prof. of the 12th USENIX Workshop on Offensive Technologies
  (WOOT 18)}, 2018.

\bibitem[Foley and Danskin(2017)]{foley2017ultra}
D.~Foley and J.~Danskin.
\newblock Ultra-performance pascal {GPU} and {NVLink} interconnect.
\newblock \emph{IEEE Micro}, 37\penalty0 (2):\penalty0 7--17, 2017.

\bibitem[Gao et~al.(2019)Gao, Cai, Li, Hsieh, Wang, and
  Lee]{gao2019convergence}
R.~Gao, T.~Cai, H.~Li, C.-J. Hsieh, L.~Wang, and J.~D. Lee.
\newblock Convergence of adversarial training in overparametrized neural
  networks.
\newblock In \emph{Advances in Neural Information Processing Systems}, pages
  13029--13040, 2019.

\bibitem[Goodfellow et~al.(2014)Goodfellow, Pouget-Abadie, Mirza, Xu,
  Warde-Farley, Ozair, Courville, and Bengio]{goodfellow2014generative}
I.~Goodfellow, J.~Pouget-Abadie, M.~Mirza, B.~Xu, D.~Warde-Farley, S.~Ozair,
  A.~Courville, and Y.~Bengio.
\newblock Generative adversarial nets.
\newblock In \emph{NeurIPS}, pages 2672--2680, 2014.

\bibitem[Goodfellow et~al.(2015)Goodfellow, Shlens, and
  Szegedy]{Goodfellow2015explaining}
I.~Goodfellow, J.~Shlens, and C.~Szegedy.
\newblock Explaining and harnessing adversarial examples.
\newblock \emph{2015 ICLR}, arXiv preprint arXiv:1412.6572, 2015.

\bibitem[Goyal et~al.(2017)Goyal, Doll{\'a}r, Girshick, Noordhuis, Wesolowski,
  Kyrola, Tulloch, Jia, and He]{goyal2017accurate}
P.~Goyal, P.~Doll{\'a}r, R.~Girshick, P.~Noordhuis, L.~Wesolowski, A.~Kyrola,
  A.~Tulloch, Y.~Jia, and K.~He.
\newblock Accurate, large minibatch sgd: Training imagenet in 1 hour.
\newblock \emph{arXiv preprint arXiv:1706.02677}, 2017.

\bibitem[Hanada et~al.(2017)Hanada, Morita, Wada, and
  Fujisaki]{hanada2017simple}
K.~Hanada, R.~Morita, T.~Wada, and Y.~Fujisaki.
\newblock Simple synchronous and asynchronous algorithms for distributed
  minimax optimization.
\newblock \emph{SICE Journal of Control, Measurement, and System Integration},
  10\penalty0 (6):\penalty0 557--562, 2017.

\bibitem[He et~al.(2016{\natexlab{a}})He, Zhang, Ren, and Sun]{he2016deep}
K.~He, X.~Zhang, S.~Ren, and J.~Sun.
\newblock Deep residual learning for image recognition.
\newblock In \emph{Proceedings of the IEEE Conference on Computer Vision and
  Pattern Recognition}, pages 770--778, 2016{\natexlab{a}}.

\bibitem[He et~al.(2016{\natexlab{b}})He, Zhang, Ren, and Sun]{he2016identity}
K.~He, X.~Zhang, S.~Ren, and J.~Sun.
\newblock Identity mappings in deep residual networks.
\newblock In \emph{European Conference on Computer Vision}, pages 630--645.
  Springer, 2016{\natexlab{b}}.

\bibitem[Kang et~al.(2019)Kang, Sun, Hendrycks, Brown, and
  Steinhardt]{kang2019testing}
D.~Kang, Y.~Sun, D.~Hendrycks, T.~Brown, and J.~Steinhardt.
\newblock Testing robustness against unforeseen adversaries.
\newblock \emph{arXiv preprint arXiv:1908.08016}, 2019.

\bibitem[Kannan et~al.(2018)Kannan, Kurakin, and
  Goodfellow]{kannan2018adversarial}
H.~Kannan, A.~Kurakin, and I.~Goodfellow.
\newblock Adversarial logit pairing, 2018.

\bibitem[Keskar et~al.(2016)Keskar, Mudigere, Nocedal, Smelyanskiy, and
  Tang]{keskar2016large}
N.~S. Keskar, D.~Mudigere, J.~Nocedal, M.~Smelyanskiy, and P.~T.~P. Tang.
\newblock On large-batch training for deep learning: Generalization gap and
  sharp minima.
\newblock \emph{arXiv preprint arXiv:1609.04836}, 2016.

\bibitem[Krizhevsky(2014)]{krizhevsky2014one}
A.~Krizhevsky.
\newblock One weird trick for parallelizing convolutional neural networks.
\newblock \emph{arXiv preprint arXiv:1404.5997}, 2014.

\bibitem[Kurakin et~al.(2016)Kurakin, Goodfellow, and
  Bengio]{kurakin2016adversarial}
A.~Kurakin, I.~Goodfellow, and S.~Bengio.
\newblock Adversarial examples in the physical world.
\newblock \emph{arXiv preprint arXiv:1607.02533}, 2016.

\bibitem[Li et~al.(2020)Li, Wang, Jana, and Carin]{li2020towards}
B.~Li, S.~Wang, S.~Jana, and L.~Carin.
\newblock Towards understanding fast adversarial training.
\newblock \emph{arXiv preprint arXiv:2006.03089}, 2020.

\bibitem[Liu et~al.(2019{\natexlab{a}})Liu, Mroueh, Zhang, Cui, Yang, and
  Das]{liu2019decentralized}
M.~Liu, Y.~Mroueh, W.~Zhang, X.~Cui, T.~Yang, and P.~Das.
\newblock Decentralized parallel algorithm for training generative adversarial
  nets.
\newblock \emph{arXiv preprint arXiv:1910.12999}, 2019{\natexlab{a}}.

\bibitem[Liu et~al.(2019{\natexlab{b}})Liu, Mokhtari, Ozdaglar, Pattathil,
  Shen, and Zheng]{liu_aryan2019decentralized}
W.~Liu, A.~Mokhtari, A.~Ozdaglar, S.~Pattathil, Z.~Shen, and N.~Zheng.
\newblock A decentralized proximal point-type method for saddle point problems.
\newblock \emph{arXiv preprint arXiv:1910.14380}, 2019{\natexlab{b}}.

\bibitem[Lu et~al.(2020)Lu, Razaviyayn, Yang, Huang, and Hong]{lu2019snap}
S.~Lu, M.~Razaviyayn, B.~Yang, K.~Huang, and M.~Hong.
\newblock Finding second-order stationary points efficiently in smooth
  nonconvex linearly constrained optimization problems.
\newblock In \emph{NeurIPS}, 2020.

\bibitem[Madry et~al.(2017)Madry, Makelov, Schmidt, Tsipras, and
  Vladu]{madry2017towards}
A.~Madry, A.~Makelov, L.~Schmidt, D.~Tsipras, and A.~Vladu.
\newblock Towards deep learning models resistant to adversarial attacks.
\newblock \emph{arXiv preprint arXiv:1706.06083}, 2017.

\bibitem[Moosavi-Dezfooli et~al.(2019)Moosavi-Dezfooli, Fawzi, Uesato, and
  Frossard]{moosavi2019robustness}
S.-M. Moosavi-Dezfooli, A.~Fawzi, J.~Uesato, and P.~Frossard.
\newblock Robustness via curvature regularization, and vice versa.
\newblock In \emph{CVPR}, 2019.

\bibitem[Notarnicola et~al.(2018)Notarnicola, Franceschelli, and
  Notarstefano]{notarnicola2018duality}
I.~Notarnicola, M.~Franceschelli, and G.~Notarstefano.
\newblock A duality-based approach for distributed min--max optimization.
\newblock \emph{IEEE Transactions on Automatic Control}, 64\penalty0
  (6):\penalty0 2559--2566, 2018.

\bibitem[Papernot et~al.(2016)Papernot, Goodfellow, Sheatsley, Feinman, and
  McDaniel]{papernot2016cleverhans}
N.~Papernot, I.~Goodfellow, R.~Sheatsley, R.~Feinman, and P.~McDaniel.
\newblock cleverhans v1.0.0: an adversarial machine learning library.
\newblock \emph{arXiv preprint arXiv:1610.00768}, 2016.

\bibitem[Qin et~al.(2019)Qin, Martens, Gowal, Krishnan, Dvijotham, Fawzi, De,
  Stanforth, and Kohli]{qin2019adversarial}
C.~Qin, J.~Martens, S.~Gowal, D.~Krishnan, K.~Dvijotham, A.~Fawzi, S.~De,
  R.~Stanforth, and P.~Kohli.
\newblock Adversarial robustness through local linearization.
\newblock In \emph{NeurIPS}, 2019.

\bibitem[Reddi et~al.(2018)Reddi, Kale, and Kumar]{reddi2019convergence}
S.~J. Reddi, S.~Kale, and S.~Kumar.
\newblock On the convergence of adam and beyond.
\newblock In \emph{ICLR}, 2018.

\bibitem[Rice et~al.(2020)Rice, Wong, and Kolter]{rice2020overfitting}
L.~Rice, E.~Wong, and J.~Z. Kolter.
\newblock Overfitting in adversarially robust deep learning.
\newblock \emph{arXiv preprint arXiv:2002.11569}, 2020.

\bibitem[Ross and Doshi-Velez(2018)]{ross2018improving}
A.~S. Ross and F.~Doshi-Velez.
\newblock Improving the adversarial robustness and interpretability of deep
  neural networks by regularizing their input gradients.
\newblock In \emph{AAAI}, 2018.

\bibitem[Salman et~al.(2020)Salman, Yang, Li, Zhang, Zhang, Razenshteyn, and
  Bubeck]{salman2019provably}
H.~Salman, G.~Yang, J.~Li, P.~Zhang, H.~Zhang, I.~Razenshteyn, and S.~Bubeck.
\newblock Provably robust deep learning via adversarially trained smoothed
  classifiers.
\newblock In \emph{NeurIPS}, 2020.

\bibitem[Shafahi et~al.(2019)Shafahi, Najibi, Ghiasi, Xu, Dickerson, Studer,
  Davis, Taylor, and Goldstein]{shafahi2019adversarial}
A.~Shafahi, M.~Najibi, M.~A. Ghiasi, Z.~Xu, J.~Dickerson, C.~Studer, L.~S.
  Davis, G.~Taylor, and T.~Goldstein.
\newblock Adversarial training for free!
\newblock In \emph{NeurIPS}, 2019.

\bibitem[Shafahi et~al.(2020)Shafahi, Saadatpanah, Zhu, Ghiasi, Studer, Jacobs,
  and Goldstein]{Shafahi2020Adversarially}
A.~Shafahi, P.~Saadatpanah, C.~Zhu, A.~Ghiasi, C.~Studer, D.~Jacobs, and
  T.~Goldstein.
\newblock Adversarially robust transfer learning.
\newblock In \emph{ICLR}, 2020.

\bibitem[Sinha et~al.(2018)Sinha, Namkoong, and Duchi]{sinha2018certifying}
A.~Sinha, H.~Namkoong, and J.~Duchi.
\newblock Certifiable distributional robustness with principled adversarial
  training.
\newblock In \emph{ICLR}, 2018.

\bibitem[Srivastava et~al.(2011)Srivastava, Nedi{\'c}, and
  Stipanovi{\'c}]{srivastava2011distributed}
K.~Srivastava, A.~Nedi{\'c}, and D.~Stipanovi{\'c}.
\newblock Distributed min-max optimization in networks.
\newblock In \emph{Proc. of the 17th International Conference on Digital Signal
  Processing (DSP)}, 2011.

\bibitem[Stanforth et~al.(2019)Stanforth, Fawzi, Kohli,
  et~al.]{stanforth2019labels}
R.~Stanforth, A.~Fawzi, P.~Kohli, et~al.
\newblock Are labels required for improving adversarial robustness?
\newblock \emph{arXiv preprint arXiv:1905.13725}, 2019.

\bibitem[Stich et~al.(2018)Stich, Cordonnier, and Jaggi]{stich2018sparsified}
S.~U. Stich, J.-B. Cordonnier, and M.~Jaggi.
\newblock Sparsified sgd with memory.
\newblock In \emph{NeurIPS}, 2018.

\bibitem[Szegedy et~al.(2013)Szegedy, Zaremba, Sutskever, Bruna, Erhan,
  Goodfellow, and Fergus]{szegedy2013intriguing}
C.~Szegedy, W.~Zaremba, I.~Sutskever, J.~Bruna, D.~Erhan, I.~Goodfellow, and
  R.~Fergus.
\newblock Intriguing properties of neural networks.
\newblock \emph{arXiv preprint arXiv:1312.6199}, 2013.

\bibitem[Tsaknakis et~al.(2020)Tsaknakis, Hong, and
  Liu]{tsaknakis2020decentralized}
I.~Tsaknakis, M.~Hong, and S.~Liu.
\newblock Decentralized min-max optimization: Formulations, algorithms and
  applications in network poisoning attack.
\newblock In \emph{ICASSP}, 2020.

\bibitem[Wang et~al.(2019{\natexlab{a}})Wang, Tantia, Ballas, and
  Rabbat]{wang2019slowmo}
J.~Wang, V.~Tantia, N.~Ballas, and M.~Rabbat.
\newblock Slowmo: Improving communication-efficient distributed sgd with slow
  momentum.
\newblock \emph{arXiv preprint arXiv:1910.00643}, 2019{\natexlab{a}}.

\bibitem[Wang et~al.(2019{\natexlab{b}})Wang, Ma, Bailey, Yi, Zhou, and
  Gu]{wang2019convergence}
Y.~Wang, X.~Ma, J.~Bailey, J.~Yi, B.~Zhou, and Q.~Gu.
\newblock On the convergence and robustness of adversarial training.
\newblock In \emph{ICML}, pages 6586--6595, 2019{\natexlab{b}}.

\bibitem[Wangni et~al.(2018)Wangni, Wang, Liu, and Zhang]{wangni2018gradient}
J.~Wangni, J.~Wang, J.~Liu, and T.~Zhang.
\newblock Gradient sparsification for communication-efficient distributed
  optimization.
\newblock In \emph{NeurIPS}, 2018.

\bibitem[Wong and Kolter(2017)]{wong2017provable}
E.~Wong and J.~Z. Kolter.
\newblock Provable defenses against adversarial examples via the convex outer
  adversarial polytope.
\newblock \emph{arXiv preprint arXiv:1711.00851}, 2017.

\bibitem[Wong et~al.(2020)Wong, Rice, and Kolter]{Wong2020Fast}
E.~Wong, L.~Rice, and J.~Z. Kolter.
\newblock Fast is better than free: Revisiting adversarial training.
\newblock In \emph{International Conference on Learning Representations}, 2020.

\bibitem[Xie et~al.(2019)Xie, Wu, Maaten, Yuille, and He]{xie2019feature}
C.~Xie, Y.~Wu, L.~v.~d. Maaten, A.~L. Yuille, and K.~He.
\newblock Feature denoising for improving adversarial robustness.
\newblock In \emph{CVPR}, 2019.

\bibitem[Xu et~al.(2019{\natexlab{a}})Xu, Ma, Liu, Deb, Liu, Tang, and
  Jain]{xu2019adversarial}
H.~Xu, Y.~Ma, H.~Liu, D.~Deb, H.~Liu, J.~Tang, and A.~Jain.
\newblock Adversarial attacks and defenses in images, graphs and text: A
  review.
\newblock \emph{arXiv preprint arXiv:1909.08072}, 2019{\natexlab{a}}.

\bibitem[Xu et~al.(2019{\natexlab{b}})Xu, Zhang, Liu, Fan, Sun, Chen, Chen,
  Wang, and Lin]{xu2019evading}
K.~Xu, G.~Zhang, S.~Liu, Q.~Fan, M.~Sun, H.~Chen, P.-Y. Chen, Y.~Wang, and
  X.~Lin.
\newblock Evading real-time person detectors by adversarial t-shirt.
\newblock \emph{arXiv preprint arXiv:1910.11099}, 2019{\natexlab{b}}.

\bibitem[You et~al.(2017{\natexlab{a}})You, Gitman, and Ginsburg]{you2017large}
Y.~You, I.~Gitman, and B.~Ginsburg.
\newblock Large batch training of convolutional networks.
\newblock \emph{arXiv preprint arXiv:1708.03888}, 2017{\natexlab{a}}.

\bibitem[You et~al.(2017{\natexlab{b}})You, Gitman, and
  Ginsburg]{you2017scaling}
Y.~You, I.~Gitman, and B.~Ginsburg.
\newblock Scaling {SGD} batch size to 32k for imagenet training.
\newblock \emph{arXiv preprint arXiv:1708.03888}, 6, 2017{\natexlab{b}}.

\bibitem[You et~al.(2018)You, Zhang, Hsieh, Demmel, and
  Keutzer]{you2018imagenet}
Y.~You, Z.~Zhang, C.-J. Hsieh, J.~Demmel, and K.~Keutzer.
\newblock Imagenet training in minutes.
\newblock In \emph{Proc. of the 47th International Conference on Parallel
  Processing}. ACM, 2018.

\bibitem[You et~al.(2019)You, Li, Reddi, Hseu, Kumar, Bhojanapalli, Song,
  Demmel, Keutzer, and Hsieh]{you2019large}
Y.~You, J.~Li, S.~Reddi, J.~Hseu, S.~Kumar, S.~Bhojanapalli, X.~Song,
  J.~Demmel, K.~Keutzer, and C.-J. Hsieh.
\newblock Large batch optimization for deep learning: Training bert in 76
  minutes.
\newblock In \emph{ICLR}, 2019.

\bibitem[Yu et~al.(2019)Yu, Wu, and Huang]{yu2019double}
Y.~Yu, J.~Wu, and L.~Huang.
\newblock Double quantization for communication-efficient distributed
  optimization.
\newblock In \emph{NeurIPS}, 2019.

\bibitem[Zhang et~al.(2019{\natexlab{a}})Zhang, Zhang, Lu, Zhu, and
  Dong]{zhang2019you}
D.~Zhang, T.~Zhang, Y.~Lu, Z.~Zhu, and B.~Dong.
\newblock You only propagate once: Accelerating adversarial training via
  maximal principle.
\newblock \emph{arXiv preprint arXiv:1905.00877}, 2019{\natexlab{a}}.

\bibitem[Zhang et~al.(2019{\natexlab{b}})Zhang, Yu, Jiao, Xing, Ghaoui, and
  Jordan]{zhang2019theoretically}
H.~Zhang, Y.~Yu, J.~Jiao, E.~P. Xing, L.~E. Ghaoui, and M.~I. Jordan.
\newblock Theoretically principled trade-off between robustness and accuracy.
\newblock \emph{ICLR}, 2019{\natexlab{b}}.

\end{thebibliography}
}}

\newpage
\clearpage

\onecolumn
\setcounter{section}{0}

\section*{Supplementary Materials}

\setcounter{section}{0}
\setcounter{figure}{0}
\makeatletter 
\renewcommand{\thefigure}{A\@arabic\c@figure}
\makeatother
\setcounter{table}{0}
\renewcommand{\thetable}{A\arabic{table}}
\setcounter{mylemma}{0}
\renewcommand{\themylemma}{A\arabic{mylemma}}
\setcounter{algorithm}{0}
\renewcommand{\thealgorithm}{A\arabic{algorithm}}
\setcounter{equation}{0}
\renewcommand{\theequation}{A\arabic{equation}}

%\setcounter{theorem}{0}
%\renewcommand{\thetheorem}{A\arabic{theorem}}
%\aistatstitle{Supplementary Materials}
 
\section{DAT Algorithm Framework}
\label{app: alg_DAT}

\begin{minipage}{0.95\textwidth}
\centering
\begin{algorithm}[H]
\caption{Distributed adversarial training (DAT) for solving problem \eqref{eq: prob_DAT}}
\label{alg: DAT}
\begin{algorithmic}[1]
  \State Initial $\boldsymbol{\theta}_1$,  dataset $\mathcal D^{(i)}$  for each of $M$ workers,   and $T$ iterations 
\For{Iteration $t =  1,2,\ldots, T$}
% \hspace*{-0.598in}\vbox{\colorbox{non-photoblue}{\vbox{
 \For{Worker $i = 1,2, \ldots, M$}   \Comment{\textcolor{blue}{Worker}}
\State  Draw a finite-size data batch  $\mathcal B_{t}^{i} \subseteq \mathcal D^{(i)} $
\State  For each data sample  $\mathbf x \in \mathcal B_{t}^{i}$, call for an \textit{inner maximization oracle}:
{\small\begin{align}\label{eq: inner_max_alg}
\boldsymbol{\delta}_t^{(i)}(\mathbf x) \Def \argmax_{ \| \boldsymbol{\delta} \|_\infty \leq \epsilon }  ~  \phi(\boldsymbol{\theta}_{t}, \boldsymbol{\delta}; \mathbf  x),
\end{align}}%
\hspace*{0.4in} where we omit the label or possible pseudo-label $y$ of $\mathbf x$ for brevity %if it is used
 % \MH{``if it is used" what does it mean?}
\State Computing local gradient of $f_i$ in \eqref{eq: prob_DAT} with respect to $\boldsymbol \theta$ given perturbed samples: 
{\small\begin{align}\label{eq: stoch_grad_batch}
    \mathbf g_t^{(i)} =  \lambda \mathbb E_{\mathbf x \in \mathcal B_t^{(i)}}  [ \nabla_{\boldsymbol \theta}\ell(\boldsymbol \theta_{t}; \mathbf x)  ] + \mathbb E_{\mathbf x \in \mathcal B_t^{(i)}}  [ \nabla_{\boldsymbol \theta} \phi(\boldsymbol \theta_{t}; \mathbf x + \boldsymbol \delta_t^{(i)}(\mathbf x) )  ]
\end{align}}
\State (\textit{Optional}) Call for \textit{gradient quantizer} $Q(\cdot)$ and transmit
  $ Q(\mathbf g_t^{(i)})$ to   server
\EndFor
  \State Gradient aggregation at  server:  \hfill \Comment{\textcolor{red}{Server}}
 {\small \begin{align}\label{eq: grad_agg}
  \hat {\mathbf g}_t = \textstyle \frac{1}{M} \sum_{i=1}^M Q(\mathbf g_t^{(i)})
  \end{align}}%
\State (\textit{Optional}) Call for \textit{gradient quantizer}   $\hat{\mathbf g}_t \leftarrow Q(\hat{\mathbf g}_t) $,
  and transmit  
  $ \hat {\mathbf g}_t$ to workers: 
  
\For{Worker $i = 1,2, \ldots, M$}
 \hfill \Comment{\textcolor{blue}{Worker}}
  \State Call for an \textit{outer minimization oracle} $\mathcal A(\cdot)$ to update $\boldsymbol \theta$:  
 { \small \begin{align}\label{eq: outer_min}
      \boldsymbol \theta_{t+1} = \mathcal A(\boldsymbol \theta_{t}   \hat {\mathbf g}_t, \eta_t), \quad \quad  \text{$\eta_t$ is learning rate}
  \end{align}}%
\EndFor

  \EndFor
  \end{algorithmic}
% ---------------
% Now insert all to comments that we desire on specific line numbers:
% \AddNote[blue]{4}{7}{Worker}
% \AddNote[blue]{9}{10}{Server}
% ---------------
\end{algorithm}
\end{minipage}

\paragraph{Additional details on gradient quantization}
Let $b$ denote the number of bits ($b \leq 32$), and  thus there exists $s = 2^b$  quantization levels. We specify the gradient quantization operation $Q(\cdot)$ in Algorithm\,\ref{alg: DAT} as the   \textit{randomized quantizer}   \citep{alistarh2017qsgd,yu2019double}.  Formally,
the  quantization operation at the $i$th coordinate of a vector $\mathbf g$ is given by \citep{alistarh2017qsgd}
{\small\begin{align}\label{eq: rand_q}
    Q( g_i) = \| \mathbf g \|_2 \cdot \mathrm{sign}(g_i) \cdot \xi_i(g_i,s),  \quad \forall i \in \{ 1,2, \ldots, \modeldim \}.
\end{align}}%
In \eqref{eq: rand_q},  $\xi_i(g_i,s)$ is a random number drawn as follows. Given $|g_i|/\| \mathbf g \|_2 \in [l/s, (l+1)/s]$ for some $l \in \mathbb N^+$ and $0 \leq l < s$, we  then  have
{\small\begin{align}\label{eq: xi}
\xi_i(g_i,s) = \left \{ 
    \begin{array}{ll}
      l/s   & \text{with probability $1 - (s |g_i|/\| \mathbf g \|_2 - l)$}  \\
      (l+1)/s   &  \text{with probability $ (s |g_i|/\| \mathbf g \|_2 - l)$},
    \end{array}
    \right.
\end{align}}%
where $|a|$ denotes the absolute value of a scalar $a$, and  $\| \mathbf a \|_2$ denotes the $\ell_2$ norm of a vector $\mathbf a$.
The rationale behind using \eqref{eq: rand_q} is that  $Q(g_i)$ is an \textit{unbiased} estimate of $g_i$, namely,
$
\mathbb E_{\xi_i(g_i, s)}[Q(g_i)] =  g_i
$, with bounded variance. Moreover, we at most  need
 $(32 + \modeldim + b \modeldim  )$ bits to transmit the quantized  $Q(\mathbf g)$, where $32$ bits for $\| \mathbf g \|_2$, $1$ bit for sign of $g_i$ and $b$ bits for $\xi_i(g_i,s)$,
whereas it needs $32\modeldim$ bits for a single-precision %\Gaoyuan{single-precision} \LH{(double)} 
$\mathbf g$. Clearly, a small $b$ saves the communication cost.
We note that  if every worker performs as a server in DAT, then the quantization operation at Step\,10 of Algorithm\,\ref{alg: DAT} is no longer needed. In this case, the communication network becomes fully connected. With synchronized communication, this is favored  for   training DNNs under the All-reduce operation.

\newpage

\section{Theoretical Results}\label{app: thr_1}

In this section, we will quantify the convergence behaviour of the proposed DAT algorithm. First, we define the following notations:
\begin{equation}\label{eq.defPhi}
\Phi_i(\btheta,\bx)=\max_{\|\bdelta^{(i)}\|_{\infty}\le\epsilon}\phi(\btheta,\bdelta^{(i)};\bx),\quad \textrm{and}\quad
    \Phi_i(\btheta)=\mathbb{E}_{\bx\in\mathcal{D}^{(i)}}\Phi_i(\btheta ; \bx).
\end{equation}
We also define
\begin{equation}
    l_i(\btheta)=\mathbb{E}_{\bx\in\mathcal{D}^{(i)}} l(\btheta ; \bx),
\end{equation}
where the label $y$ of $\mathbf x$ is omitted for labeled data. 
Then, the objective function of  problem \eqref{eq: prob_DAT} can be expressed in the  compact way
\begin{equation}
    \Psi(\btheta)=\frac{1}{M}\sum^M_{i=1}\lambda l_i(\btheta)+\Phi_i(\btheta)
\end{equation}
and the optimization problem is then given by $\min_{\btheta}\Psi(\btheta)$. 

Therefore, it is clear that if a point $\btheta^{\star}$ satisfies 
\begin{equation}
    \|\nabla_{\btheta} \Psi(\btheta^{\star})\|\le\convacc,
\end{equation} then we say $\btheta^{\star}$ is a $\convacc$ approximate first-order stationary point (FOSP) of problem  \eqref{eq: prob_DAT}.

Prior to delving into the convergence analysis of DAT, we   make the following assumptions.

\subsection{Assumptions}\label{app: assumption}

A1. Assume objective function has layer-wise Lipschitz continuous gradients with constant $L_i$ for each layer
\begin{align}
\|\nabla_i \Psi(\btheta_{\cdot,i})-\nabla_i \Psi(\btheta'_{\cdot,i})\|\le L_i\|\btheta_{\cdot,i}-\btheta'_{\cdot,i}\|,\forall i\in[h].
\end{align}
where $\nabla_i\Psi(\btheta_{\cdot,i})$ denotes the gradient w.r.t. the variables at the $i$th layer. Also, we assume that $\Psi(\btheta)$ is lower bounded, i.e., $\Psi^{\star}:=\min_{\btheta} \Psi(\btheta)>-\infty$ and bounded gradient estimate, i.e., $\|\nabla \bg^{(i)}\|\le G$.

A2. Assume that $\phi(\btheta,\bdelta;\bx)$ is strongly concave with respect to $\bdelta$ with parameter $\mu$ and has the following gradient Lipschitz continuity with constant $L_{\phi}$:
\begin{equation}
   \|\nabla_{\btheta}\phi(\btheta,\bdelta;\bx)-\nabla_{\btheta}\phi(\btheta,\bdelta';\bx)\|\le L_{\phi}\|\bdelta-\bdelta'\|.
\end{equation}

A3. Assume that the gradient estimate is unbiased and has bounded variance, i.e.,
\begin{align}
    \mathbb{E}_{\bx\in\mathcal{B}^{(i)}} [\nabla_{\btheta} l(\btheta;\bx)] =& \nabla_{\btheta} l(\btheta), \forall i,
\\
    \mathbb{E}_{\bx\in\mathcal{B}^{(i)}} [\nabla_{\btheta} \Phi(\btheta;\bx)]=&\nabla_{\btheta} \Phi(\btheta),\forall i, 
\end{align}
where recall that $\mathcal{B}^{(i)}$ denotes a data batch used at worker $i$,
$\nabla_{\btheta} l(\btheta):=\frac{1}{M}\sum^M_{i=1}\nabla_{\btheta} l_i(\btheta)$ and $\nabla_{\btheta} \Phi(\btheta):=\frac{1}{M}\sum^M_{i=1}\nabla_{\btheta} \Phi_i(\btheta)$; and \begin{align}
    \mathbb{E}_{\bx\in\mathcal{B}^{(i)}}&\|\nabla_{\btheta} l(\btheta;\bx)-\nabla_{\btheta} l(\btheta)\|^2\le\sigma^2, \forall i \\ \mathbb{E}_{\bx\in\mathcal{B}^{(i)}}&\|\nabla_{\btheta} \Phi(\btheta;\bx)-\nabla_{\btheta} \Phi(\btheta)\|^2\le\sigma^2,\forall i.
\end{align}
Further, we define a component-wise bounded variance of the gradient estimate
\begin{align}
    \mathbb{E}_{\bx\in\mathcal{B}^{(i)}}&\|[\nabla_{\btheta} l(\btheta;\bx)]_{jk}-[\nabla_{\btheta} l(\btheta)]_{jk}\|^2\le\sigma^2_{jk}, \forall i,
    \\
    \mathbb{E}_{\bx\in\mathcal{B}^{(i)}}&\|[\nabla_{\btheta} \Phi(\btheta;\bx)]_{jk}-[\nabla_{\btheta} \Phi(\btheta)]_{jk}\|^2\le\sigma'^2_{jk},\forall i,
\end{align}
where  $j$ denotes the index of the layer, and $k$ denotes the index of entry at each layer. Under A3, we have $\sum^h_{j=1}\sum^{d_j}_{k=1}\max\{\sigma^2_{jk},\sigma'^2_{jk}\}\le\sigma^2$

A4. Assume that the component wise compression error has bounded variance
\begin{equation}
\mathbb{E}[(Q([\mathbf{g}^{(i)}(\btheta)]_{jk})-[\mathbf{g}^{(i)}(\btheta)]_{jk})^2]\le \delta^2_{jk},\forall i.
\end{equation}
The assumption A4 is satisfied as the randomized quantization is used \citep[Lemma\,3.1]{alistarh2017qsgd}.

\subsection{Oracle of maximization}

In practice, $\Phi_i(\btheta;\bx),\forall i$ may not be obtained, since the inner loop needs to iterate by the infinite number of iterations to achieve the exact maximum point. Therefore, we allow some numerical error term resulted in the maximization step at \eqref{eq: inner_max_alg}. This consideration makes the convergence analysis more realistic. 

First, we have the following criterion to measure the closeness of the approximate maximizer to the optimal one.

\begin{definition}
Under A2, if point $\bdelta(\bx)$ satisfies
\begin{equation}\label{eq.conde}
    \max_{\bdelta\le\|\epsilon\|}\left\langle \bdelta-\bdelta^*(\bx),\nabla_{\bdelta} \phi(\btheta,\bdelta^*(\bx);\bx)\right\rangle\le\varepsilon
\end{equation}
then, it is a $\varepsilon$ approximate solution to $ \bdelta^*(\bx)$, where
\begin{equation}\label{eq.optdelta}
    \bdelta^*(\bx):=\argmax_{\bdelta\le\|\epsilon\|}\phi(\btheta,\bdelta;\bx).
\end{equation}
and $\bx$ denotes the sampled data.

\end{definition}
Condition \eqref{eq.conde} is standard for defining approximate solutions of an optimization problem over a compact feasible set and has been widely studied in \citep{wang2019convergence,lu2019snap}.

In the following, we can show that when the inner maximization problem is solved accurately enough, the gradients of function $\phi(\btheta,\bdelta(\bx);\bx)$ at  $\bdelta(\bx)$ and $\bdelta^*(\bx)$ are also close. A similar claim of this fact has been shown in \citep[Lemma 2]{wang2019convergence}. For completeness of the analysis, we provide the specific statement for our problem here and give the detailed proof as well.
\begin{lemma}\label{le.vererror}
Let $\bdelta^{(k)}_t$ be the $(\mu\varepsilon)/L^2_{\phi}$  approximate solution of the inner maximization problem for worker $k$, i.e., $\max_{\bdelta^{(k)}}\phi(\btheta,\bdelta^{(k)};\bx_t)$, where $\bx_t$ denotes the sampled data at the $t$th iteration of DAT. Under A2, we have
\begin{equation}
\left\|\nabla_{\btheta} \phi\left(\btheta_t,\bdelta^{(k)}_t(\bx_t);\bx_t\right)-\nabla_{\btheta} \phi\left(\btheta_t,(\bdelta^*)^{(k)}_t(\bx_t);\bx_t\right)\right\|^2\le \varepsilon. \label{eq.maxorl0}
\end{equation}
\end{lemma}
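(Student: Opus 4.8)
\textbf{Proof proposal for Lemma~\ref{le.vererror}.}

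The plan is to combine the approximate-optimality condition \eqref{eq.conde} with the strong concavity of $\phi(\btheta,\cdot;\bx)$ to first control the distance $\|\bdelta^{(k)}_t(\bx_t)-(\bdelta^*)^{(k)}_t(\bx_t)\|$ in terms of $\varepsilon$, and then push that bound through the Lipschitz continuity of $\nabla_{\btheta}\phi$ in $\bdelta$ (assumption A2) to get \eqref{eq.maxorl0}. First I would invoke $\mu$-strong concavity of $\phi(\btheta_t,\cdot;\bx_t)$ together with the first-order optimality of $(\bdelta^*)^{(k)}_t$: since $(\bdelta^*)^{(k)}_t$ maximizes over the convex set $\{\|\bdelta\|_\infty\le\epsilon\}$, one has $\langle \bdelta - (\bdelta^*)^{(k)}_t,\ \nabla_{\bdelta}\phi(\btheta_t,(\bdelta^*)^{(k)}_t;\bx_t)\rangle\le 0$ for every feasible $\bdelta$. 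Combining this variational inequality with the quadratic growth implied by strong concavity,
\begin{equation}
\phi(\btheta_t,\bdelta;\bx_t)\le \phi(\btheta_t,(\bdelta^*)^{(k)}_t;\bx_t)+\langle \nabla_{\bdelta}\phi(\btheta_t,(\bdelta^*)^{(k)}_t;\bx_t),\bdelta-(\bdelta^*)^{(k)}_t\rangle-\frac{\mu}{2}\|\bdelta-(\bdelta^*)^{(k)}_t\|^2,
\end{equation}
evaluated at $\bdelta=\bdelta^{(k)}_t$, and using that $\bdelta^{(k)}_t$ satisfies \eqref{eq.conde} with accuracy $(\mu\varepsilon)/L_\phi^2$ (so that $\phi(\btheta_t,(\bdelta^*)^{(k)}_t;\bx_t)-\phi(\btheta_t,\bdelta^{(k)}_t;\bx_t)\le (\mu\varepsilon)/L_\phi^2$, which follows from concavity plus \eqref{eq.conde}), I get $\frac{\mu}{2}\|\bdelta^{(k)}_t-(\bdelta^*)^{(k)}_t\|^2\le (\mu\varepsilon)/L_\phi^2$, hence $\|\bdelta^{(k)}_t-(\bdelta^*)^{(k)}_t\|^2\le 2\varepsilon/L_\phi^2$.

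The second step is immediate: by the gradient-Lipschitz property in assumption A2,
\begin{equation}
\left\|\nabla_{\btheta}\phi(\btheta_t,\bdelta^{(k)}_t(\bx_t);\bx_t)-\nabla_{\btheta}\phi(\btheta_t,(\bdelta^*)^{(k)}_t(\bx_t);\bx_t)\right\|^2\le L_\phi^2\|\bdelta^{(k)}_t(\bx_t)-(\bdelta^*)^{(k)}_t(\bx_t)\|^2\le L_\phi^2\cdot\frac{2\varepsilon}{L_\phi^2}=2\varepsilon,
\end{equation}
and after absorbing the harmless constant (or tracking constants precisely, replacing the stated inner accuracy by $(\mu\varepsilon)/(2L_\phi^2)$) one arrives at the claimed bound \eqref{eq.maxorl0}. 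The bookkeeping of constants is the only place to be a little careful, and it merely dictates the exact accuracy level demanded of the oracle.

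The main obstacle — really the only subtle point — is the passage from the variational-inequality form of approximate optimality in \eqref{eq.conde} to a bound on the \emph{function-value gap} $\phi(\btheta_t,(\bdelta^*)^{(k)}_t;\bx_t)-\phi(\btheta_t,\bdelta^{(k)}_t;\bx_t)$, which is what feeds into the strong-concavity inequality. This uses concavity of $\phi(\btheta_t,\cdot;\bx_t)$: for a concave function, $\phi(\btheta_t,(\bdelta^*)^{(k)}_t;\bx_t)-\phi(\btheta_t,\bdelta^{(k)}_t;\bx_t)\le \langle \nabla_{\bdelta}\phi(\btheta_t,\bdelta^{(k)}_t;\bx_t),(\bdelta^*)^{(k)}_t-\bdelta^{(k)}_t\rangle$; but \eqref{eq.conde} is phrased with the gradient at $(\bdelta^*)^{(k)}_t$ rather than at $\bdelta^{(k)}_t$, so one needs to either symmetrize (use the monotonicity of $-\nabla_{\bdelta}\phi$, i.e. $\langle \bdelta^{(k)}_t-(\bdelta^*)^{(k)}_t, \nabla_{\bdelta}\phi(\btheta_t,\bdelta^{(k)}_t;\bx_t)-\nabla_{\bdelta}\phi(\btheta_t,(\bdelta^*)^{(k)}_t;\bx_t)\rangle\le -\mu\|\bdelta^{(k)}_t-(\bdelta^*)^{(k)}_t\|^2$) or argue directly with strong concavity in the form that combines both inequalities. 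I expect the cleanest route is to add the strong-concavity inequality centered at $(\bdelta^*)^{(k)}_t$ and the feasibility/optimality of $(\bdelta^*)^{(k)}_t$, yielding $\frac{\mu}{2}\|\bdelta^{(k)}_t-(\bdelta^*)^{(k)}_t\|^2\le \langle (\bdelta^*)^{(k)}_t-\bdelta^{(k)}_t,\nabla_{\bdelta}\phi(\btheta_t,(\bdelta^*)^{(k)}_t;\bx_t)\rangle\le \varepsilon'$ directly, where $\varepsilon'$ is the oracle accuracy in \eqref{eq.conde}; this bypasses the function-value detour entirely. I would follow the argument in \citep[Lemma~2]{wang2019convergence} for the template and just adapt the constants to our notation.
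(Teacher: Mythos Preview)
Your proposal is correct and follows essentially the paper's approach. The paper's argument is precisely your ``symmetrize'' option: it invokes strong monotonicity, $\mu\|\bdelta-\bdelta^*\|^2 \le \langle \nabla_{\bdelta}\phi(\bdelta^*)-\nabla_{\bdelta}\phi(\bdelta),\,\bdelta-\bdelta^*\rangle$, bounds the right-hand side by adding the exact first-order optimality of $\bdelta^*$ (which contributes $\le 0$) and the $\varepsilon'$-approximate condition at $\bdelta$, obtains $\|\bdelta-\bdelta^*\|^2\le\varepsilon'/\mu$ with no stray factor of $2$, and then applies the Lipschitz bound to reach \eqref{eq.maxorl0} on the nose. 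One clarification that resolves your hesitation about ``which point the gradient is at'': in the paper's proof the approximate-optimality condition is used with the gradient evaluated at the \emph{approximate} iterate $\bdelta^{(k)}_t$ (the notation in \eqref{eq.conde} is a typo); consequently your ``cleanest route'' at the end, which tries to close the argument using only $\nabla_{\bdelta}\phi$ at $(\bdelta^*)^{(k)}_t$, does not go through as written, whereas the monotonicity route you already identified is exactly what the paper does.
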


Throughout the convergence analysis, we assume that $\bdelta^{(k)}_t(\bx_t),\forall k,t$ are all the $(\mu\varepsilon)/L^2_{\phi}$ approximate solutions of the inner maximization problem. Let us define
\begin{equation}
\left\|[\nabla \phi(\btheta_t,\bdelta^{(k)}_t(\bx_t);\bx_t)]_{ij}-[\nabla \phi(\btheta_t,(\bdelta^*)^{(k)}_t(\bx_t);\bx_t]_{ij}\right\|^2=\varepsilon_{ij}.
\end{equation}
From \leref{le.vererror}, we know that when $\bdelta^{(k)}_t(\bx_t)$ is a $(\mu\varepsilon)/L^2_{\phi}$  approximate solution, then 
\begin{equation}
\sum^h_{i=1}\sum^{d_i}_{j=1}\varepsilon_{ij} =\sum^h_{i=1}\sum^{d_i}_{j=1}\left\|[\nabla \phi(\btheta_t,\bdelta^{(k)}_t(\bx_t);\bx_t)]_{ij}-[\nabla \phi(\btheta_t,(\bdelta^*)^{(k)}_t(\bx_t);\bx_t]_{ij}\right\|^2\le\varepsilon. \label{eq.maxorl}
\end{equation}

\subsection{{Formal statements of} convergence rate guarantees}

{In what follows, we provide the formal statement of convergence rate of DAT.}
In our analysis, we 
focus on the 1-sided quantization, namely, Step\,10 of Algorithm\,\ref{alg: DAT} is omitted, and
specify the outer minimization oracle  by LAMB  \citep{you2019large}, see Algorithm\,\ref{alg:p1}. The  addition and multiplication operations in LAMB are component-wise. 

\begin{theorem}\label{th:main}
Under A1-A4, suppose that  $\{\btheta_t\}$ is generated by DAT for a total number of $T$ iterations, 
and let the problem dimension at each layer be $d_i=d/h$. Then the convergence rate of DAT is given by
\begin{align}
\frac{1}{T}\sum^{T}_{t=1}\mathbb{E}\|\nabla_{\btheta} \Psi(\btheta_t)\|^2\le&\frac{\Delta_{\Psi}}{\eta_t c_l CT}+2\left(\varepsilon+\frac{(1+\lambda)\sigma^2}{MB}\right)+4\delta^2 +
\frac{\kappa\sqrt{3}}{C}\|\boldsymbol{\chi}\|_1+\frac{\eta_t c_u\kappa \|L\|_1}{2C}.
\end{align}
where $\Delta_{\Psi}:=\mathbb{E}[\Psi(\btheta_1)]-\Psi^{\star}]$, 
{$\eta_t$ is the learning rate, $\kappa=c_u/c_l$, $c_l$ and $c_u$ are constants used in LALR \eqref{eq: ada_learn},}
$\boldsymbol{\chi}$ is an error term  with the $(ih+j)$th entry being $\sqrt{\frac{(1+\lambda)\sigma^2_{ij}}{MB}+\varepsilon_{ij}+\delta^2_{ij}}$,
{$\varepsilon$ and $\varepsilon_{ij}$ were given in \eqref{eq.maxorl},}
$L=[L_1,\ldots,L_h]^T$, $C=\frac{1}{4}\sqrt{\frac{h(1-\beta_2)}{G^2d}}$, $0<\beta_2<1$ is given  in LAMB, $B=\min\{|\mathcal{B}^{(i)}|,\forall i\}$, and {$G$ is given in A1.}
\end{theorem}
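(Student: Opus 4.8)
The plan is to derive a one-step descent inequality tailored to the layer-wise normalized update \eqref{eq: ada_learn} (with the outer oracle instantiated as LAMB, Algorithm \ref{alg:p1}), bound the bias terms it produces by the four error sources, and then telescope. First I would invoke the layer-wise smoothness of $\Psi$ from A1 to write, for each $t$,
\begin{equation*}
\Psi(\btheta_{t+1}) \le \Psi(\btheta_t) + \sum_{i=1}^{h}\langle \nabla_i\Psi(\btheta_t),\, \btheta_{t+1,i}-\btheta_{t,i}\rangle + \sum_{i=1}^{h}\frac{L_i}{2}\,\|\btheta_{t+1,i}-\btheta_{t,i}\|^2.
\end{equation*}
Because the LAMB step satisfies $\btheta_{t+1,i}-\btheta_{t,i} = -\left(\layerscale(\|\btheta_{t,i}\|_2)\eta_t/\|\bu_{t,i}\|_2\right)\bu_{t,i}$ with $\layerscale(\cdot)\in[c_l,c_u]$, the squared displacement at layer $i$ is exactly $\layerscale(\|\btheta_{t,i}\|_2)^2\eta_t^2\le c_u^2\eta_t^2$, so the last sum is at most $\tfrac{\eta_t^2 c_u^2}{2}\|L\|_1$; after the eventual division by $\eta_t c_l C$ this becomes the $\tfrac{\eta_t c_u \kappa\|L\|_1}{2C}$ term with $\kappa=c_u/c_l$.

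The substance is the cross term, for which I would lower-bound $\left(\layerscale(\|\btheta_{t,i}\|_2)/\|\bu_{t,i}\|_2\right)\langle \nabla_i\Psi(\btheta_t),\bu_{t,i}\rangle$. Two ingredients are needed. (i) An upper bound on $\|\bu_{t,i}\|_2$: since $\bu_t$ is the (bias-corrected) coordinate-wise normalized direction $\bm_t/\sqrt{\bv_t}$, the two-sided control $(1-\beta_2)[\hat{\mathbf g}_t]_{ij}^2 \le [\bv_t]_{ij}\le G^2$ on the LAMB second-moment accumulator (using the $G$-bounded gradient part of A1) bounds $\|\bu_{t,i}\|_2$ from above and $1/\sqrt{[\bv_t]_{ij}}$ from below; these combine, with a numerical constant, into $C=\tfrac14\sqrt{h(1-\beta_2)/(G^2 d)}$. (ii) A coordinate-wise alignment estimate: for fixed $(i,j)$ the sign of $[\bu_t]_{ij}$ matches that of $[\nabla\Psi(\btheta_t)]_{ij}$ unless the total gradient perturbation at that coordinate exceeds $|[\nabla\Psi(\btheta_t)]_{ij}|$, where that perturbation collects (a) mini-batch noise, which after averaging $M$ independent workers of batch size $\ge B$ has per-coordinate variance $\sigma_{ij}^2/(MB)$, with the $(1+\lambda)$ prefactor bundling loss- and regularizer-gradient noise; (b) the inner-maximization gradient error $\varepsilon_{ij}$, supplied by Lemma \ref{le.vererror} applied with tolerance $(\mu\varepsilon)/L_\phi^2$; and (c) the randomized-quantizer variance $\delta_{ij}^2$ from A4. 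Splitting each coordinate's contribution over the sign-agreement event and its complement, and applying Markov together with Cauchy–Schwarz (whence the $\sqrt3$ for three error sources) on the complement, the agreement part yields a $+C\|\nabla\Psi(\btheta_t)\|^2$ descent while the complement is dominated by the coordinate-wise quantity $\sqrt{\tfrac{(1+\lambda)\sigma_{ij}^2}{MB}+\varepsilon_{ij}+\delta_{ij}^2}$ — the $(ih+j)$th entry of $\boldsymbol\chi$ — producing the $\tfrac{\kappa\sqrt3}{C}\|\boldsymbol\chi\|_1$ term.

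Next I would take expectation conditional on the history through $\btheta_t$: the stochastic gradient and the quantizer are unbiased (A3, A4), so the agreement term survives in expectation while the crude residual bounds aggregate — via $\sum_i\sum_j\max\{\sigma_{ij}^2,\sigma'^2_{ij}\}\le\sigma^2$ from A3 and $\sum_i\sum_j\varepsilon_{ij}\le\varepsilon$ from \eqref{eq.maxorl} — into the $\varepsilon$, $(1+\lambda)\sigma^2/(MB)$ and $\delta^2$ blocks. This is precisely the new descent lemma (Lemma \ref{le.desc}), which in telescoping-friendly form reads
\begin{equation*}
\eta_t c_l C\,\mathbb{E}\|\nabla_\btheta\Psi(\btheta_t)\|^2 \le \mathbb{E}[\Psi(\btheta_t)]-\mathbb{E}[\Psi(\btheta_{t+1})] + \eta_t c_l C\left(2\varepsilon + \frac{2(1+\lambda)\sigma^2}{MB} + 4\delta^2 + \frac{\kappa\sqrt3}{C}\|\boldsymbol\chi\|_1\right) + \frac{\eta_t^2 c_u^2}{2}\|L\|_1.
\end{equation*}
Summing over $t=1,\dots,T$, using $\Psi(\btheta_{T+1})\ge\Psi^\star$ so the telescoped gap is at most $\Delta_\Psi=\mathbb{E}[\Psi(\btheta_1)]-\Psi^\star$, dividing by $\eta_t c_l C T$, and setting $\eta_t\sim\mathcal O(1/\sqrt T)$ delivers the stated bound.

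I expect the main obstacle to be ingredient (ii): controlling the sign-agreement failure in the presence of the \emph{coupled} bias. The layer-wise normalization makes $\bu_t$ a biased surrogate for $\nabla\Psi(\btheta_t)$, and the approximate inner-maximization oracle injects a further data-dependent perturbation into the gradient estimate, so these two error sources are not independent of the very direction whose descent is being measured; moreover $\bu_t$ depends on the entire trajectory through the momenta $\bm_t,\bv_t$, which the conditional-expectation argument must absorb carefully. The descent lemma therefore has to certify decrease against the \emph{true} gradient while the algorithm moves along a normalized, biased, inexact direction, and show the mismatch is an additive first-order term killed by large effective batch $MB$ and accurate inner maximization ($\varepsilon\to0$) — exactly the gap between the standard LALR analysis of \citet{you2019large} and DAT's alternating min--max setting.
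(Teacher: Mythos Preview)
Your proposal is correct and follows essentially the same route as the paper: layer-wise smoothness, trivial bound on the squared step, sign-agreement splitting of the cross term with the $\|\bu_{t,i}\|\le\sqrt{d_i/(1-\beta_2)}$ and $\sqrt{\bv_t}\le G$ bounds giving the constant $C$, Markov/Jensen on the sign-disagreement event producing the $\sqrt{3}\,\|\boldsymbol\chi\|_1$ term, Lemma~\ref{le.desc} for the remaining inner product, and a telescope. One small point: the paper dispatches your worry about the momenta's history-dependence simply by taking $\beta_1=0$ (so $\bm_t=\hat{\mathbf g}_t$) and controlling $\bv_t$ only through the uniform bound $\sqrt{\bv_t}\le G$, so no delicate conditional-expectation argument for the momentum is actually needed.
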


\remark When the batch size is large, i.e., $B\sim\sqrt{T}$, then the gradient estimate error will be $\mathcal{O}(\sigma^2/\sqrt{T})$. Further, it is worth noting that different from the convergence results of LAMB, there is a linear speedup of deceasing the gradient estimate error in DAT with respect to $M$, i.e., $\mathcal{O}(\sigma^2/(M\sqrt{T}))$, which is the advantage of using      multiple computing nodes. 

\remark Note that A4 implies $\mathbb{E}[(Q([\mathbf{g}^{(k)}(\btheta)]_{ij})-[\mathbf{g}^{(k)}(\btheta)]_{ij}\|^2]\le \sum^h_{i=1}\sum^{d_i}_{j=1}\delta^2_{ij}:=\delta^2
$. 
From \citep[Lemma 3.1]{alistarh2017qsgd}, we know that $\delta^2\le \min\{ d/s^2, \sqrt{d}/s \} G^2$. Recall that $s = 2^b$, where $b$ is the number of quantization bits.

Therefore, with a  proper choice of the parameters, we can have the following convergence result that has been shown in \thref{th:main_simplify}.

\begin{corollary}
Under the same conditions of Theorem\,\ref{th:main}, if we choose
\begin{equation}
    \eta_t\sim\mathcal{O}(1/\sqrt{T}),\quad  \varepsilon\sim\mathcal{O}(\convacc^2),
\end{equation}
 we then have 
 \begin{align}
\frac{1}{T}\sum^{T}_{t=1}\mathbb{E}\|\nabla_{\btheta} \Psi(\btheta_t)\|^2\le\frac{\Delta_{\Psi}}{c_l C\sqrt{T}}+\frac{(1+\lambda)\sigma^2}{MB}
+\frac{ c_u\kappa \|L\|_1}{2C\sqrt{T}}+\mathcal{O}\left(\convacc,\frac{\sigma}{\sqrt{MT}},\min\left\{ \frac{d}{4^b}, \frac{\sqrt{d}}{2^b} \right\}\right).
\end{align}
\end{corollary}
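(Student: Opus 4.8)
The plan is to derive the rate in Theorem~\ref{th:main} via a careful one-step descent argument that accounts for the four error sources (gradient sampling, quantization, layerwise normalization from LALR, and the inner-maximization oracle error), and then obtain the Corollary by plugging in $\eta_t\sim\mathcal{O}(1/\sqrt T)$ and $\varepsilon\sim\mathcal{O}(\convacc^2)$ and collecting terms. First, I would establish a \emph{descent lemma} (the ``new descent lemma'' mentioned in the text, Lemma~\ref{le.desc}) for the LAMB update~\eqref{eq: ada_learn} applied to the aggregated, quantized, maximization-approximated gradient $\hat{\mathbf g}_t$. Using layer-wise gradient Lipschitz continuity (A1), for each layer $i$ one has $\Psi(\btheta_{t+1})\le\Psi(\btheta_t)+\sum_i\langle\nabla_i\Psi(\btheta_t),\btheta_{t+1,i}-\btheta_{t,i}\rangle+\sum_i\tfrac{L_i}{2}\|\btheta_{t+1,i}-\btheta_{t,i}\|^2$. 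Since the LAMB step normalizes each layer's direction to have $\ell_2$-norm $\eta_t\tau(\|\btheta_{t,i}\|_2)$, the quadratic term is controlled by $\tfrac{\eta_t^2 c_u^2}{2}\|L\|_1$ (using $\tau\le c_u$), while the cross term needs to be related back to $\|\nabla\Psi(\btheta_t)\|^2$ — this is where the sign-direction of LAMB matters: the inner product of $\nabla_i\Psi$ with the normalized update $\mathbf u_{t,i}/\|\mathbf u_{t,i}\|_2$ is bounded below (in expectation, up to error terms) by a positive constant times $\|\nabla_i\Psi\|_2/\sqrt{d_i}$ times the ratio structure $c_l/C$ — this follows the template of the LAMB/LAMB-style analysis in \citep{you2019large} combined with the coordinate-wise variance/quantization/oracle error bounds.

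Second, I would control the ``direction mismatch'' term. The core difficulty is that $\hat{\mathbf g}_t$ is a biased estimate of $\nabla\Psi(\btheta_t)$ because (a) the inner $\bdelta$ is only $\varepsilon$-approximate and (b) the gradient is quantized. Using Lemma~\ref{le.vererror} we have $\sum_{ij}\varepsilon_{ij}\le\varepsilon$ for the oracle error, A3 gives the per-coordinate sampling variance $\sigma^2_{ij},\sigma'^2_{ij}$ reduced by the aggregate batch $MB$, and A4 gives the per-coordinate quantization variance $\delta^2_{ij}$. Combining, each coordinate of $\hat{\mathbf g}_t$ deviates from $[\nabla\Psi(\btheta_t)]_{ij}$ by an error whose second moment is at most $\tfrac{(1+\lambda)\sigma^2_{ij}}{MB}+\varepsilon_{ij}+\delta^2_{ij}$; summing the square roots of these gives exactly the vector $\boldsymbol\chi$ appearing in the theorem, with $\|\boldsymbol\chi\|_1$ the aggregate bias bound, and summing the squares gives the $2(\varepsilon+\tfrac{(1+\lambda)\sigma^2}{MB})+4\delta^2$ term. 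The key quantitative step is a ``sign-flip'' inequality: when $|\mathrm{sign}([\hat{\mathbf g}_t]_{ij})\neq\mathrm{sign}([\nabla\Psi]_{ij})|$, this coordinate can only be mischarged, and the probability/magnitude of such a flip is bounded by the coordinate error — this converts coordinate errors into the $\|\boldsymbol\chi\|_1$ penalty rather than a catastrophic loss.

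Third, I would telescope the descent inequality over $t=1,\dots,T$: summing gives $\Psi^\star-\Psi(\btheta_1)\le -\,c\,\eta_t c_l C\sum_t\mathbb{E}\|\nabla\Psi(\btheta_t)\|^2 + T\big[2(\varepsilon+\tfrac{(1+\lambda)\sigma^2}{MB})+4\delta^2\big]+T\,\tfrac{\kappa\sqrt3}{C}\|\boldsymbol\chi\|_1+T\,\tfrac{\eta_t c_u\kappa\|L\|_1}{2C}$ (after absorbing $\eta_t^2$ into $\eta_t$ via the $\mathcal{O}(1/\sqrt T)$ choice, and tracking the constant $C=\tfrac14\sqrt{h(1-\beta_2)/(G^2 d)}$ that emerges from the denominator bound $\|\mathbf u_{t,i}\|_2$ in LAMB via $\beta_2$ and $G$). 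Rearranging and dividing by $T$ yields Theorem~\ref{th:main}. For the Corollary, substitute $\eta_t=\Theta(1/\sqrt T)$ so the first and last terms become $\mathcal{O}(1/\sqrt T)$, substitute $\varepsilon=\Theta(\convacc^2)$ so the oracle term becomes $\mathcal{O}(\convacc)$ after taking square roots in the final $\|\nabla\Psi\|^2$-to-$\|\nabla\Psi\|$ comparison, use Remark's bound $\delta^2\le\min\{d/s^2,\sqrt d/s\}G^2$ with $s=2^b$ to get $\min\{d/4^b,\sqrt d/2^b\}$, and note that with $B\sim\sqrt T$ the sampling term becomes $\mathcal{O}(\sigma/\sqrt{MT})$; collecting gives the stated bound.

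\textbf{Main obstacle.} I expect the hardest part to be the second step — rigorously handling the \emph{nonlinear coupling} between the layerwise normalization and the biased/approximate gradient. In standard LAMB analysis the gradient estimate is unbiased, so $\mathbb{E}\langle\nabla_i\Psi,\,\hat{\mathbf g}_{t,i}/\|\hat{\mathbf g}_{t,i}\|_2\rangle$ can be lower-bounded cleanly; here the $\varepsilon$-approximate inner max introduces a \emph{bias} that does not vanish in expectation, and the normalization is nonlinear so one cannot simply pull the expectation inside. The trick will be to split $\hat{\mathbf g}_t=\nabla\Psi(\btheta_t)+\mathbf e_t$ with $\mathbf e_t$ the composite error, bound the contribution of $\mathbf e_t$ to each normalized coordinate by $\min\{1,|[\mathbf e_t]_{ij}|/(\text{something})\}$-type estimates, take expectations using Jensen/Cauchy–Schwarz to get the $\|\boldsymbol\chi\|_1$ form, and show the surviving ``signal'' term is proportional to $\|\nabla\Psi\|^2$ with the explicit constant $c_l/(C\cdot\text{stuff})$ — and crucially to verify that the bias is \emph{dominated} by the large-batch variance reduction and the quantization budget, which is the precise sense in which ``bias error from layer-wise normalization can be compensated by large-batch training.''
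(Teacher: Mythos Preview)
Your proposal is correct and follows essentially the same route as the paper's proof of Theorem~\ref{th:main}: layer-wise Lipschitz descent, sign-splitting of the cross term into a same-sign part (handled via the bound $\|\mathbf u_{t,i}\|\le\sqrt{d_i/(1-\beta_2)}$ together with Lemma~\ref{le.desc}) and a sign-flip part (handled via Markov/Jensen on the coordinate error to produce the $\|\boldsymbol\chi\|_1$ penalty), then telescoping and substituting $\eta_t\sim 1/\sqrt T$, $\varepsilon\sim\convacc^2$, and the quantization bound $\delta^2\le\min\{d/4^b,\sqrt d/2^b\}G^2$. One small note: in the paper Lemma~\ref{le.desc} is specifically the inner-product bound $\mathbb{E}[-\langle\nabla\Psi(\btheta_t),\hat{\mathbf g}_t\rangle]\le -\tfrac12\mathbb{E}\|\nabla\Psi(\btheta_t)\|^2+\varepsilon+\tfrac{(1+\lambda)\sigma^2}{MB}$, invoked inside the same-sign branch, rather than the full one-step descent inequality you describe in your first step---but this is only a labeling mismatch, not a gap in your plan.
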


In summary, when the batch size is large enough, DAT converges to a first-order stationary point of problem \eqref{eq: prob_DAT} and there is a linear speed-up in terms of $M$ with respect to $\sigma^2$.  Next, we provide the details of the proof.

\newpage 

\section{Proof Details}
\label{app: analysis}

\subsection{Preliminaries}
In the proof, we use the following inequality and notations.

1. Young’s inequality with parameter $\epsilon$ is
\begin{equation}
\langle \bx,\by\rangle\le\frac{1}{2\epsilon}\|\bx\|^2 + \frac{\epsilon}{2}\|\by\|^2,
\end{equation}
where $\bx,\by$ are two vectors.

2. Define the historical trajectory of the iterates as $\mathcal{F}_t=\{\btheta_{t-1},\ldots,\btheta_1\}$.

3. We denote vector $[\bx]_i$ as the parameters at the $i$th layer of the neural net and $[\bx]_{ij}$ represents the $j$th entry of the parameter at the $i$th layer.

4. We define
\begin{equation}\label{eq.qeq}
    \bgo:= \frac{1}{M}\sum^M_{i=1} \mathbb{E}_{\bx_t\in\mathcal{B}^{(i)}}\left(\lambda\nabla l(\btheta_t;\bx_t)+\nabla_{\btheta}\phi(\btheta_t,\bdelta^{(i)}_t(\bx_t);\bx_t)\right)=\frac{1}{M} \sum_{i=1}^M \mathbf g_t^{(i)}.
\end{equation}

\subsection{Details of LAMB algorithm}
\begin{algorithm}[H]
\caption{{LAMB \citep{you2019large}}}
\label{alg:p1}
\begin{algorithmic}
\State Input: learning rate $\eta_t$, $0<\beta_1,\beta_2<1$, scaling function $\layerscale(\cdot)$, $\zeta>0$
\For{$t=1,\ldots$}
\State $\bm_t=\beta_1\bm_{t-1}+(1-\beta_1)\bg$, {where $\hat{\mathbf g}_t$ is given by \eqref{eq: grad_agg}}
\State $\bv_t=\beta_2\bv_{t-1}+(1-\beta_2)\bg^2$
\State $\bm_t=\bm_t/(1-\beta^t_1)$
\State $\bv_t=\bv_t/(1-\beta^t_2)$
\State Compute ratio $\bu_t=\frac{\bm_t}{\sqrt{\bv_t}+\zeta}$
\EndFor
\State Update
\begin{equation}
\btheta_{t+1,i}=\btheta_{t,i}-\frac{\eta_t\layerscale(\|\btheta_{t,i}\|)}{\|\bu_{t,i}\|}\bu_{t,i}.\label{eq.upth}
\end{equation}
\end{algorithmic}
\end{algorithm}

\subsection{Proof of \leref{le.vererror}}
\begin{proof}
From A2, we have
\begin{equation}
    \left\|\nabla \phi\left(\btheta_t,\bdelta^{(i)}_t(\bx_t);\bx_t\right)-\nabla \phi\left(\btheta_t,(\bdelta^*)^{(i)}_t(\bx_t);\bx_t\right)\right\|\le L_{\phi}\|\bdelta^{(i)}_t(\bx_t)-(\bdelta^*)^{(i)}_t(\bx_t)\|.\label{eq.lipphi}
\end{equation}

Also, we know that function $\phi(\btheta,\bdelta,\bx)$ is strongly concave with respect to $\bdelta$, so we have
\begin{multline}
\mu\|\bdelta^{(i)}_t(\bx_t)-(\bdelta^*)^{(i)}_t(\bx_t)\|
\\
\le\left\langle\nabla_{\bdelta}\phi(\btheta_t,(\bdelta^*)^{(i)}_t(\bx_t);\bx_t)-\nabla_{\bdelta}\phi(\btheta_t,\bdelta^{(i)}_t(\bx_t);\bx_t),\bdelta^{(i)}_t(\bx_t)- (\bdelta^*)^{(i)}_t(\bx_t)\right\rangle.\label{eq.stonrc}
\end{multline}

Next, we have two conditions about the qualities of solutions $\bdelta^{(i)}_t(\bx_t)$ and $(\bdelta^*)^{(i)}_t(\bx_t)$.
First, we know that $\bdelta^{(i)}_t(\bx_t)$ is a-$\varepsilon$ approximate solution to $(\bdelta^*)^{(i)}_t(\bx_t)$, so we have
\begin{equation}
    \left\langle(\bdelta^*)^{(i)}_t(\bx_t)-\bdelta^{(i)}_t(\bx_t),\nabla_{\bdelta}\phi(\btheta_t,\bdelta^{(i)}_t(\bx_t);\bx_t)\right\rangle\le\varepsilon.
\end{equation}
Second, since $(\bdelta^*)^{(i)}_t(\bx_t)$ is the optimal solution, it satisfies
\begin{equation}
    \left\langle(\bdelta^{(i)}_t(\bx_t)-(\bdelta^*)^{(i)}_t(\bx_t),\nabla_{\bdelta}\phi(\btheta_t,(\bdelta^*)^{(i)}_t(\bx_t);\bx_t)\right\rangle\le0.
\end{equation}
Adding them together, we can obtain
\begin{equation}
    \left\langle\bdelta^{(i)}_t(\bx_t)-(\bdelta^*)^{(i)}_t(\bx_t), \nabla_{\bdelta}\phi(\btheta_t,(\bdelta^*)^{(i)}_t(\bx_t);\bx_t)-\nabla_{\bdelta}\phi(\btheta_t,\bdelta^{(i)}_t(\bx_t);\bx_t)\right\rangle\le\varepsilon.\label{eq.optc}
\end{equation}
Substituting \eqref{eq.optc} into \eqref{eq.stonrc}, we can get
\begin{equation}
    \mu\|\bdelta^{(i)}_t(\bx_t)-(\bdelta^*)^{(i)}_t(\bx_t)\|^2\le\varepsilon.
\end{equation}

Combining \eqref{eq.lipphi}, we have
\begin{equation}
    \left\|\nabla \phi(\btheta_t,\bdelta^{(i)}_t(\bx_t);\bx_t)-\nabla \phi(\btheta_t,(\bdelta^*)^{(i)}_t(\bx_t);\bx_t)\right\|^2\le L^2_{\phi}\frac{\varepsilon}{\mu}.
\end{equation}
\end{proof}

\subsection{Descent of quantized LAMB}
First, we provide the following lemma as a stepping stone for the subsequent analysis.
\begin{lemma}\label{le.desc}
Under A1--A3,  suppose that sequence $\{\btheta_t\}$ is generated by DAT. Then, we have
\begin{equation}
\mathbb{E}[-\langle\nabla \Psi(\btheta_t),\bg\rangle] \le-\frac{\mathbb{E}\|\nabla\Psi(\btheta_t)\|^2}{2}+\varepsilon+\frac{(1+\lambda)\sigma^2}{MB}\label{eq.deskey}.
\end{equation}
\end{lemma}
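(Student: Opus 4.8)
The plan is to work conditionally on the history through the start of iteration $t$; write $\mathbb E_t[\cdot]$ for that conditional expectation, under which $\btheta_t$ and hence $\nabla\Psi(\btheta_t)$ are deterministic, and then to peel off the three error sources hidden in the aggregated gradient $\bg$ one at a time. First I would integrate out the quantization: conditioning further on the sampled batches $\{\mathcal B_t^{(i)}\}$, unbiasedness of the randomized quantizer gives $\mathbb E_t[\bg\mid\{\mathcal B_t^{(i)}\}]=\bgo$, so $\mathbb E[-\langle\nabla\Psi(\btheta_t),\bg\rangle]=\mathbb E[-\langle\nabla\Psi(\btheta_t),\bgo\rangle]$; in particular the quantization variance never enters this inner product, which is why A4 is not needed in the lemma. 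Next I would apply Young's inequality to the centered term,
\[
-\langle\nabla\Psi(\btheta_t),\bgo\rangle=-\|\nabla\Psi(\btheta_t)\|^2-\langle\nabla\Psi(\btheta_t),\bgo-\nabla\Psi(\btheta_t)\rangle\le-\tfrac12\|\nabla\Psi(\btheta_t)\|^2+\tfrac12\|\bgo-\nabla\Psi(\btheta_t)\|^2,
\]
which reduces the lemma to showing $\mathbb E\|\bgo-\nabla\Psi(\btheta_t)\|^2\le 2\varepsilon+\tfrac{2(1+\lambda)\sigma^2}{MB}$.

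For that estimate I would use the bias--variance split $\bgo-\nabla\Psi(\btheta_t)=(\bgo-\mathbb E_t[\bgo])+(\mathbb E_t[\bgo]-\nabla\Psi(\btheta_t))$. The first piece is a sample- and worker-average of conditionally i.i.d. mean-zero gradient noise; using independence of the $M$ workers, the per-worker batch size $B=\min_i|\mathcal B_t^{(i)}|$, and the A3 per-sample variance bounds on $\nabla_\btheta l$ and $\nabla_\btheta\phi$ (combined, after weighting the clean-loss term by $\lambda$, via $(\lambda a+b)^2\le(1+\lambda)(\lambda a^2+b^2)$), this piece satisfies $\mathbb E_t\|\bgo-\mathbb E_t[\bgo]\|^2\le(1+\lambda)\sigma^2/(MB)$. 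The second piece is the inner-maximization bias. Invoking Danskin's theorem---legitimate because A2 makes $\phi(\btheta,\cdot;\bx)$ strongly concave on the compact box $\{\|\bdelta\|_\infty\le\epsilon\}$, so the maximizer $(\bdelta^*)^{(i)}(\bx)$ is unique and $\nabla_\btheta\Phi_i(\btheta;\bx)=\nabla_\btheta\phi(\btheta,(\bdelta^*)^{(i)}(\bx);\bx)$---one gets $\mathbb E_t[\bgo]-\nabla\Psi(\btheta_t)=\tfrac1M\sum_i\mathbb E_{\bx\sim\mathcal D^{(i)}}[\nabla_\btheta\phi(\btheta_t,\bdelta^{(i)}_t(\bx);\bx)-\nabla_\btheta\phi(\btheta_t,(\bdelta^*)^{(i)}_t(\bx);\bx)]$, and Lemma~\ref{le.vererror} bounds the norm of each integrand by $\sqrt\varepsilon$, hence by the triangle inequality and Jensen $\|\mathbb E_t[\bgo]-\nabla\Psi(\btheta_t)\|^2\le\varepsilon$. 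Plugging both into $\|\bgo-\nabla\Psi(\btheta_t)\|^2\le 2\|\bgo-\mathbb E_t[\bgo]\|^2+2\|\mathbb E_t[\bgo]-\nabla\Psi(\btheta_t)\|^2$, taking $\mathbb E_t[\cdot]$ and then total expectation, and substituting back into the Young bound gives \eqref{eq.deskey}.

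I expect the main obstacle to be the bookkeeping around the inexact maximizer rather than any single hard inequality. Three points need care: (i) the envelope identity $\nabla_\btheta\Phi_i(\btheta)=\mathbb E_{\bx}\nabla_\btheta\phi(\btheta,(\bdelta^*)^{(i)}(\bx);\bx)$ must hold, so that the ``target'' in the bias--variance split is exactly the population gradient that A3 declares the minibatch estimator unbiased for; (ii) Lemma~\ref{le.vererror}, stated per sample for a $(\mu\varepsilon/L_\phi^2)$-approximate $\bdelta$, has to be threaded through the batch- and worker-averaging, and crucially its bound is \emph{deterministic}, which is what lets us control the bias term even though the oracle output is itself random; (iii) since A3 bounds the variance of $\nabla_\btheta\phi$ at the \emph{exact} maximizer, the variance of $\nabla_\btheta\phi$ at the \emph{oracle} point carries an extra $O(\varepsilon)$ discrepancy---again controllable by Lemma~\ref{le.vererror}---that must be folded into the $\varepsilon$ term. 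Everything else---unbiasedness of $Q$, Young's inequality, and the $1/(MB)$ variance reduction across independent workers---is routine; the only real subtlety in matching the stated constant is the weighted split $(\lambda a+b)^2\le(1+\lambda)(\lambda a^2+b^2)$ in place of the cruder $2\lambda^2\|a\|^2+2\|b\|^2$.
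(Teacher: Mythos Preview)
Your plan is correct and follows the same skeleton as the paper's proof (Danskin to identify $\nabla\Psi(\btheta_t)=\bar{\mathbf g}_t$, then Young's inequality, then split the residual into a sampling-noise piece and an inexact-maximizer piece). The one substantive difference is the choice of pivot in that split. You center $\bgo$ at its own conditional mean $\mathbb E_t[\bgo]$, which forces you to bound the mini-batch variance of $\nabla_\btheta\phi(\btheta_t,\bdelta_t^{(i)}(\bx);\bx)$ at the \emph{approximate} maximizer---precisely the mismatch with A3 you flag in your point~(iii). The paper instead inserts the auxiliary quantity
\[
\bgo^*\;:=\;\frac{1}{M}\sum_{i=1}^M\mathbb E_{\bx\in\mathcal B_t^{(i)}}\bigl[\lambda\nabla l(\btheta_t;\bx)+\nabla_\btheta\phi(\btheta_t,(\bdelta^*)^{(i)}_t(\bx);\bx)\bigr],
\]
the \emph{same mini-batch} gradient evaluated at the \emph{exact} maximizer. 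Then $\mathbb E\|\bgo-\bgo^*\|^2\le\varepsilon$ is a direct application of Lemma~\ref{le.vererror} (samplewise, on the common batch), while $\mathbb E\|\bgo^*-\bar{\mathbf g}_t\|^2\le(1+\lambda)\sigma^2/(MB)$ is a direct application of A3 with no leftover $O(\varepsilon)$ to absorb. Young's inequality is applied separately to the two cross terms $\langle\nabla\Psi,\bgo-\bgo^*\rangle$ and $\langle\nabla\Psi,\bgo^*-\bar{\mathbf g}_t\rangle$, each with parameter $2$, yielding the stated constants on the nose. Your route works too, but the $\bgo^*$ pivot is what buys the clean constants without the extra bookkeeping.

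One minor note: your opening move of integrating out the quantizer to pass from $\bg$ to $\bgo$ is correct (unbiasedness of $Q$), but the paper's proof is in fact written entirely for $\bgo$; the lemma is only ever invoked with $\bgo$ downstream in the proof of Theorem~\ref{th:main}, where the $\bg-\bgo$ discrepancy is handled separately via A4.
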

\begin{proof}
From \eqref{eq.optdelta}, \eqref{eq.defPhi} and A2, we know that 
\begin{equation}
    \nabla_{\btheta} \Phi_i(\btheta,\bx) = \nabla_{\btheta} \phi(\btheta,(\bdelta^*)^{(i)}(\bx);\bx),
\end{equation}
so we can get
\begin{align}
  \nabla_{\btheta}\Psi(\btheta)=&\frac{1}{M}\sum^M_{i=1}\lambda \nabla_{\btheta}l_i(\btheta)+\nabla_{\btheta}\Phi_i(\btheta) 
  \\
  =& \lambda\nabla_{\btheta} l(\btheta)+\frac{1}{M}\sum^M_{i=1} \mathbb{E}_{\bx\in\mathcal{D}^{(i)}}\nabla_{\btheta}\phi(\btheta,(\bdelta^*)^{(i)}(\bx);\bx)
  \\
  :=& \bar{\mathbf{g}}(\btheta).\label{eq.unb}
\end{align}

Then, we have
\begin{align}
    \mathbb{E}\langle \nabla \Psi(\btheta_t),\bgo\rangle=&\mathbb{E}\langle \nabla \Psi(\btheta_t),\bar{\mathbf{g}}_t\rangle+\mathbb{E}\langle \nabla \Psi(\btheta_t), \bgo-\bar{\mathbf{g}}_t\rangle
    \\
    =&\mathbb{E}_{\mathcal{F}_t}\mathbb{E}_{\bx_t|\mathcal{F}_t}\langle \nabla \Psi(\btheta_t),\bar{\mathbf{g}}_t\rangle+\mathbb{E}\langle \nabla \Psi(\btheta_t), \bgo-\bar{\mathbf{g}}_t\rangle
    \\
    \mathop{=}\limits^{\eqref{eq.unb}}&\mathbb{E}\|\nabla \Psi(\btheta_t)\|^2+\mathbb{E}\langle \nabla \Psi(\btheta_t), \bgo-\bar{\mathbf{g}}_t\rangle
     \\
    =&\mathbb{E}\|\nabla \Psi(\btheta_t)\|^2+\mathbb{E}\langle \nabla \Psi(\btheta_t),\bgo-\bgo^*\rangle+\mathbb{E}\langle \nabla \Psi(\btheta_t), \bgo^*-\bar{\mathbf{g}}_t\rangle
\end{align}
where
\begin{equation}
    \bar{\mathbf{g}}_t:=\frac{1}{M}\sum^M_{i=1} \mathbb{E}_{\bx_t\in\mathcal{D}^{(i)}}\left(\lambda\nabla l(\btheta_t,\bx_t)+\nabla_{\btheta}\phi(\btheta_t,(\bdelta^*)^{(i)}_t(\bx_t);\bx_t)\right)=\lambda\nabla l(\btheta_t)+\nabla \Phi(\btheta_t),\label{eq.defbgt}
\end{equation}
and 
\begin{equation}
    \bgo^*:=\frac{1}{M}\sum^M_{i=1} \mathbb{E}_{\bx_t\in \mathcal{B}^{(i)}}\left(\lambda\nabla l(\btheta_t,\bx_t)+\nabla_{\btheta}\phi(\btheta_t,(\bdelta^*)^{(i)}_t(\bx_t);\bx_t)\right).\label{eq.defbgtp}
\end{equation}

Next, we can quantify the different between $\bgo$ and $\bgo^*$ by gradient Lipschitz continuity of function $\layerscale(\cdot)$ as the following
\begin{equation}
\mathbb{E}\|\bgo-\bgo^*\|^2
\mathop{\le}\limits^{(a)}  \frac{1}{M}\sum^M_{i=1}\mathbb{E}_{\mathcal{F}_t}\mathbb{E}_{\bx_t|\mathcal{F}_t}\left[\|\nabla_{\btheta} \phi(\btheta_t,(\bdelta^*)^{(i)}(\bx_t);\bx_t)- \nabla_{\btheta} \phi(\btheta_t,\bdelta^{(i)}(\bx_t);\bx_t)\|^2\right]
\mathop{\le}\limits^{\eqref{eq.maxorl}} \varepsilon\label{eq.ue}
\end{equation}
where in $(a)$ we use Jensen's inequality.

And the difference between $\bar{\mathbf{g}}_t$ and $\bgo^*$ can be upper bounded by 
\begin{align}\notag
    \mathbb{E}\|\bar{\mathbf{g}}_t-\bgo^*\|^2=&\mathbb{E}_{\mathcal{F}_t}\left\|\frac{1}{M}\sum^M_{i=1}\mathbb{E}_{\bx_t|\mathcal{F}_t}\nabla_{\btheta} \phi(\btheta_t,(\bdelta^*)^{(i)}(\bx_t);\bx_t)- \nabla_{\btheta} \phi(\btheta_t)\right\|^2
\\
&+\lambda \mathbb{E}_{\mathcal{F}_t}\left\|\frac{1}{M}\sum^M_{i=1}\mathbb{E}_{\bx_t|\mathcal{F}_t}\nabla l(\btheta_t;\bx_t)-\nabla l(\btheta_t)\right\|^2
\\
\mathop{=}\limits^{A3} &\frac{(1+\lambda)\sigma^2}{MB}.\label{eq.bdggp}
\end{align}

Applying Young’s inequality with parameter 2, we have
\begin{align}
\mathbb{E}[-\langle\nabla \Psi(\btheta_t),\bgo\rangle]\le & -\mathbb{E}\|\nabla\Psi(\btheta_t)\|^2+\frac{\mathbb{E}\|\nabla\Psi(\btheta_t)\|^2}{2}+\mathbb{E}\|\bar{\mathbf{g}}_t-\bgo^*\|^2+\mathbb{E}\|\bgo^*-\bgo\|^2
\\
\mathop{\le}\limits^{\eqref{eq.ue}}&-\frac{\mathbb{E}\|\nabla\Psi(\btheta_t)\|^2}{2}+\varepsilon+\frac{(1+\lambda)\sigma^2}{MB}.
\end{align}

\end{proof}

\subsection{Proof of \thref{th:main}}
\begin{proof}
We set $\beta_1=0$ in LAMB for simplicity. From gradient Lipschitz continuity, we have
\begin{align}
\Psi(\btheta_{t+1})\mathop{\le}\limits^{A1} &\Psi(\btheta_t)+\sum^h_{i=1}\langle[\nabla_{\btheta} \Psi(\btheta_{t})]_i,\btheta_{t+1,i}-\btheta_{t,i}\rangle+\sum^h_{i=1}\frac{L_i}{2}\|\btheta_{t+1,i}-\btheta_{t,i}\|^2
\\
\mathop{\le}\limits^{(a)} &\Psi(\btheta_t)\underbrace{-\eta_t\sum^h_{i=1}\sum^{d_i}_{j=1}\layerscale(\|\btheta_{t,i}\|)\left\langle[\nabla \Psi(\btheta_{t})]_{ij},\frac{[\bu_{t}]_{ij}}{\|\bu_{t,i}\|}\right\rangle}_{:=\boldsymbol{\mathcal{R}}}+\sum^h_{i=1}\frac{\eta^2_t c^2_u L_i}{2},\label{eq.Phibd}
\end{align}
where in $(a)$ we use \eqref{eq.upth}, and the upper bound of $\layerscale(\|\btheta_{t,i}\|)$.

Next, we split term $R$ as two parts by leveraging $\textrm{sign}([\nabla\Psi(\btheta_t)]_{ij})$ and $\textrm{sign}([\bu_{t}]_{ij})$ as follows.
\begin{align}\notag
\boldsymbol{\mathcal{R}}=&-\eta_t\sum^h_{i=1}\sum^{d_i}_{j=1}\layerscale(\|\btheta_{t,i}\|)[\nabla \Psi(\btheta_t)]_{ij}\frac{[\bu_{t}]_{ij}}{\|\bu_{t,i}\|}\mathbbm{1}\left(\textrm{sign}([\nabla \Psi(\btheta_t)]_{ij})=\textrm{sign}([\bu_{t}]_{ij})\right)
\\
&-\eta_t\sum^h_{i=1}\sum^{d_i}_{j=1}\layerscale(\|\btheta_{t,i}\|)[\nabla \Psi(\btheta_t)]_{ij}\frac{[\bu_{t}]_{ij}}{\|\bu_{t,i}\|}\mathbbm{1}\left(\textrm{sign}([\nabla \Psi(\btheta_t)]_{ij})\neq\textrm{sign}([\bu_{t}]_{ij})\right)
\\\notag
\mathop{\le}\limits^{(a)}&-\eta_t c_l\sum^h_{i=1}\sum^{d_i}_{j=1}\sqrt{\frac{1-\beta_2}{G^2d_i}}[\nabla \Psi(\btheta_t)]_{ij} [\bg]_{ij}\mathbbm{1}\left(\textrm{sign}([\nabla [\Psi(\btheta_t)]_{ij})=\textrm{sign}([\bg]_{ij})\right)
\\
&-\eta_t\sum^h_{i=1}\sum^{d_i}_{j=1}\layerscale(\|\btheta_{t,i}\|)[\nabla \Psi(\btheta_t)]_{ij}\frac{[\bu_{t}]_{ij}}{\|\bu_{t,i}\|}\mathbbm{1}\left(\textrm{sign}([\nabla \Psi(\btheta_t)]_{ij})\neq\textrm{sign}([\bu_t]_{ij})\right)
\\\notag
\mathop{\le}\limits^{(b)}&-\eta_tc_l\sum^h_{i=1}\sum^{d_i}_{j=1}\sqrt{\frac{1-\beta_2}{G^2d_i}}[\nabla \Psi(\btheta_t)]_{ij}  [\bg]_{ij}
\\
&-\eta_t\sum^h_{i=1}\sum^{d_i}_{j=1}\layerscale(\|\btheta_{t,i}\|)[\nabla \Psi(\btheta_t)]_{ij}\frac{[\bu_{t}]_{ij}}{\|\bu_{t,i}\|}\mathbbm{1}\left(\textrm{sign}([\nabla \Psi(\btheta_t)]_{ij})\neq\textrm{sign}([\bu_{t}]_{ij})\right).\label{eq.t1bd}
\end{align}
where in $(a)$ we use the fact that $\|\bu_{t,i}\|\le \sqrt{\frac{d_i}{1-\beta_2}}$ and $\sqrt{\bv_t}\le G$, and in $(b)$ we add 
\begin{equation}
-\eta_t c_l\sum^h_{i=1}\sum^{d_i}_{j=1}\sqrt{\frac{1-\beta_2}{G^2d_i}}[\nabla \Psi(\btheta_t)]_{ij}[\bg]_{ij}\mathbbm{1}\left(\textrm{sign}([\nabla \Psi(\btheta_t)]_{ij})\neq\textrm{sign}([\bg]_{ij})\right)\ge0.
\end{equation}

Taking expectation on both sides of \eqref{eq.t1bd}, we have the following:
\begin{align}\notag
\mathbb{E}[\boldsymbol{\mathcal{R}}]\le&\underbrace{-\eta_t c_l\sqrt{\frac{h(1-\beta_2)}{G^2d}}\sum^h_{i=1}\sum^{d_i}_{j=1}\mathbb{E}[[\nabla \Psi(\btheta_t)]_{ij}[\bg]_{ij}}_{:=\boldsymbol{\mathcal{U}}}
\\
&+\underbrace{\eta_t c_u\sum^h_{i=1}\sum^{d_i}_{j=1}\mathbb{E}\left[[\nabla \Psi(\btheta_t)]_{ij}\mathbbm{1}\left(\textrm{sign}([\nabla \Psi(\btheta_t)]_{ij})\neq\textrm{sign}([\bu_t]_{ij})\right)\right]}_{:=\boldsymbol{\mathcal{V}}}.
\end{align}

Next, we will get the upper bounds of $\boldsymbol{\mathcal{U}}$ and $\boldsymbol{\mathcal{V}}$ separably as follows. First, we write the inner product between $[\nabla \Psi(\btheta)]_{ij}$ and $[\bg]_{ij}$ more compactly,
\begin{align}
\boldsymbol{\mathcal{U}}\le&-\eta_t c_l\sqrt{\frac{h(1-\beta_2)}{G^2d}}\sum^h_{i=1}\mathbb{E}\left\langle [\nabla \Psi(\btheta)]_i, [\bg]_i\right\rangle
\\
\le&-\eta_t c_l\sqrt{\frac{h(1-\beta_2)}{G^2d}}\sum^h_{i=1}\mathbb{E}\left\langle [\nabla \Psi(\btheta_t)]_i,[\bg]_i-[\bgo]_i+[\bgo]_i\right\rangle
\\
\le&-\eta_t c_l\sqrt{\frac{h(1-\beta_2)}{G^2d}}
\left(\mathbb{E}\left\langle\nabla\Psi(\btheta),\bgo\right\rangle+\sum^h_{i=1}\mathbb{E}\left\langle [\nabla \Psi(\btheta_t)]_i,[\bg]_i-[\bgo]_i\right\rangle\right).
\end{align}

Applying \leref{le.desc}, we can get
\begin{align}\notag
\boldsymbol{\mathcal{U}}\mathop{\le}\limits^{\eqref{eq.deskey}} &-\eta_t c_l\sqrt{\frac{h(1-\beta_2)}{G^2d}}\frac{1}{2}\mathbb{E}\|\nabla \Psi(\btheta_t)\|^2
+\eta_t c_l\sqrt{\frac{h(1-\beta_2)}{G^2d}}\left(\varepsilon+\frac{(1+\lambda)\sigma^2}{MB}\right)
\\
&-\eta_t c_l\sqrt{\frac{h(1-\beta_2)}{G^2d}}\sum^h_{i=1}\mathbb{E}\left\langle [\nabla \Psi(\btheta_t)]_i,[\bg]_i -[\bgo]_i\right\rangle
\\\notag
\mathop{\le}\limits^{(a)}&-\eta_t c_l\sqrt{\frac{h(1-\beta_2)}{G^2d}}\frac{1}{2}\mathbb{E}\|\nabla \Psi(\btheta_t)\|^2
+\eta_t c_l\sqrt{\frac{h(1-\beta_2)}{G^2d}}\left(\varepsilon+\frac{(1+\lambda)\sigma^2}{MB}\right)
\\
&+\frac{\eta_t c_l}{4}\sqrt{\frac{h(1-\beta_2)}{G^2d}}\mathbb{E}\| \nabla \Psi(\btheta_t)\|^2
+c_l\eta_t\sqrt{\frac{h(1-\beta_2)}{G^2d}}\mathbb{E}\|\bg-\bgo\|^2
\\\notag
\mathop{\le}\limits^{(b)} &-\frac{\eta_t c_l}{4}\sqrt{\frac{h(1-\beta_2)}{G^2d}}\frac{1}{2}\mathbb{E}\|\nabla \Psi(\btheta_t)\|^2+\eta_t c_l\sqrt{\frac{h(1-\beta_2)}{G^2d}}\left(\varepsilon+\frac{(1+\lambda)\sigma^2}{MB}\right)
\\
&+\eta_t c_l\sqrt{\frac{h(1-\beta_2)}{G^2d}}\delta^2\label{eq.ubd}
\end{align}
where we use the in $(a)$ we use Young's inequality (with parameter $2$),  and in $(b)$ we have
\begin{equation}
    \mathbb{E}\|\bg-\bgo\|^2=\mathbb{E}\left\|\frac{1}{M}\sum^M_{i=1}Q(\bgo^{(i)})-\bgo^{(i)}\right\|^2\mathop{\le}\limits^{A4}\delta^2.\label{eq.bdq}
\end{equation}

Second, we give the upper of $\boldsymbol{\mathcal{V}}$:
\begin{align}\label{eq.vbd}
\boldsymbol{\mathcal{V}}\le&\eta_t c_u\sum^h_{i=1}\sum^{d_i}_{j=1}[\nabla \Psi(\btheta_t)]_{ij}\underbrace{\mathbbm{P}\left(\textrm{sign}([\nabla \Psi(\btheta_t)]_{ij})\neq\textrm{sign}([\bg]_{ij})\right)}_{:=\boldsymbol{\mathcal{W}}}
\end{align}
where the upper bound of $\boldsymbol{\mathcal{W}}$ can be quantified by using Markov’s inequality followed by Jensen’s inequality as the following:
\begin{align}
\notag
\boldsymbol{\mathcal{W}}=&\mathbbm{P}\left(\textrm{sign}([\nabla \Psi(\btheta_t)]_{ij})\neq\textrm{sign}([\bg]_{ij})\right)
\\
\le&\mathbbm{P}[|[\nabla \Psi(\btheta_t)]_{ij}-[\bg]_{ij}|>[\nabla \Psi(\btheta_t)]_{ij}]
\\
\le&\frac{\mathbb{E}[[\nabla \Psi(\btheta_t)]_{ij}-[\bg]_{ij}]}{|[\nabla \Psi(\btheta_t)]_{ij}|}
\\
\le&\frac{\sqrt{\mathbb{E}[([\nabla \Psi(\btheta_t)]_{ij}-[\bg]_{ij})^2]}}{|[\nabla \Psi(\btheta_t)]_{ij}|}
\\
\mathop{\le}\limits^{\eqref{eq.unb}}&\frac{\sqrt{\mathbb{E}[([\bar{\mathbf{g}}_t]_{ij}-[\bgo^*]_{ij} + [\bgo^*]_{ij}- [\bgo]_{ij}+[\bgo]_{ij}-[\bg]_{ij})^2]}}{|[\nabla \Psi(\btheta_t)]_{ij}|}
\\
\mathop{\le}\limits^{(a)}&\sqrt{3}\frac{\sqrt{\frac{(1+\lambda)\sigma^2_{ij}}{M|\mathcal{B}|}+\epsilon_{ij}+\delta^2_{ij}}}{|[\nabla \Psi(\btheta_t)]_{ij}|} \label{eq.bdw}
\end{align}
where $(a)$ is true due to the following relations:
\emph{i}) from \eqref{eq.bdggp}, we have
\begin{equation}\label{eq.key}
 \mathbb{E}[([\bar{\mathbf{g}}_t]_{ij}-[\bgo^*]_{ij})^2] \le\frac{(1+\lambda)\sigma^2_{ij}}{MB};
\end{equation}
\emph{ii}) from \eqref{eq.ue}, we can get
\begin{equation}
  \mathbb{E}[([\bgo]_{ij}-[\bgo^*]_{ij})^2]\le \varepsilon_{ij};
\end{equation}
and \emph{iii}) from \eqref{eq.bdq}, we know
\begin{equation}
    \mathbb{E}[([\bg]_{ij}-[\bgo]_{ij})^2]\le\delta^2_{ij}.
\end{equation}

Therefore, combining \eqref{eq.Phibd} with the upper bound of $\boldsymbol{\mathcal{U}}$ shown in \eqref{eq.ubd} and $\boldsymbol{\mathcal{V}}$ shown in \eqref{eq.vbd}\eqref{eq.bdw}, we have
\begin{align}\notag
\mathbb{E}[\Psi(\btheta_{t+1})]\le& \mathbb{E}[\Psi(\btheta_t)]-\eta_t c_l\sqrt{\frac{h(1-\beta_2)}{G^2d}}\frac{1}{4}\mathbb{E}\|\nabla \Psi(\btheta_t)\|^2 +\eta_t c_l\sqrt{\frac{h(1-\beta_2)}{G^2d}}\left(\varepsilon+\frac{(1+\lambda)\sigma^2}{MB}\right)
\\
&+\eta_t c_l\sqrt{\frac{h(1-\beta_2)}{G^2d}}\delta^2+\eta_t c_u\sqrt{3}\sum^h_{i=1}\sum^{d_i}_{j=1}\sqrt{\frac{(1+\lambda)\sigma^2_{ij}}{MB}+\varepsilon_{ij}+\delta^2_{ij}}+\frac{\eta^2_t c^2_u \sum^h_{i=1}L_i}{2}.
\end{align}

Note that the error vector $\boldsymbol{\chi}$ is defined as the following
\begin{equation}
    \boldsymbol{\chi}=\left[\begin{matrix}\sqrt{\frac{(1+\lambda)\sigma^2_{11}}{M|\mathcal{B}|}+\varepsilon_{11}+\delta^2_{11}} \\
    \vdots \\
    \sqrt{\frac{(1+\lambda)\sigma^2_{ij}}{M|\mathcal{B}|}+\varepsilon_{ij}+\delta^2_{ij}}
    \\
    \vdots \\
    \sqrt{\frac{(1+\lambda)\sigma^2_{hd_h}}{M|\mathcal{B}|}+\varepsilon_{hd_h}+\delta^2_{hd_h}}
    \end{matrix}\right]\in\mathbb{R}^{d},
\end{equation}
and we have
\begin{equation}
    L=\left[\begin{matrix}L_1\\ \vdots \\ L_h\end{matrix}\right]\in\mathbb{R}^h.
\end{equation}

Recall
\begin{equation}
\kappa=\frac{c_u}{c_l}.
\end{equation}
Rearranging the terms, we can arrive at
\begin{align}\notag
\underbrace{\sqrt{\frac{h(1-\beta_2)}{G^2d}}\frac{1}{4}}_{:=C}\left(\|\nabla \Psi(\btheta_t)\|^2\right)\le & \frac{\mathbb{E}[\Psi(\btheta_t)]-\mathbb{E}[\Psi(\btheta_{t+1})]}{\eta_t c_l}+ 4C\delta^2+ 2C\left(\varepsilon+\frac{(1+\lambda)\sigma^2}{MB}\right)
\\
&+\sqrt{3}\kappa\|\boldsymbol{\chi}\|_1+\frac{\eta_t c_u\kappa \|L\|_1}{2}.
\end{align}

Applying the telescoping  sum over $t=1,\ldots,T$, we have
\begin{align}\notag
\frac{1}{T}\sum^{\top}_{t=1}\mathbb{E}\|\nabla_{\btheta} \Psi(\btheta_t)\|^2\le&\frac{\mathbb{E}[\Psi(\btheta_1)]-\mathbb{E}[\Psi(\btheta_{T+1})]}{\eta_t c_l CT}+2\left(\varepsilon+\frac{(1+\lambda)\sigma^2}{MB}\right)+4\delta^2
\\
&+\frac{\kappa\sqrt{3}}{C}\|\boldsymbol{\chi}\|_1+\frac{\eta_t c_u\kappa \|L\|_1}{2C}.
\end{align}
\end{proof}

\newpage
\clearpage

\section{Additional Experiments}\label{app: sect}

\subsection{Training details}
\label{app: train_setting}

ImageNet AT and Fast AT experiments are conducted at a single computing node with dual 22-core CPU, 512GB RAM and 6 Nvidia V100 GPUs. The training epoch is $30$ by calling for the momentum SGD optimizer. The weight decay and momentum parameters are set to $0.0001$ and $0.9$. The initial learning rate is set to $0.1$ (tuned over $\{0.01, 0.05, 0.1, 0.2\}$), which is
decayed by $\times 1/10$ at the training epoch $20, 25, 28$, respectively.

ImageNet DAT experiments are conducted at 
$\{ 1,3,6\}$
computing nodes with dual 22-core CPU, 512GB RAM and 6 Nvidia V100 GPUs. The training epoch is $30$ by calling for the  LAMB optimizer. The weight decay is set to $0.0001$. $\beta_1$ and $\beta_2$ are set to $0.9$ and $0.999$. The initial learning rate $\eta_1$  is tuned over \{$0.01$, $0.05$, $0.1$, $0.2$, $0.4$\}, which is decayed by $\times 1/10$ at the training epoch $20, 25, 28$, respectively. To execute algorithms with the initial learning rate $\eta_1$ greater than $0.2$, we choose the   model weights after $5$-epoch warm-up as its initialization for DAT, where each warm-up epoch $k$ uses the linearly increased learning rate $(k/5)\eta_1$. 

\subsection{Additional results}
\label{app: add_results}

\paragraph{Discussion on cyclic learning rate.}
 
It was shown in \citep{Wong2020Fast} that the use of a cyclic learning rate (CLR) trick can further accelerate  the Fast  AT algorithm in the small-batch   setting \citep{Wong2020Fast}. In Figure\,\ref{fig: cyclic_lr_batch_size}, we present the performance of Fast AT with CLR versus batch sizes.
We observe that when CLR meets the large-batch setting, it becomes significantly worse than its performance in the small-batch setting. The   reason is that  CLR requires a certain number of iterations
to proceed with the cyclic schedule. However, the use of large data batch  only results in  a small amount of iterations by fixing   the number of epochs. 

 \begin{figure}[htb]
    \vspace*{-0.0in}
\centerline{
\begin{tabular}{c}
\includegraphics[width=.5\textwidth,height=!]{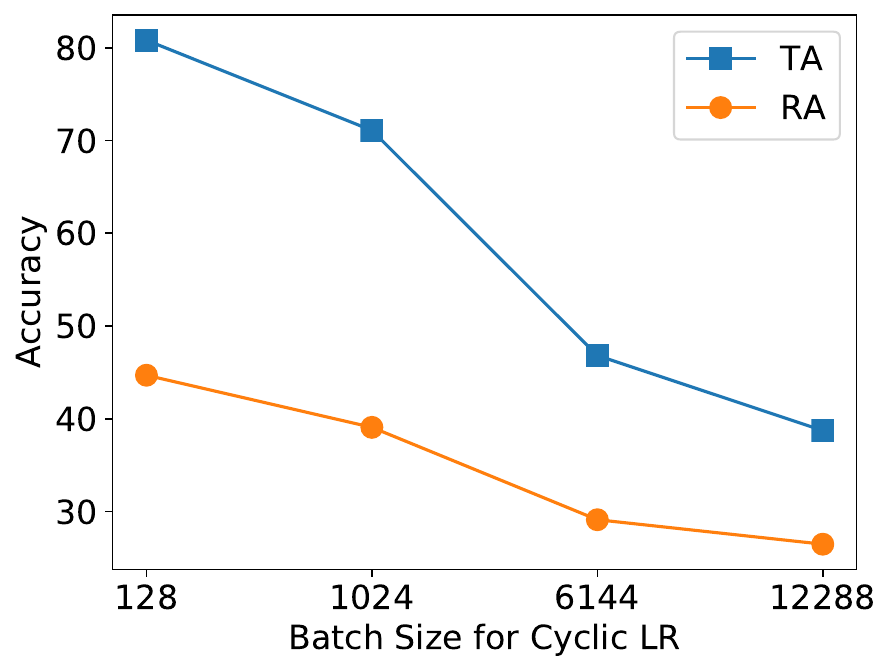} 
\end{tabular}}
\caption{\footnotesize{
TA/RA of Fast AT with CLR versus batch sizes on (CIFAR-10, ResNet-18).
}}
  \label{fig: cyclic_lr_batch_size}
  \vspace*{-0.00in}
\end{figure}

\mycomment{
\paragraph{{Empirical model convergence.}}
In Figure\,\ref{fig: loss_supp}, we present the training accuracy and the loss value of  DAT-PGD. As we can see, our proposal converges well within $100$ and $30$ epochs in the setting of 
(CIFAR-10, ResNet-18) and (ImageNet, ResNet-50), respectively

\begin{figure}[htb]
    \vspace*{-0.0in}
\centerline{
\begin{tabular}{cc}
\includegraphics[width=.45\textwidth,height=!]{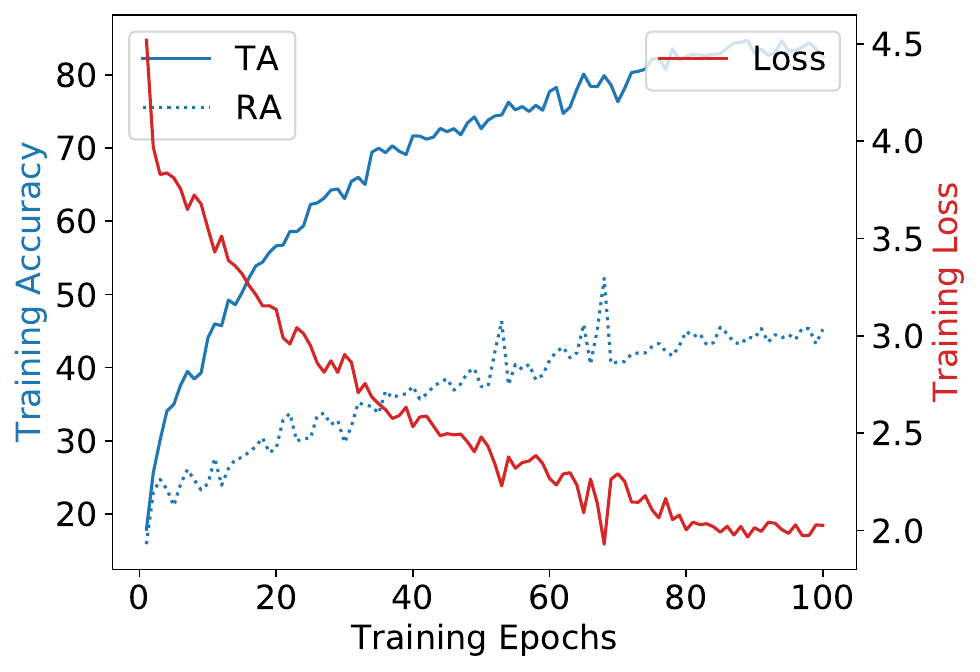}  &
\includegraphics[width=.45\textwidth,height=!]{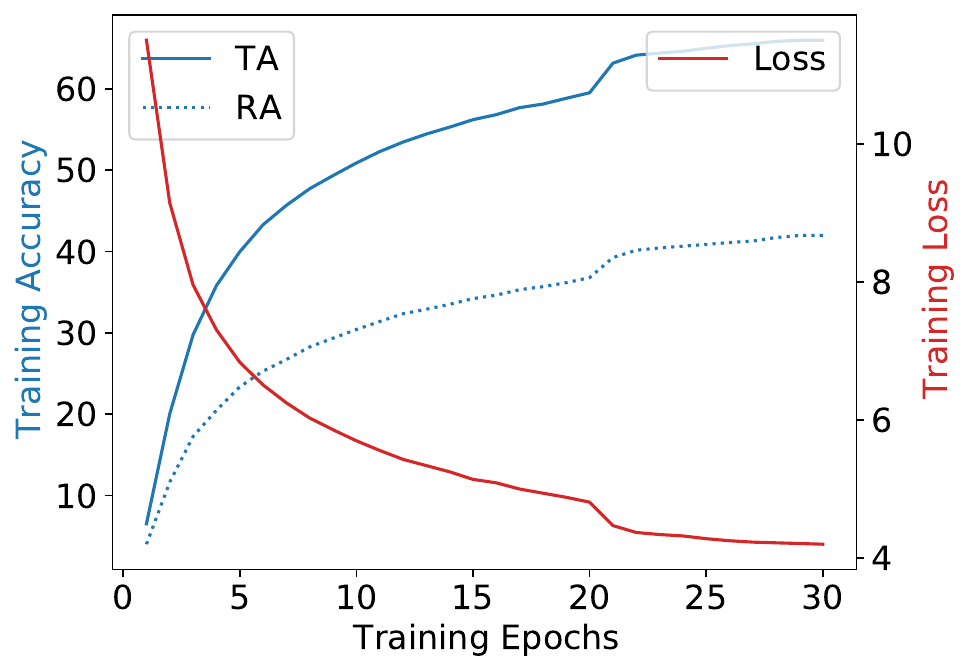}
\\
\footnotesize{(a)  CIFAR-10, ResNet-18} &   \footnotesize{(b) ImageNet, ResNet-50}
\end{tabular}}
\caption{\footnotesize{Training accuracy and objective value (loss) of   DAT-PGD against training epochs.
(a) DAT-PGD for (CIFAR-10, ResNet-18) using $6 \times 1$ computing configuration and $6 \times 2048$ batch size. (b) DAT-PGD for (ImageNet, ResNet-50) using $6 \times 6$ computing configuration and $6 \times 512$ batch size. 
}}
  \label{fig: loss_supp}
  \vspace*{-0.00in}
\end{figure}

\paragraph{{Tuning LALR hyperparameter $c_u$.}}
{We also evaluate the sensitivity of the performance of DAT to the choice of the  hyperparameter $c_u$ in LALR.  In Table\,\ref{table: LALR_hyper_cu}, we fix $c_l = 0$ (this is a natural choice) but varies $c_u \in \{ 8, 9, 10, 11, 12 \}$ when DAT-FGSM is executed under CIFAR-10 using $18 x 2048$ batch size, where $c_u = 10$ is our default choice.  As we can see, both RA and TA are not quite sensitive to $c_u$ and the default choice yields the RA-best model (in spite of minor  improvement). 
}

\begin{table}[htb]
\begin{center}
\caption{\footnotesize{
TA/RA of DAT-FGSM under   (CIFAR-10, ResNet-18) using $18 \times 2048$ batch size versus different choices of   $c_u$.
}
} 
\label{table: LALR_hyper_cu}
\begin{threeparttable}
\resizebox{0.3\textwidth}{!}{
\begin{tabular}{c|c|c}
\hline
\hline
Value of $ c_u $  &  TA (\%) & RA  (\%)
 \\ \hline
$ c_u = 8 $  
&  73.57    & 38.19   \\
$ c_u = 9 $  
& 73.72 & 38.00  \\
$ c_u = 10 $ 
& 73.42 &  38.55  \\
$ c_u = 11 $  
& 73.75 & 38.18 
  \\
$ c_u = 12 $  
& 73,63  &  37.87
 \\
\hline
\hline 
\end{tabular}}
\end{threeparttable}
\end{center}
\vspace{-3mm}
\end{table}
}

\paragraph{Additional details on HPC setups.}
To further reduce communication cost, we also conduct DAT at a HPC cluster. The computing nodes of the cluster are connected with InfiniBand (IB) and PCIe Gen4 switch. To compare with results in Table\,\ref{table: overall}, we use 6 of 57 nodes of the cluster. Each node has 6 Nvidia V100s which are interconnected with NVLink. We use Nvida NCCL as communication backend. In Table\,\ref{table: quadtization_imagenet}, we have presented the performance of DAT for ImageNet, ResNet-50 with use of HPC compared to standard (non-HPC) distributed system.  

\iffalse
\begin{table}[htb]
\begin{center}
\caption{\small{DAT  in semi-supervised learning  under ResNet-18  with batch size $128$. 
}
} 
\label{table: unlabel_app}
\begin{threeparttable}
\resizebox{0.55\textwidth}{!}{
\begin{tabular}{c|c|c|c|c|c}
\hline
\hline
\multirow{2}{*}{\begin{tabular}[c]{@{}c@{}}Method\end{tabular}} & \multicolumn{5}{c}{\textbf{CIFAR-10 + 500K unlabeled Tiny Images,   ResNet-18}} \\ 
\cline{2-6}  
& TA (\%) & RA (\%) & AA (\%) & \begin{tabular}[c]{@{}c@{}}Comm. %\\per epoch (s)
\end{tabular} & \begin{tabular}[c]{@{}c@{}}Tr. time (s)
\end{tabular}
 \\ \hline
DAT-PGD  
& 90.21 
& 55.89 
& 45.23 
& 0  & 1266\\
DAT-FGSM 
& 90.73 
& 52.39 
& 43.19
& 0  & 553\\
\hline
\hline
\end{tabular}
}
\end{threeparttable}
\end{center}
\end{table}
\fi 

\newpage
\clearpage

\mycomment{
\subsection{Robust accuracy (RA) versus number of computing nodes}
\label{app:RA_nodes}
\SL{Figure\,\ref{fig: acc_nodes} presents xxx} 

\begin{figure}[htb]
    \vspace*{-0.0in}
\centerline{
\begin{tabular}{cc}
\includegraphics[width=.5\textwidth,height=!]{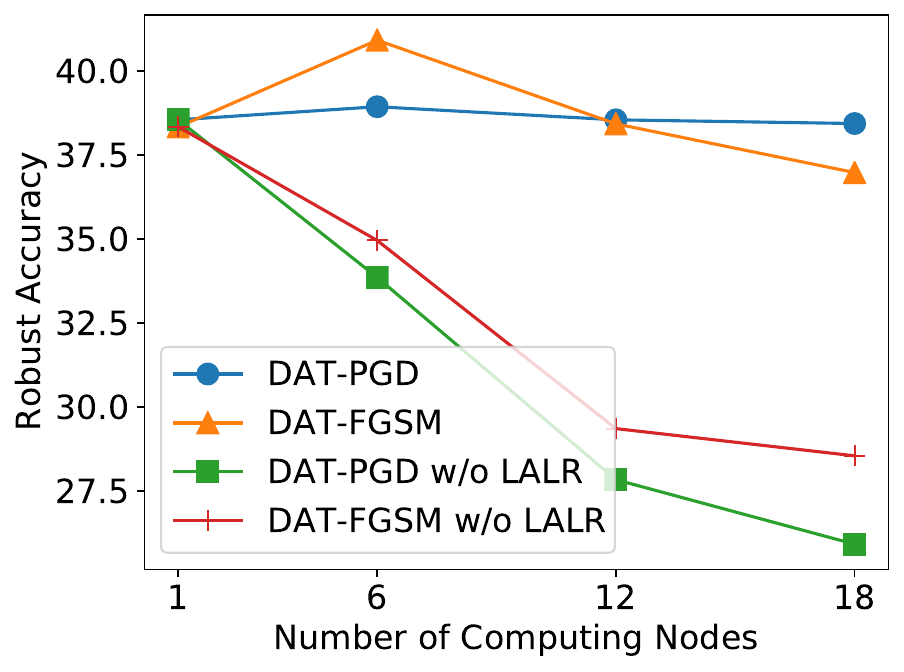} 
\end{tabular}}
\caption{\footnotesize{RA for training CIFAR-10 on ResNet-18 with different numbers of computing nodes. The batch size of each node is 2048 so the total batch size will be $(\text{\# of nodes}) \times 2048$. \SL{[updated?]}
}}
  \label{fig: acc_nodes}
  \vspace*{-0.00in}
\end{figure}

\newpage
\clearpage
}

\end{document}